  \newlength{\defbaselineskip}
\newtheorem{theorem}{Theorem}
\newtheorem{lemma}{Lemma}[section]
\newtheorem{corollary}[lemma]{Corollary}
\newtheorem{observation}[lemma]{Observation}
\newtheorem{proposition}[theorem]{Proposition}
\newtheorem{definition}{Definition}[section]
\newtheorem{remark}[lemma]{Remark}
\newenvironment{customthm}[1]
{\innercustomthm}
{\endinnercustomthm}
\newcommand{\B}{\mathcal{B}}
\newcommand{\BS}{\B^*}
\newcommand{\BBS}{\B\B^*}
\newcommand{\BSB}{\B^*\B}
\newcommand{\sparsewidth}{4}
\newcommand{\hstep}{horizontal step matrix}
\newcommand{\vstep}{vertical step matrix}
\newcommand{\orth}[1]{\widetilde{#1}}
\newcommand{\OB}{\orth{\mathcal{B}}}
\newcommand{\OBB}{\mathcal{OBB}}
\newcommand{\invsqtwo}{\frac{1}{\sqrt{2}}}
\newcommand{\orthsparsewidth}{4}
\newcommand{\cmark}{\ding{51}}%
\newcommand{\xmark}{\ding{55}}%
\newcommand{\dimbmatrix}[2]{\underbrace{\begin{bmatrix}#1\end{bmatrix}}_{\displaystyle #2}}
\newcommand{\dimmatrix}[2]{\underbrace{\begin{matrix}#1\end{matrix}}_{\displaystyle #2}}
\newcommand\VCdim{{\operatorname{VCdim}}}
\newcommand*\samethanks[1][\value{footnote}]{\footnotemark[#1]}
\title{Kaleidoscope: An Efficient, Learnable Representation For All Structured Linear Maps}
  \author[1]{Tri Dao}
  \author[2]{Nimit S.\ Sohoni\thanks{These authors contributed equally}}
  \author[1]{Albert Gu\samethanks}
  \author[3]{Matthew Eichhorn}
  \author[4]{Amit Blonder}
  \author[1]{Megan Leszczynski}
  \author[4]{Atri Rudra}
  \author[1]{Christopher R{\'e}}
  \affil[1]{Department of Computer Science, Stanford University}
  \affil[2]{Institute for Computational and Mathematical Engineering, Stanford University}
  \affil[3]{Center for Applied Mathematics, Cornell University}
  \affil[4]{Department of Computer Science and Engineering, University at Buffalo, SUNY}
  \affil[ ]{\texttt{\{trid,nims,albertgu\}@stanford.edu}, \texttt{mae226@cornell.edu},
    \texttt{amitblon@buffalo.edu}, \texttt{mleszczy@stanford.edu},
    \texttt{atri@buffalo.edu}, \texttt{chrismre@cs.stanford.edu}}
  \date{\today}
\author{%
  Tri Dao $^1$, Nimit Sharad Sohoni\thanks{These authors contributed
      equally.}\, $^2$, Albert Gu\samethanks\, $^1$, Matthew Eichhorn $^3$, Amit
    Blonder $^4$, \\ \textbf{Megan Leszczynski $^1$, Atri Rudra $^4$,
    Christopher R\'{e} $^1$} \\
  $^1$ Department of Computer Science, Stanford University \\
  $^2$ Institute for Computational and Mathematical Engineering, Stanford University \\
  $^3$ Center for Applied Mathematics, Cornell University \\
  $^4$ Department of Computer Science and Engineering, University at Buffalo, The State University of New York\\
  \texttt{\{trid,nims,albertgu\}@stanford.edu}, \texttt{mae226@cornell.edu},
  \texttt{amitblon@buffalo.edu}, \\
  \texttt{mleszczy@stanford.edu}, \texttt{atri@buffalo.edu}, \texttt{chrismre@cs.stanford.edu} \\
}
\begin{document}

\maketitle

\begin{abstract}
Modern neural network architectures use structured linear transformations,
such as low-rank matrices, sparse matrices, permutations, and the Fourier transform, to
improve inference speed and reduce memory usage compared to general linear
maps. However, choosing which of the myriad structured transformations to use
(and its associated parameterization) is a laborious task that requires trading off
speed, space, and accuracy.
We consider a different approach: we
introduce a family of matrices called \emph{kaleidoscope matrices} (K-matrices) that
provably capture \emph{any} structured matrix with near-optimal
space (parameter) and time (arithmetic operation) complexity.
We empirically validate that K-matrices can be automatically learned within end-to-end pipelines to replace
hand-crafted procedures, in order to improve model quality.
For example, replacing channel shuffles in ShuffleNet improves classification
accuracy on ImageNet by up to 5\%.
K-matrices can also simplify hand-engineered pipelines---we replace filter bank feature computation
in speech data preprocessing with a learnable kaleidoscope layer, resulting in only 0.4\% loss in accuracy on the
TIMIT speech recognition task.
In addition, K-matrices can capture latent structure in models: for a challenging permuted image classification task,
a K-matrix based representation of permutations is able to learn the right latent structure and
improves accuracy of a downstream convolutional model by over 9\%.
We provide a practically efficient implementation of our approach,
and use K-matrices in a Transformer network to
attain 36\% faster end-to-end inference speed on a language translation task.
\end{abstract}

\section{Introduction}
Structured linear maps are fundamental and ubiquitous in modern machine
learning.
Their efficiency in speed (fast algorithms) and space (few parameters) can
reduce computation and memory usage.
The class of structured linear maps includes fixed specialized transforms such as the discrete Fourier
transform (DFT) and Hadamard transform used in signal processing~\citep{cooley1969fast},
convolutions for image, language, and speech modeling~\citep{gu2018recent}, and low-rank and sparse
matrices for efficient storage and inference on edge devices~\citep{yu2017compressing}.
Forms of structure such as sparsity have been at the forefront of recent advances in ML~\citep{frankle2019lottery}, and
are critical for on-device and energy-efficient models, two application areas
of tremendous recent interest~\citep{tsidulko2019google, schwartz2019green}.

There are a plethora of classes of structured linear maps,
each with a significantly different representation, algorithm, and implementation.
They have different tradeoffs in terms of inference speed, training speed, and
accuracy, and the conventional wisdom is that no one class works uniformly well
across all applications.
As a result, ML practitioners currently \textit{hand-pick} specific classes of
structured linear maps for each of their applications.
This is a difficult and labor-intensive task.

Ideally, these problems should be addressed with a \textit{universal}
representation for structured linear maps:
(i) Such a parameterization should be \textit{expressive} enough to capture
important classes of structure, with a nearly tight parameter count and runtime:
the space required to represent the linear map should be close to optimal, and the
resulting algorithm for matrix vector multiplication should be close to the
fastest possible algorithm.
(ii) The parameterization should be differentiable in order to be
\textit{learned} as a component of end-to-end ML pipelines, enabling it to easily
be used as a drop-in replacement for manually engineered structured components.
(iii) The parameterization should admit practically \textit{efficient} algorithms
for training and inference, in terms of both speed and memory.

Currently, no class of structured linear maps satisfies all of these criteria.
Most existing classes of structured matrices---such as the class of low-rank matrices---fail to tightly capture other important types of structure.
For example, the DFT has an efficient structured representation of size $O(n \log n)$, yet cannot be well-approximated by a low-rank transform of size $\ll n^2$.
Another important type of structure is \textit{sparsity}; lots of exciting
recent work has focused on the design of sparse neural networks.
For instance, sparse networks of comparable quality to their dense counterparts---yet an order of magnitude fewer
parameters---may be created via pruning~\citep{han2016deep} or by identifying ``winning lottery tickets''~\citep{frankle2019lottery}.
In parallel, recent theoretical results by \cite{desa2018two} show that
sparsity and the notion of structure in linear maps are fundamentally linked:
any given matrix can be factored into a product of sparse matrices
with total parameter count equal to the efficiency (i.e.\ minimum arithmetic circuit complexity) of the matrix.
In other words, the representation of linear maps as products of sparse matrices tightly
captures \textit{all} forms of structure.
Unfortunately, it is difficult to actually \textit{learn} these sparse factorizations,
because it requires finding the sparsity patterns of the factors---a
discrete, nondifferentiable search problem.
Thus, current methods for training sparse neural networks are either expensive~\citep{frankle2019lottery}
or rely on highly hand-tuned heuristics for evolving the sparsity patterns throughout training \citep{dettmers2019sparse}.

By contrast, we propose a representation of linear maps as products of
sparse matrices with specific \textit{predefined} sparsity patterns (Section~\ref{sec:theory}),
and show that it \textit{does} satisfy our desiderata: it retains the expressiveness of unstructured sparsity,
while being differentiably learnable and efficient like other structured representations.
Concretely, our representation is based on products of a particular building block known
as a \textit{butterfly} matrix~\citep{parker1995random, dao2019learning}; we term such products
\textit{kaleidoscope matrices} (K-matrices for short).\footnote{A group of butterflies is known as a kaleidoscope.}
(i) Our main theoretical contribution (Section~\ref{sec:theory-result}) concerns the \textit{expressiveness} of this
representation: we show that any structured linear map (i.e.\ one that can be
applied using $s \ll n^2$ arithmetic operations) can be represented
as a K-matrix, with a nearly tight number of parameters and algorithmic complexity
(both on the order of $s$ up to logarithmic factors).
(ii) The kaleidoscope representation is fully differentiable; thus, all the parameters of a K-matrix can be \textit{learned}
using standard optimization algorithms such as SGD.
(iii) Because of their simple, regular structure, K-matrices are practical and easy to use.
We provide memory- and runtime-\textit{efficient} implementations of K-matrix multiplication on CPU and GPU for training and inference,
with a simple PyTorch interface.

We empirically validate that, due to their expressiveness, learnability, and efficiency,
we can use K-matrices as a drop-in replacement for linear components in deep learning models.
In Section~\ref{sec:latent-structure}, we use K-matrices to replace hand-crafted structure in two different settings.
We simplify the six steps of filter bank computation in speech preprocessing into a single learnable
K-matrix step, with only an 0.4\% accuracy drop on the TIMIT speech recognition task.
We use K-matrices to replace channel shuffles in ShuffleNet, improving
ImageNet classification accuracy by up to 5\%.
In Section~\ref{sec:learning-permutation}, we show that K-matrices can successfully recover latent structure;
a K-matrix is used to learn latent permutations in a permuted image dataset (Permuted CIFAR),
resulting in 9 points higher accuracy in a downstream CNN model. %
In Section~\ref{sec:inference}, we show that our efficient K-matrix multiplication implementation
can be applied to speed up real-world tasks: we replace linear layers with K-matrices
in a DynamicConv-Transformer network to attain 36\% faster end-to-end inference speed
with a 1.0 drop in BLEU score on the IWSLT14 German$\rightarrow$English translation task.

\section{A nearly-tight parameterization of all structured matrices}
\label{sec:theory}

We first present some background on the characterization of all structured matrices
(i.e.\ those with subquadratic multiplication algorithms) as products of sparse
factors, along with the definition of butterfly matrices.
We then propose a differentiable family of kaleidoscope matrices, composed of
products of butterfly matrices, and prove their expressivity: all structured
matrices can be represented in this form, with almost optimal
parameter count and runtime. \nimit{should we reference appendix recovery stuff here?}

\subsection{Background: sparse factorization, butterfly matrices}
\label{sec:background}

\paragraph{Sparse factorization}
One method of constructing matrices with theoretically fast matrix-vector
multiplication algorithms is as a product of sparse matrices,
so that multiplication by an arbitrary vector has cost proportional to the
total number of nonzeros (NNZ) of the matrices in the product.
Surprisingly, the converse is also true.
\cite{desa2018two} introduce the concept of \textit{sparse product width} (SPW),
which roughly corresponds to the total NNZ in
a factorization of a matrix, and show
that it is an asymptotically optimal descriptor of the algorithmic complexity of
matrix-vector multiplication~\citep{burgisser2013algebraic}.
We use a similar argument in the proof of our main theorem
(Section~\ref{sec:theory-result}).
However, attempting to \textit{learn} such a factorization of a given matrix is
difficult,
as the sparsity constraint is not continuous.
Moreover, because of the possibly irregular sparsity patterns,
it is difficult to realize the theoretical speedups in
practice~\citep{gray2017gpu, gahvari2007benchmarking}.

\paragraph{Butterfly matrices}
Butterfly matrices, encoding the recursive divide-and-conquer structure of
the fast Fourier transform (FFT) algorithm, have long been used in numerical linear algebra~\citep{parker1995random, li2015butterfly} and machine learning~\citep{mathieu2014fast, jing2017tunable, munkhoeva2018quadrature, dao2019learning, choromanski2019unifying}.
Here we define butterfly matrices, which we use as a building block for our hierarchy of kaleidoscope matrices.

\begin{definition} \label{def:bfactor}
    A \textbf{butterfly factor} of size $k\ge 2$ (denoted as $\vB_k$) is a matrix of the form
    \(
        \vB_k = \begin{bmatrix}
            \vD_1 & \vD_2 \\ \vD_3 & \vD_4
        \end{bmatrix}
    \)
    where each $\vD_i$ is a $\frac{k}{2} \times \frac{k}{2}$ diagonal matrix. We restrict $k$ to be a power of 2.
\end{definition}

\begin{definition}
     A \textbf{butterfly factor matrix} of size $n$ with block size $k$ (denoted as $\vB_k^{(n)}$) is a block diagonal matrix of $\frac{n}{k}$ (possibly different) butterfly factors of size $k$:
     \[
        \vB_k^{(n)} = \mathrm{diag} \left( \left[ \vB_k \right]_1, \left[ \vB_k \right]_2, \hdots, \left[ \vB_k \right]_\frac{n}{k} \right)
     \]
\end{definition}

\begin{definition} \label{def:bmatrix}
    A \textbf{butterfly matrix} of size $n$ (denoted as $\vB^{(n)}$) is a matrix that can be expressed as a product of butterfly factor matrices:
    \(
        \vB^{(n)} = \vB_n^{(n)} \vB_{\frac{n}{2}}^{(n)} \hdots \vB_2^{(n)}.
    \)
    Equivalently, we may define $\vB^{(n)}$ recursively as a matrix that can be expressed in the following form:
    \[
         \vB^{(n)} = \vB_n^{(n)} \begin{bmatrix}
            [\vB^{(\frac{n}{2})}]_1 & 0 \\
            0 & [\vB^{(\frac{n}{2})}]_2
         \end{bmatrix}
    \]
    (Note that $[\vB^{(\frac{n}{2})}]_1$ and $[\vB^{(\frac{n}{2})}]_2$ may be different.)
\end{definition}

\subsection{The kaleidoscope hierarchy}
\label{sec:theory-main}

Using the building block of butterfly matrices, we formally define the kaleidoscope
($\BBS$) hierarchy and prove its expressiveness.
This class of matrices serves as a fully differentiable alternative to products of sparse matrices
(Section~\ref{sec:background}), with similar expressivity.
In Appendix~\ref{sec:bbs-vs-bp}, we show where various common structured matrix
classes are located within this hierarchy.

The building block for this hierarchy is the product of a butterfly matrix and
the (conjugate) transpose of another butterfly matrix (which is simply a product
of butterfly factors taken in the opposite order).
Figure~\ref{fig:bbs_sparsity} visualizes the sparsity patterns of the
butterfly factors in $\BBS$, where the red and blue dots represent the
allowed locations of nonzero entries.

\begin{figure}[th]
  \centering
  \includegraphics[width=1.0\linewidth]{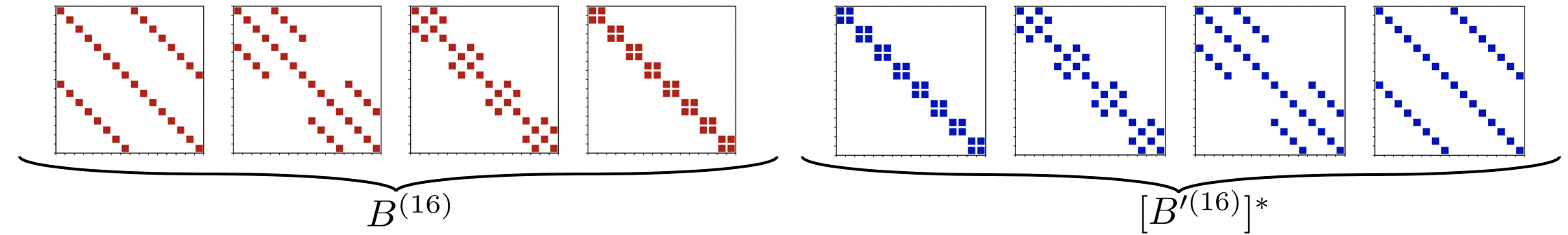}
  \caption{\label{fig:bbs_sparsity} Visualization of the fixed sparsity pattern
    of the building blocks in $\BBS$, in the case $n = 16$. The red and blue
    dots represent all the possible locations of the nonzero entries.}
\end{figure}

\begin{definition}[Kaleidoscope hierarchy, kaleidoscope matrices]\ \label{def:hierarchy}
    \begin{itemize}[topsep=0em, partopsep=0.0em, itemsep=0em, parsep=0.3em, leftmargin=1em]
        \setlength\itemsep{0.0em}
        \item Define $\B$ as the set of all matrices that can be expressed in the form $\vB^{(n)}$ (for some $n$).
        \item Define $\B\B^*$ as the set of matrices $\vM$ of the form $\vM = \vM_1 \vM_2^*$\, for some $\vM_1, \vM_2 \in \B$.
        \item Define $(\BBS)^w$ as the set of matrices $\vM$ that can be expressed as $\vM = \vM_w \hdots \vM_2 \vM_1$, with each $\vM_i \in \BBS$ ($1 \leq i \leq w$). (The notation $w$ represents $\textbf{width}$.)
        \item Define $(\BBS)^w_e$ as the set of $n \times n$ matrices $\vM$ that can be expressed as $\vM  = \vS \vE \vS^T$ for some $en \times en$ matrix $\vE \in (\BBS)^w$, where $\vS \in \mathbb{F}^{n \times en} = \begin{bmatrix} \vI_n & 0 & \hdots & 0 \end{bmatrix}$ (i.e. $\vM$ is the upper-left corner of $\vE$). (The notation $e$ represents \textbf{expansion} relative to $n$.)\nimit{I don't think this parenthetical clarifies much}
        \item $\vM$ is a \textbf{kaleidoscope matrix}, abbreviated as
        \textbf{K-matrix}, if $M \in (\BBS)^w_e$ for some $w$ and $e$.
    \end{itemize}
\end{definition}\label{sec:hierarchy-def}
The \emph{kaleidoscope hierarchy}, or $(\BBS)$ hierarchy, refers to the families of
matrices $(\BBS)^1_e \subseteq (\BBS)^2_e \subseteq \dots$, for a fixed expansion factor $e$.
Each butterfly matrix can represent the identity matrix, so
$(\BBS)^w_e \subseteq (\BBS)^{w+1}_e$.
We show that the inclusion is proper in Appendix~\ref{sec:hierarchy-props}.
\nimit{up to some w?}
This hierarchy generalizes the $\mathcal{BP}$ hierarchy proposed
by~\cite{dao2019learning}, as shown in Appendix~\ref{sec:bbs-vs-bp}.

\paragraph{Efficiency in space and speed} Each matrix in $(\BBS)^w_e$ is a
product of $2w$ total butterfly matrices and transposes of butterfly matrices,
each of which is in turn a product of $\log (ne)$ factors with $2ne$ nonzeros (NNZ) each.
Therefore, each matrix in $(\BBS)^w_e$ has $4w ne \log (ne)$ parameters and a
matrix-vector multiplication algorithm of complexity $O(w ne \log ne)$ (by
multiplying the vector with each sparse factor sequentially).
We prove this more formally in Appendix~\ref{sec:hierarchy-props}.
For the applications in Section~\ref{sec:experiments}, $w$ and $e$ are small
constants (up to 2), so those K-matrices have $O(n \log n)$ parameters and runtime.

\subsection{All low-depth structured matrices are in the kaleidoscope hierarchy}
\label{sec:theory-result}
We now present our main theoretical result: the fact that general linear
transformations, expressed as low-depth linear arithmetic circuits, are captured
in the $\BBS$ hierarchy with low width.
Arithmetic circuits are commonly used to formalize algebraic algorithmic
complexity~\citep{burgisser2013algebraic};
we include a primer on this in Appendix~\ref{sec:ac-primer}.
The quantities of interest are the total number of \textit{gates} in the circuit,
representing the total number of steps required to perform the algorithm for a
serial processor, and the \textit{depth}, representing the minimum number of steps
required for a parallel processor.

\begin{theorem} \label{thm:ac-bbs}
     Let $\vM$ be an $n \times n$ matrix such that multiplication of $\vM$ times an arbitrary vector $\vv$ can be represented as a linear arithmetic circuit with $s$ total gates and depth $d$. Then, $\vM \in (\BBS)^{O(d)}_{O(\frac{s}{n})}$.
\nimit{We should put precise quantities here if possible. Or at least in the appendix version.}
\end{theorem}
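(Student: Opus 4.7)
The plan is to convert the size-$s$, depth-$d$ linear arithmetic circuit that computes $\vv \mapsto \vM\vv$ into a product of $O(d)$ sparse transition matrices acting on an auxiliary state space of dimension $en = O(s)$, and then to show that each such sparse transition lies in $(\BBS)^{O(1)}$.

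First, I would normalize the circuit so that the value on every wire is propagated forward until it is no longer needed, and then slice it into $d$ layers by depth. At layer $i$ I carry an $en$-dimensional state vector $\mathbf{x}_i$ containing the values of all currently live wires, padded with zeros up to the next power of two exceeding the total wire count $\Theta(s)$; this yields $e = O(s/n)$. The input $\vv$ sits in the first $n$ coordinates of $\mathbf{x}_0$, and the output $\vM\vv$ is read off from the first $n$ coordinates of $\mathbf{x}_d$, which is exactly the $\vS, \vS^T$ extraction in Definition~\ref{def:hierarchy}. Each transition $\mathbf{x}_{i+1} = \mathbf{L}_i \mathbf{x}_i$ is an $en \times en$ matrix with at most two nonzeros per row: copy rows for forwarded wires, and binary-linear-combination rows for the gates newly computed at depth $i+1$.

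Second, I would establish the key routing lemma: every $N \times N$ matrix with at most two nonzeros per row lies in $(\BBS)^{O(1)}$, for $N$ a power of two. My approach is to decompose such a sparse matrix into (i) a routing step that brings the operands of each gate into prescribed positions and (ii) a local-combination step that applies a single butterfly factor to perform the binary sums. For (i) I would invoke the classical Beneš-network fact that a butterfly of size $N$ followed by the transpose of a butterfly realizes an arbitrary permutation of $N$ coordinates; by absorbing scalar weights into the diagonal blocks of the butterfly factors, the same construction realizes an arbitrary scaled partial permutation, which is precisely a matrix in $\BBS$. Handling two operands per row rather than one is absorbed by duplicating coordinates, which is what forces $e = O(s/n)$ rather than $e = O(1)$. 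Composing then gives the theorem: each $\mathbf{L}_i \in (\BBS)^{O(1)}$, so the full state evolution $\mathbf{L}_d \cdots \mathbf{L}_1$ lies in $(\BBS)^{O(d)}$ on the $en$-dimensional space, and the $\vS$-conjugation yields $\vM \in (\BBS)^{O(d)}_{O(s/n)}$.

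The main obstacle I expect is the routing lemma of the second step: obtaining a truly $O(1)$ number of $\BBS$ factors per layer, with constants independent of $n$ and $s$. The Beneš construction matches $\BBS$ perfectly on a single permutation, so the delicate points are (a) realizing an \emph{arbitrary} scaled partial permutation rather than a plain permutation, (b) handling rows with two nonzeros without more than a constant blow-up in width, and (c) making sure the padding to an $en$ that is a power of two interacts cleanly with the butterfly block structure. All other steps --- laying the circuit out in time, tracking live wires, and multiplying out the widths --- are routine book-keeping that should not alter the $O(d)$ and $O(s/n)$ bounds.
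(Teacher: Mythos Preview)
Your proposal is correct and follows the same overall architecture as the paper: layer the circuit into $d$ sparse transition matrices on a state space of size $\Theta(s)$, show each transition lies in $(\BBS)^{O(1)}$, and compose. The only real difference is how you establish the sparse-to-$(\BBS)^{O(1)}$ step. The paper does not prove a tailored ``two nonzeros per row'' routing lemma; instead it invokes its general sparse-matrix result (Theorem~\ref{thm:sparse}): any $N\times N$ matrix with at most $cN$ nonzeros lies in $(\BBS)^{O(c)}_{4}$. That theorem is proved by writing an $N$-sparse matrix as $\vP_1\vH\vP_2\vV\vP_3$, where the $\vP_i$ are permutations (handled by the Bene\v{s} fact you cite, Theorem~\ref{thm:perm}) and $\vH,\vV$ are \emph{horizontal} and \emph{vertical step matrices}---monotone one-NNZ-per-column/row matrices that the paper shows lie directly in $\B$ and $\B^*$ by an induction on the butterfly block structure. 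Your routing-then-local-combination sketch is a valid alternative for the specific two-per-row case; the paper's step-matrix machinery is what absorbs your obstacles (a)--(c) in one stroke and yields a statement for arbitrary sparsity, at the cost of being less direct than your construction for the case actually needed here.
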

The representation of such a matrix $\vM$ in the $\BBS$ hierarchy has
$O(d s \log s)$ parameters and yields a $O(d s \log s)$ multiplication algorithm,
compared to the $O(s)$ parameters and runtime of the circuit representation.
To the best of our knowledge, the most general classes of efficient matrices
that have been studied~\citep{desa2018two} have depth $d$ on the order of
$\log n$ or $\mathrm{poly} \log n$.
In these cases, the representation with K-matrices matches the best known bounds
up to polylogarithmic factors.

The crux of the proof of Theorem~\ref{thm:ac-bbs} (shown in
Appendix~\ref{sec:hierarchy-ac}) is the construction of an almost tight representation of any
sparse matrix as a K-matrix (i.e.\ a product of butterfly matrices):
specifically, we show that any $n \times n$
sparse matrix with $s$ nonzeros is in $(\BBS)_{O(1)}^{O(\ceil{\frac{s}{n}})}$
(Theorem~\ref{thm:sparse}, Appendix~\ref{sec:hierarchy-sparse}).
We then leverage the expressivity result of products of sparse matrices to
represent all arithmetic circuits (similar to the sparse product width result
of~\citet{desa2018two} referenced in Section~\ref{sec:background}) to complete the proof of
Theorem~\ref{thm:ac-bbs}.

This intermediate result is also a novel characterization of sparse matrices.
For a matrix with $s$ NNZ, the kaleidoscope representation has $O(s \log n)$ parameters and
runtime, instead of the optimal $O(s)$ parameters and runtime;
so, we trade off an extra logarithmic factor in space and time for full differentiability (thanks to
the fixed sparsity patterns in the representation).
The intuition behind the result is as follows:
a sparse matrix with $s$ NNZ can be written as a sum of $\ceil{s/n}$
matrices each with at most $n$ NNZ.
Any $n \times n$ matrix with at most $n$ NNZ, up to permuting the rows and columns, is a product of
two butterfly matrices (Lemma~\ref{lem:sparse<=n}).
Sorting networks~\citep{knuth1997art} imply that permutation matrices are in
$(\BBS)^{O(\log n)}$, but we tighten the result to show that they are in fact in
$\BBS$ (Theorem~\ref{thm:perm}, Appendix~\ref{sec:hierarchy-perm}).
We thus obtain a kaleidoscope representation for
each summand matrix with $O(n \log n)$ parameters.
By the addition closure property of the $\BBS$ hierarchy
(Lemma~\ref{lem:sum}), each sparse matrix with $s$ NNZ then has
a kaleidoscope representation with $O(s \log n)$ parameters.

\note{Unpack why it's interesting. Include convolution (including real to real version) and low-rank as special cases} \nimit{Some prose unpacking why these theoretical results are interesting and deep is key; more important than fully defining everything in the body IMO}

\paragraph{Tight representation for structured linear maps common in ML}
Even though Theorem~\ref{thm:ac-bbs} suggests that the kaleidoscope
representation can be loose by logarithmic factors, many structured
linear maps common in ML can be represented in this hierarchy with an
\emph{optimal} number of parameters and runtime compared to
the best known parameterizations, up to constant factors.
Appendix~\ref{sec:bbs-vs-bp} includes several examples such as discrete
transforms (the DFT, discrete cosine transform (DCT), discrete sine transform
(DST), and Hadamard transform), convolution (i.e.\ circulant matrices), Toeplitz
matrices~\citep{gray2006toeplitz}, structured matrices for kernel approximation
($(HD)^3$~\citep{yu2016orthogonal}) and compact neural network design
(Fastfood~\citep{le2013fastfood}, ACDC~\citep{moczulski2015acdc}).
There have been other large classes of structured matrices proposed in the machine
learning literature, such as Toeplitz-like~\citep{sindhwani2015structured} and
low displacement rank (LDR)~\citep{thomas2018learning}, but they are
not known to be able to capture these common structures as tightly as K-matrices can.
More detailed discussions are in Appendix~\ref{sec:related-work}.

\note{Remark that many other structures people have used are natural consequences of this (circulant/toeplitz stuff, LDR, low-rank). Remark in particular how some constructions like convolutions/circulants, Fastfood, $(HD)^3$ fit very naturally and tightly in this framework}\nimit{a brief discussion is good, but bulk probably should go into related work}

\subsection{Extensions}
\label{sec:extension}
\paragraph{ReLU networks with low-depth structured weight matrices} In
Appendix~\ref{sec:relu}, we prove that finding an efficient circuit
for a ReLU network can be reduced to finding efficient circuits for each of its
weight matrices, with at most a constant factor greater size and run-time
(i.e.\ number of gates).
We also show that ReLU networks with kaleidoscope weight matrices have
near-linear VC dimension in the number of parameters, matching the bound for
networks with unconstrained weight matrices~\citep{bartlett1999almost,
  bartlett2017nearly} and LDR~\citep{thomas2018learning}.
This yields a corresponding sample complexity bound.

\paragraph{Orthogonal kaleidoscope hierarchy}
\textit{Orthogonal} butterfly matrices are one commonly used variant due to their improved
stability~\citep{parker1995random}, where each butterfly factor is constrained to be orthogonal:
$\begin{bmatrix} \vC & \vS \\ -\vS & \vC
\end{bmatrix}$ with $\vC, \vS$ being diagonal and $\vC^2 + \vS^2 = \vI$.
Similar to the $\BBS$ hierarchy, in Appendix~\ref{sec:orth-hierarchy}, we define
the $\OBB$ hierarchy consisting of products of orthogonal butterfly matrices and
diagonal matrices, and show that this hierarchy has the same expressiveness as
the $\BBS$ hierarchy.

\note{Talk about how it's 2x2 block matrices. Easy to constrain in various ways, including orthogonal (mention ODO hierarchy) for stability, doubly stochastic for permutations, sampling, etc.}\nimit{keep concise as possible, defer details to appendix}

\section{Empirical Evaluation}
\label{sec:experiments}

We validate three claims that suggest that kaleidoscopes are a promising technique
to learn different types of structure in modern architectures.
\begin{enumerate}[topsep=0em, partopsep=0.0em, itemsep=0em, parsep=0.1em, leftmargin=2em]
  \item Section~\ref{sec:latent-structure}: for applications in speech and
  lightweight computer vision relying on highly hand-crafted structured transformations,
  we show that we can recover---and even improve---the quality of such architectures by simply
  replacing existing hand-structured components with K-matrices, with only a
  small overhead in memory and computation.
  \item In Section~\ref{sec:learning-permutation}, for a challenging task with
  latent structure (Permuted CIFAR-10),
  a K-matrix-based relaxation of permutations is able to learn the right latent permutation,
  yielding 9 points better accuracy in a downstream CNN compared to standard RNN and CNN baselines used on such permuted image classification tasks.
  \item In Section~\ref{sec:inference}, we show that, although not yet highly
  optimized, our current implementation of K-matrices can improve the
  inference throughput of DynamicConv Transformer, a state-of-the-art
  fast machine translation model, by 36\%, with only a relatively small drop in translation
  quality.
\end{enumerate}
In all of the above applications, as K-matrices are fully differentiable, we
simply train them jointly with the rest of the model using standard
learning algorithms (such as SGD).
Full details for all of the experiments (precise architectures, hyperparameters,
etc.) are in Appendix~\ref{sec:addl-details}
\footnote{Code that implements Kaleidoscope matrix multiplication is available at \small{\url{https://github.com/HazyResearch/butterfly}}}.

\subsection{Replacing hand-crafted structures}
\label{sec:latent-structure}

We validate that kaleidoscope matrices can recover or improve on the performance
of hand-crafted structure in ML models.
For example, a single learnable kaleidoscope layer can be used to replace
the hand-engineered filter bank speech preprocessing pipeline with only 0.4\%
loss in accuracy on the TIMIT speech recognition task (Section~\ref{sec:speech}).
Replacing channel shuffles in ShuffleNet with learnable K-matrices
improves classification accuracy on ImageNet by up to 5.0\% (Section~\ref{sec:shuffle}).

\subsubsection{Speech preprocessing}
\label{sec:speech}
\begin{figure}[ht]
  \centering
  \includegraphics[width=0.75\linewidth]{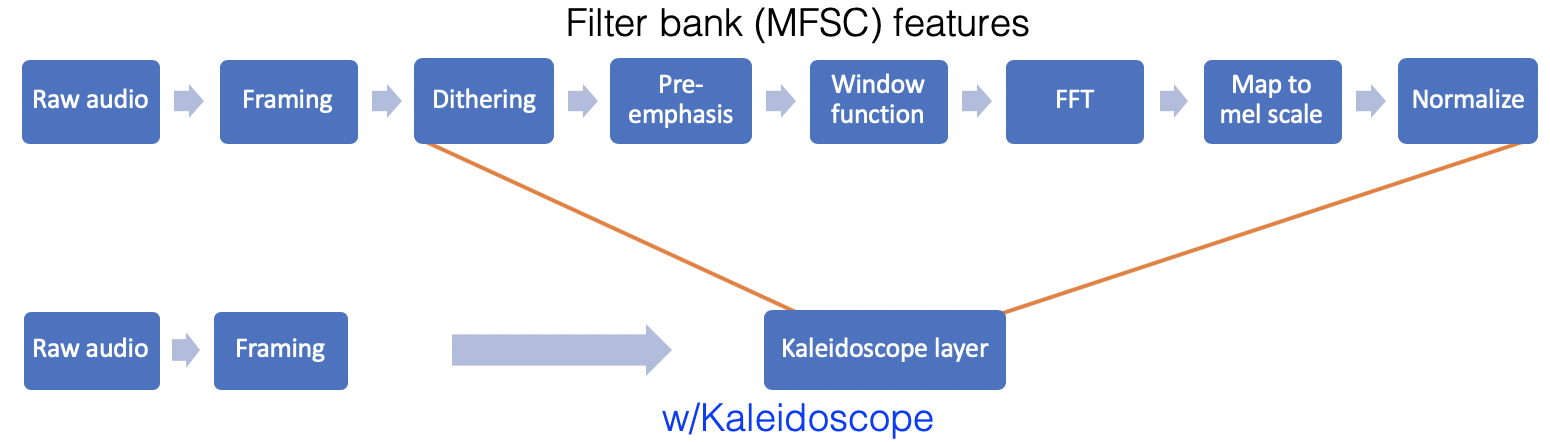}
  \caption{\label{fig:speech} Comparison of the standard MFSC featurization pipeline
  with our ``kaleidoscope'' pipeline.}
\end{figure}

We show that K-matrices can remove the need for hand-tuning by significantly
simplifying speech recognition data preprocessing pipelines.
In particular, we can entirely replace the complex hand-crafted MFSC featurization
commonly used in speech recognition tasks with a fully learnable kaleidoscope layer,
with only 0.4\% drop in accuracy on the TIMIT speech recognition benchmark.
Results are presented in Table~\ref{tab:speech}. Our approach is competitive with
the accuracy of standard models that use hand-crafted features, and significantly
outperforms current approaches for learning from raw audio input.

\begin{table}[ht]
\centering
\caption[Caption]{TIMIT phoneme error rate (PER\%) for different methods.
Our kaleidoscope, raw-input version of the model (row 3) performs competitively with the original model trained on MFSC features (row 1),
with only an 0.4\% drop in PER.
It significantly outperforms existing approaches that learn from raw audio, i.e. without handcrafted featurization 
(e.g. SincNet [row 2], which to our knowledge attains the previous state-of-the-art for learning from raw audio),
and is only 0.8\% less accurate than the overall state-of-the-art on TIMIT.\protect\footnotemark\,
Additional comparisons are given in Appendix~\ref{sec:speech-appendix}.
}
\vspace{-0.25em}
\label{tab:speech}
 \begin{tabular}{p{4.7cm}cc}
     \toprule
    Method & Test set PER\% & Raw audio input  \\ \midrule
    MFSC features + LSTM   & $14.2$ & {\color{red} \xmark}   \\
    SincNet \citep{ravanelli2019pytorchkaldi} & $17.2$ & {\color{green} \cmark} \\
    \textit{Kaleidoscope + LSTM}    & $14.6$ &  {\color{green} \cmark}      \\
    \bottomrule
 \end{tabular}
\end{table}
\footnotetext{The current state-of-the-art results from \cite{ravanelli2018light} use a concatenation of \textit{three} different speech audio featurizations---MFSC, MFCC, and fMLLR---as the neural network input, along with a customized RNN architecture (LiGRU) specifically designed for speech recognition.}

Modern speech recognition models currently rely on carefully hand-crafted features extracted from the audio, which are then fed into an \textit{acoustic model}.
By contrast, learning directly from the raw audio---i.e. end-to-end learning from the audio waveform without any manual featurization---obviates
the need for this complicated and often expensive preprocessing step.
There have been recent attempts to learn directly from raw audio, such as SincNet \citep{ravanelli2018speaker}; however, they %
often rely on specialized architectures designed by domain experts.
Instead, we use a standard RNN speech recognition architecture, but use a learnable kaleidoscope layer to replace the featurization steps.

The baseline architecture takes as input \textit{filter bank} (MFSC) features,
which are a popular standard featurization for speech recognition \citep{paliwal1999filter} and
involve several steps hand-crafted specifically for this domain.
These features are extracted from the raw audio waveform, and fed as the input into a Bi-LSTM model.
We significantly simplify this pipeline by replacing the featurization step with a trainable kaleidoscope layer that is trained
end-to-end together with the Bi-LSTM. The original pipeline and our modified kaleidoscope version are depicted in Figure~\ref{fig:speech}.

The computation of MFSC features involves a series of painstakingly hand-designed steps (further described in Appendix~\ref{sec:speech-appendix}), each involving their own hyperparameters: (i) the waveform is \textit{framed} (split into chunks), (ii) the waveform is \emph{dithered} (noise is added), (iii) pre-emphasis is applied, (iv) the Hamming window is applied, (v) the FFT is applied and the power spectrum is computed, (vi) the result is mapped to the \textit{mel scale} (which involves applying a particular linear transformation and then taking the logarithm of the result), (vii) cepstral mean and variance normalization is applied.
We replace the last six steps (ii-vii) of this featurization process with a learnable kaleidoscope layer;
specifically, after windowing, we multiply the input by a K-matrix, and then compute the logarithm of the power spectrum;
the output is fed into the Bi-LSTM model.

\subsubsection{Replacing CNN channel shuffle}
\label{sec:shuffle}
We evaluate how K-matrices can improve the quality of hand-crafted,
lightweight architectures for computer vision tasks, without the need for hand-tuning.
We select ShuffleNet~\citep{zhang2018shufflenet}, which is a state-of-the-art lightweight CNN architecture
that uses a manually designed ``channel shuffle'' permutation matrix to improve performance.
By replacing this fixed permutation with a learnable K-matrix,
we achieve up to 5\% further improvement in classification accuracy, without hand-tuned components and
with a modest space penalty of up to 10\%. Results are given in Table~\ref{tab:shufflenet}.

\begin{table}[ht]
  \centering
  \caption{Top-1 classification accuracy of ShuffleNet on ImageNet validation
    set (parameter counts in parentheses).
    We compare our approach (col.\ 3) with our reimplementation of `vanilla'
    ShuffleNet (col.\ 1) and a recent approach based on the Hadamard transform (col.\ 2).\protect\footnotemark\,
    We report results for different network width multipliers (\# channels).
    The last column shows the differences in accuracy and parameter count between
    our approach and vanilla ShuffleNet;
    using a learnable K-matrix in place of each fixed permutation (shuffle) or
    Hadamard matrix improves accuracy by up to 5\%.
  }
  \label{tab:shufflenet}
  \begin{tabular}{lccc|c}
    \toprule
    & Shuffle        & Hadamard  & Kaleidoscope (K.) & K. vs. Shuffle  \\ \midrule
    0.25 ShuffleNet g8 & 44.1\% (0.46M) & 43.9\% (0.46M) & \textbf{49.2}\% (0.51M) & \,\,\,$+$5.0\% ($+$0.05M) \\
    0.5 ShuffleNet g8  & 57.1\% (1.0M)  & 56.2\% (1.0M)  & \textbf{59.5}\% (1.1M) & $+$2.4\% ($+$0.1M)  \\
    1.0 ShuffleNet g8  & 65.3\% (2.5M)  & 65.0\% (2.5M)  & \textbf{66.5}\% (2.8M) & $+$1.2\% ($+$0.2M) \\
    \bottomrule
  \end{tabular}
\end{table}

\footnotetext{Despite our best effort,
  we were unable to reproduce the original accuracy reported by~\citet{zhang2018shufflenet},
  a problem similarly faced by \citet{zhao2019building} and \citet{lyu2019autoshufflenet}.
  \citet{zhao2019building} use block Hadamard
  transform and pre-activation ShuffleNet, so their results are not directly comparable
  with those reported here.}

Grouped convolution~\citep{krizhevsky2012imagenet} is often
used to reduce parameter count and speed up inference compared to
standard convolution, but, by default, channels in different groups cannot
exchange information.
To remedy this, ShuffleNet uses a permutation matrix to shuffle the channels
after each grouped convolution.
\citet{zhao2019building} propose to instead use the Hadamard transform before
and after each grouped convolution to mix the channels.
In place of these hand-engineered solutions, we use a K-matrix
before and after each grouped convolution, and learn these end-to-end together
with the rest of the network.
As shown in Table~\ref{tab:shufflenet}, across a range of sizes,
replacing the channel shuffles with K-matrices results in improved performance
at comparable parameter counts.

\subsection{Learning a latent permutation}
\label{sec:learning-permutation}
We show that K-matrices can be used in a challenging task for which
existing classes of structured linear maps have not been found suitable.
We investigate the problem of image classification on a \textit{permuted} image dataset (Permuted CIFAR-10).
This problem is challenging due to the discrete nature of learning the latent permutation of the dataset;
we present a differentiable relaxation for this using a K-matrix as a key component.
Results are presented in Table~\ref{tab:permutation}; 
compared to methods that do not have a permutation learning step, our approach
gets 9 points higher accuracy (84.4\% to 93.6\%), coming within 2 points of the
accuracy on the un-permuted dataset (94.9\%).

\begin{table}[ht]
  \caption{Permuted CIFAR-10 validation set classification accuracy (\%).
  Our kaleidoscope layer is able to nearly perfectly recover the latent structure, allowing a downstream CNN to approach the accuracy of a standard ResNet18 on the unpermuted dataset (last column).
}
  \vspace{-0.4em}
  \centering
  \begin{tabular}{@{}cccccc|c@{}}
    \toprule
    Model    & FC & RNN & CNN   & Dense + CNN & K + CNN & Unpermuted \\
    \midrule
    Accuracy & 61.2                 & 57.8     & 73.7 & 84.4        & \textbf{93.6}           & 94.9  \\
    \bottomrule
  \end{tabular}
  \label{tab:permutation}
\end{table}

In this task, we use a permuted image classification dataset (Permuted CIFAR-10),
wherein a fixed global permutation is applied to the pixels of every image in the original input set.
Typically, only fully-connected (FC) and recurrent models are applied to such datasets~\citep{le2015simple},
because the permutation destroys locality in the image, presenting a difficulty for CNNs.
However, CNNs are much better-suited for standard image tasks.
We thus expect that learning the permutation
and then applying a standard CNN should outperform these baselines.
As mentioned in Section~\ref{sec:theory}, the kaleidoscope hierarchy provides a nearly tight parameterization of permutations;
this makes them a natural fit for the permutation learning step.

Experimentally, we use a K-matrix to represent a distribution over permutations, which converges to a single permutation at the end of training.
The correct latent structure is learned by applying samples from this distribution to the permuted training images, and minimizing an auxiliary smoothness-based loss that encourages the reconstructed images to be more ``natural'' (i.e. vary smoothly pixel-to-pixel).
The learned permutation is evaluated by training a ResNet18 with the K-matrix permutation layer inserted at the beginning.
Full details of our approach are provided in Appendix~\ref{sec:learn-perm-appendix}.

In Table~\ref{tab:permutation}, we compare our approach to a ResNet18 without this extra K-matrix layer, a ResNet18 with an extra dense matrix at the beginning instead of a K-matrix, and other baselines. As generic representations such as unstructured matrices do not have the requisite properties to fit in the pipeline, these baselines fail to effectively learn the latent permutation.
We emphasize that a K-matrix provides this ability to recover latent structure despite not being specialized for permutations.
Figure~\ref{fig:permutation} describes the pipeline and displays examples of permuted and unpermuted images.

\begin{figure}[ht]
  \centering
  \begin{subfigure}{0.6\linewidth}
    \includegraphics[width=\linewidth]{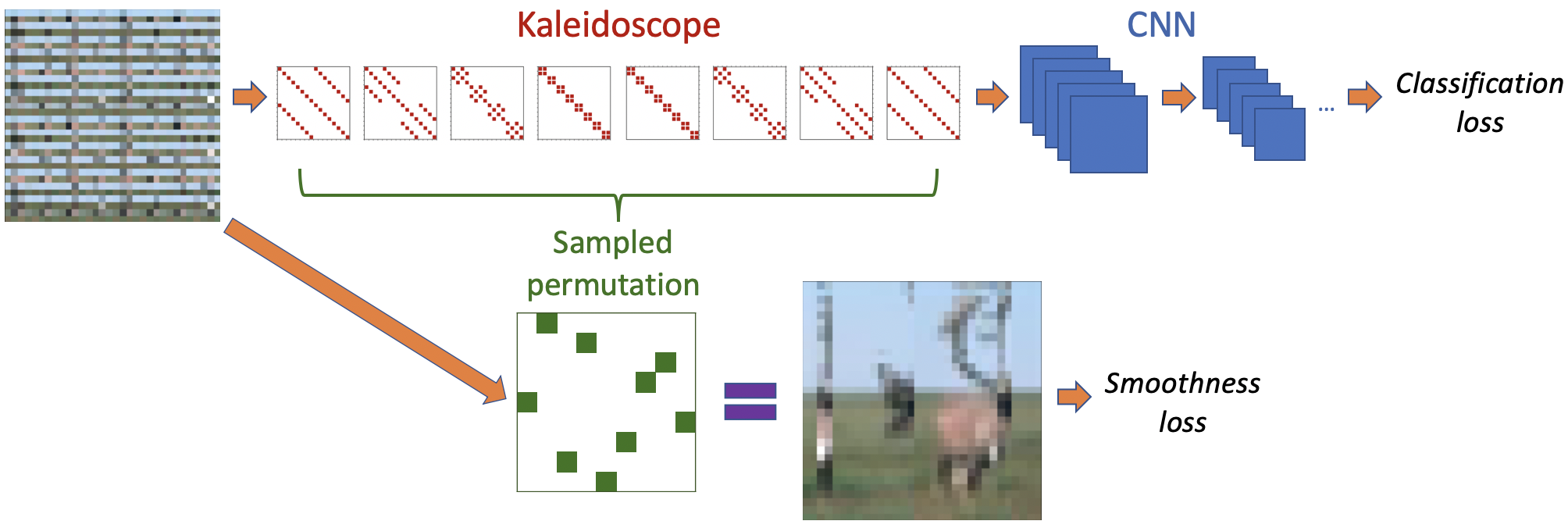}
  \end{subfigure}\hfill
  \begin{subfigure}{0.32\linewidth}
    \includegraphics[width=\linewidth]{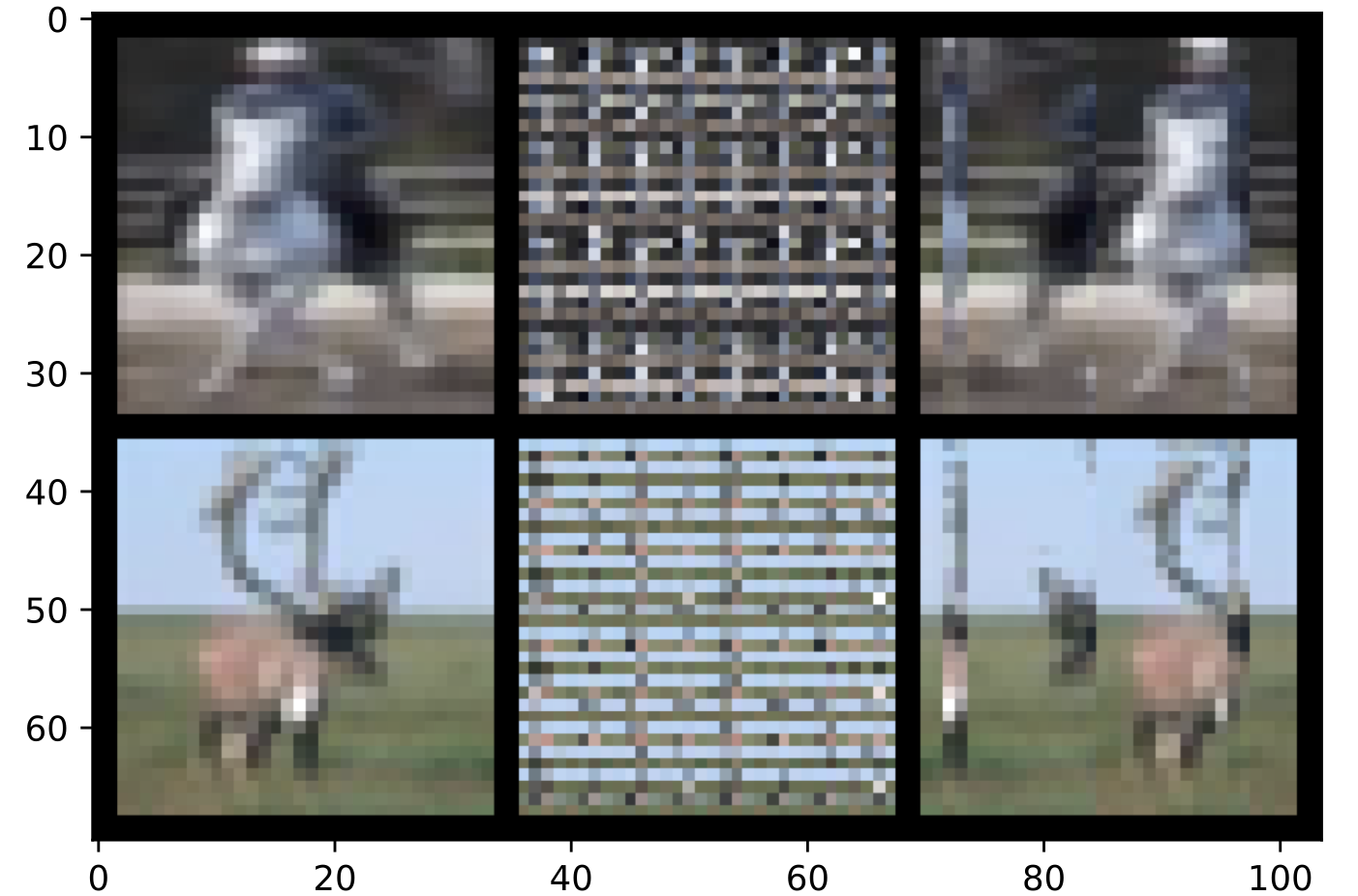}
  \end{subfigure}%
  \caption{(a) (Left) Schematic describing permutation learning approach. The inputs are multiplied by a K-matrix and then fed into a CNN, from which the classification loss is computed. Separately, the input is permuted by a permutation matrix sampled from the distribution described by the K-matrix, and a ``smoothness'' loss~\citep{rudin1992nonlinear} is computed from the result, as described in Appendix~\ref{sec:learn-perm-appendix}.
  (b) (Right) Left panel: original (unpermuted) example images. Center panel: the
  permuted versions. Right panel: these images after then applying the
  permutation recovered by the K-matrix.
  The K-matrix is able to nearly unscramble the images into their unpermuted versions.}
  \label{fig:permutation}
\end{figure}

\subsection{Speeding up Inference}
\label{sec:inference}

We evaluate the inference speed benefit of using K-matrices on a real language
translation model.
We choose the state-of-the-art DynamicConv Transformer translation model~\citep{wu2019pay},
which offers 20\% inference speedup over the standard Transformer model, and
replace dense matrices in the decoder's linear layers with K-matrices, which
leads to a further 36\% inference speedup (Table~\ref{tab:dynamic_conv_inference}).

As outlined in Section~\ref{sec:theory-result}, K-matrices admit a simple and fast $O(n \log n)$
matrix-vector multiplication algorithm.
We provide fast implementations of this algorithm in C++ and CUDA, with an
interface to PyTorch \citep{pytorch}, and use this implementation in our experiments.

\begin{table}[ht]
  \centering
  \caption{Inference speed on the IWSLT-14 German-English translation task (test
    set).
    Using K-matrices instead of dense matrices in the DynamicConv decoder linear layers
    results in 36\% faster inference speed (measured on a single-threaded CPU with a batch size of 1 and beam size of 1).}
  \label{tab:dynamic_conv_inference}
  \begin{tabular}{lcccc}
    \toprule
    Model     & \# params    & BLEU          & Sentences/sec & Tokens/sec \\
    \midrule
    Transformer~\citep{vaswani2017attention}     & 43M          & 34.4          & 3.0           & 66.4 \\
    DynamicConv Transformer~\citep{wu2019pay}    & 39M          & \textbf{35.2} & 3.6           & 80.2 \\
    DynamicConv Transformer w/ K-matrices (ours) & \textbf{30M} & 34.2          & \textbf{4.9}  & \textbf{103.4} \\
    \bottomrule
  \end{tabular}
\end{table}

We use K-matrices to replace all the linear layers in the decoder of
DynamicConv (since 90\% of inference time is spent in the decoder).
As shown in Table~\ref{tab:dynamic_conv_inference}, on the IWSLT-14
German-English translation task, this yields a 25\% smaller model with 36\%
faster inference time on CPU, at the cost of 1.0 drop in BLEU
score.\footnote{BLEU score is a measure of translation quality; higher is better.}
(Our model also nearly matches the state-of-the-art BLEU performance of 2 years
ago obtained by the Transformer model~\citep{vaswani2017attention},
despite being over 60\% faster for inference than the Transformer.)
The majority (55\%) of inference time is spent in matrix-vector
multiplication; our implementation of K-matrix-vector multiplication is about 2 times faster
than the optimized implementation of dense matrix-vector multiplication in the
Intel MKL library. Direct comparisons of K-matrix multiplication
with this and other highly-optimized routines such as the FFT
are further detailed in Appendix~\ref{sec:speed_benchmark}.

\section{Conclusion}
\label{sec:conclusion}

We address the problem of having to manually choose among the numerous classes of
structured linear maps by proposing the universal (expressive, efficient, and learnable)
family of kaleidoscope matrices.
We prove that K-matrices can represent any structured linear maps with
near-optimal space and time complexity.
Empirical validations suggest that K-matrices are a promising and flexible way to employ
structure in modern ML; they can be used to reduce the need for hand-engineering,
capture challenging latent structure, and improve efficiency in models.
We are excited about future work on further hardware-optimized implementations of
K-matrices, to fully realize the size and speed benefits of structured matrices on
a broad array of real-world applications.

\subsubsection*{Acknowledgments}

We thank Avner May and Jian Zhang for their helpful feedback.

We gratefully acknowledge the support of DARPA under Nos.\ FA87501720095 (D3M),
FA86501827865 (SDH), and FA86501827882 (ASED); NIH under No.\ U54EB020405
(Mobilize), NSF under Nos.\ CCF1763315 (Beyond Sparsity), CCF1563078 (Volume to
Velocity), and 1937301 (RTML); ONR under No.\ N000141712266 (Unifying Weak
Supervision); the Moore Foundation, NXP, Xilinx, LETI-CEA, Intel, IBM,
Microsoft, NEC, Toshiba, TSMC, ARM, Hitachi, BASF, Accenture, Ericsson,
Qualcomm, Analog Devices, the Okawa Foundation, American Family Insurance,
Google Cloud, Swiss Re, and members of the Stanford DAWN project: Teradata,
Facebook, Google, Ant Financial, NEC, VMWare, and Infosys.
The U.S.\ Government is authorized to reproduce and distribute reprints for
Governmental purposes notwithstanding any copyright notation thereon.
Any opinions, findings, and conclusions or recommendations expressed in this
material are those of the authors and do not necessarily reflect the views,
policies, or endorsements, either expressed or implied, of DARPA, NIH, ONR, or
the U.S.\ Government.
Matthew Eichhorn and Atri Rudra’s research is supported by NSF grant
CCF-1763481.

\bibliography{butterfly_iclr_2020}
\bibliographystyle{iclr2020_conference}

\appendix

\newpage
\section{Related Work}
\label{sec:related-work}

\subsection{Structured matrices in machine learning}

Structured linear maps such as the DFT, the Hadamard transform and convolution
are a workhorse of machine learning, with diverse applications including data
preprocessing, random projection, featurization, and model compression.
For example, the DFT is a crucial step in the standard filter bank speech
preprocessing pipeline~\citep{jurafsky2014speech}, and is commonly used when
dealing with time series data in general~\citep{panaretos2013fourier}.
Fast random projection and kernel approximation methods rely on the fast
Hadamard transform~\citep{le2013fastfood, yu2016orthogonal} and
convolution~\citep{yu15}, and convolution is a critical component of modern
image processing architectures~\citep{krizhevsky2012imagenet} as well as being
useful in speech recognition~\citep{zeghidour2018learning} and natural language processing~\citep{wu2019pay}.
Large learnable classes of structured matrices such as Toeplitz-like
matrices~\citep{sindhwani2015structured} and low-displacement rank (LDR)
matrices~\citep{thomas2018learning} have been used for model compression.
However, despite their theoretical speedup, these structured matrix classes lack efficient implementations,
especially on GPUs.
Therefore, their use has largely been confined to small models (e.g.\ single hidden layer
neural nets) and small datasets (e.g.\ CIFAR-10).

Butterfly matrices encode the recursive divide-and-conquer structure of the fast
Fourier transform (FFT) algorithm.
They were first used in numerical linear algebra for fast
preconditioning~\citep{parker1995random}.
The butterfly factorization is then generalized to encompass complementary
low-rank matrices commonly encountered in solving differential and integral
equations~\citep{rokhlin2006fast, tygert2008fast, tygert2010recurrence,
  tygert2010fast, li2015butterfly, li2018multidimensional, khoo2019switchnet, li2020butterfly}.
In machine learning, butterfly matrices have been use to approximate the Hessian
for fast optimization~\citep{mathieu2014fast}, and to perform fast random
projection~\citep{jing2017tunable, munkhoeva2018quadrature,
  choromanski2019unifying}.
\citet{dao2019learning} show that butterfly matrices can be used to learn fast
algorithms for discrete transforms such as the Fourier transform, cosine/sine
transform, Hadamard transform, and convolution.

\subsection{Sparse matrices}

Several classes of structured linear transforms are ubiquitous in modern deep learning architectures;
particularly widespread examples include convolution and multiheaded attention.
Recently, attempts to impose \textit{sparsity} on the neural network weights have been gaining traction.
State-of-the art approaches of this type typically accomplish this by pruning small weights (either gradually during training \citep{zhu2017prune}, or post-training \citep{han2016deep}) or by training a dense network and then identifying ``winning lottery tickets''---sparse subnetworks which may then be retrained from scratch with appropriate initialization \citep{frankle2019lottery}. Importantly, these approaches start from a dense network, and therefore training is expensive. There is also a more nascent line of work that aims to train unstructured sparse neural networks \textit{directly} \citep{set, mostafa2019parameter, dettmers2019sparse, evci2019rigging}.
These approaches maintain a constant network sparsity level throughout training, and use heuristics to evolve the sparsity pattern during training. One drawback is that the indices of the nonzero entries need to be stored in addition to the entry values themselves, which increases the memory required to store the sparse weight tensors.
Another drawback is that these approaches to learn the sparsity pattern are based on intricate heuristics, which can be brittle.
We note that these heuristic sparsification techniques could potentially be combined with our approach,
to further sparsify the K-matrix factors.

\subsection{Speech recognition from raw audio}
Numerous works focus on the problem of speech recognition from raw audio input,
i.e. without manual featurization.
SincNet \citep{ravanelli2018speaker} is a CNN-based architecture
parameterized with sinc functions, designed so that the first convolutional layer imitates a band-pass filter.
\citet{zeghidour2018learning} formulate a learnable version of a filter bank featurization;
their filters are initialized as an approximation of MFSC features and then fine-tuned jointly with the rest of the model.
\citet{sainath2015learning} proposed a powerful combined convolutional LSTM (CLDNN)-based model for learning from
raw audio, using a large amount of training data.
The WaveNet generative architecture \citep{oord2016wave},
based on dilated convolutions, has been adapted to speech recognition and can be trained on raw audio.
Other approaches that can learn from raw audio can be found in
\citep{palaz2013estimating, collobert2016wav, ghahremani2016acoustic}.
To our knowledge, the 14.6\% PER achieved by our kaleidoscope + LSTM model on the TIMIT test set is the 
lowest error rate obtained by a model trained directly on the raw audio.

\subsection{Learning permutations}

Permutation matrices find use in tasks such as matching and sorting (among many others).
Techniques to obtain posterior distributions over permutations have been
developed, such as the exponential weights
algorithm~\citep{helmbold2009learning} and the Gumbel-Sinkhorn
network~\citep{mena2018learning}.

Classifying images with permuted pixels is a standard task to benchmark
the ability of RNNs to learn long range dependencies.
\citet{le2015simple} propose the Permuted MNIST task, in which the model has to classify
digit images with all the pixels permuted.
Many new RNN architectures, with unitary or orthogonal weight matrices to avoid
gradient explosion or vanishing, have been proposed and tested on this
task~\citep{le2015simple, arjovsky2016unitary, wisdom2016full,
  mhammedi2017efficient, trinh2018learning}.
Standard gated RNN architectures such as LSTM and GRU have also been found to be
competitive with these new RNN architectures on this
task~\citep{bai2018empirical}.

\section{Additional Experimental Details}
\label{sec:addl-details}
\subsection{Speech preprocessing}
In this section, we fully describe our settings and procedures for the speech preprocessing experiments in Section~\ref{sec:speech},
and present additional auxiliary baselines and results.
\label{sec:speech-appendix}
\subsubsection{Experimental setup}
We evaluate our speech recognition models on the TIMIT speech corpus \citep{timit}, a standard benchmark for speech recognition.
The input is audio (16-bit, 16 kHz .wav format), and the target is the transcription into a sequence of phonemes (units of spoken sound).
Our evaluation metric is the phoneme error rate (PER) between the true phoneme sequence and the phoneme sequence predicted by our model.
We use PyTorch \citep{pytorch}, the Kaldi speech recognition toolkit \citep{poveyASRU2011},
and the PyTorch-Kaldi toolkit \citep{ravanelli2019pytorchkaldi} for developing PyTorch speech recognition models
for all our experiments and evaluations.

\subsubsection{Model and evaluation}
Our baseline Bi-LSTM architecture is taken from the PyTorch-Kaldi repository.\footnote{This open-source repository can be found at \url{https://github.com/mravanelli/pytorch-kaldi}.}
This is a strong baseline model that, to the best of our knowledge, matches state-of-the-art performance
for models that use a \textit{single} type of input featurization \citep{ravanelli2019pytorchkaldi}.
The original Bi-LSTM model takes as input filter bank features.
These are computed as follows:
(i) the waveform is framed (split into chunks of 25 ms each that overlap by 10 ms each), (ii) the waveform is dithered (zero-mean Gaussian random noise is added), (iii) pre-emphasis is applied to amplify high frequencies, (iv) the Hamming window function \citep{harris1978on} is applied, (v) the FFT is applied, and the power spectrum of the resulting (complex-valued) output is computed, (vi) the power spectrum (which has dimension 512) is mapped to the ``mel scale'' (which is a scale intended to mimic human auditory perception \citep{stevens1937scale}) by multiplication with a specific banded matrix of dimension $512 \times 23$, and the entrywise logarithm of the output is taken (the 23 outputs are called the \textit{filters}), and (vii) cepstral mean and variance normalization \citep{liu1993cepstral} is applied. Numerical hyperparameters of this procedure include the dither noise scale, the pre-emphasis coefficient, the Hamming window size, the number of mel filters, and more; we kept all these the same as the Kaldi/PyTorch-Kaldi defaults.

In contrast, our ``K-matrix version'' of the model takes as input the raw waveform, split into chunks the same way as before
but with no normalization, dithering, or other preprocessing,
which is then fed into a complex-valued kaleidoscope [$(\BBS)^2$] matrix.
Similarly to the nonlinear steps in computing filter bank features, the logarithm of the power spectrum of the output (which has dimension 512)
is then computed.
This output is fed into the Bi-LSTM; the Bi-LSTM and kaleidoscope layer are trained together in standard end-to-end fashion.
The Bi-LSTM architecture is not modified aside from changing the input dimension from 23 to 512; this (along with the $\approx 75$K parameters in the kaleidoscope layer itself) results in approximately a 1.1M increase in the total number of parameters compared to the model
that takes in MFSC features (a modest 8\% relative increase).
Total training time for our kaleidoscope-based architecture is 7\% greater than that required for the model that uses MFSC features,
not counting the time required to precompute the MFSC features; the FLOPs for inference-time are approximately 15\% greater
(mostly due to the larger dimension of the input to the Bi-LSTM; the kaleidoscope layer accounts for less than 0.5\% of the total FLOPs). 

As baselines,
we also compare to inserting other types of linear transformations before the Bi-LSTM:
fixed linear transformations (such as the fixed FFT, or no transform at all [i.e. the identity]),
other trainable structured layers (low-rank, circulant, and sparse [using the sparse training algorithm of~\cite{dettmers2019sparse}]), and
a trainable \emph{unstructured} (dense) linear layer. The kaleidoscope layer performs the best out of all such approaches.
The fact that it outperforms even a dense linear layer with more parameters is particularly notable,
as it suggests that the structural bias imposed by the K-matrix representation is beneficial for performance on this task.
Full results are given in Table~\ref{tab:speech-full}.

\begin{table}[ht]
\centering
\caption[Caption]{TIMIT phoneme error rate (PER\%, $\pm$ standard deviation across 5 random seeds).}
\vspace{-0.25em}
\label{tab:speech-full}
 \begin{tabular}{p{4.5cm}ccc}
     \toprule
    Model & Test set PER\% & \# Parameters  \\ \midrule
    Low rank + LSTM   & $23.6 \pm 0.9$ & 15.5M   \\
    Sparse + LSTM & $21.7 \pm 0.9$ & 15.5M  \\
    Circulant + LSTM & $23.9 \pm 0.9$ & 15.4M \\
    Dense + LSTM       & $15.4 \pm 0.6$ & 15.9M \\
    FFT + LSTM   & $15.7 \pm 0.1$ & 15.4M  \\
    Identity + LSTM     & $20.7 \pm 0.3$ & 15.4M  \\
    \textit{Kaleidoscope + LSTM}    & $14.6 \pm 0.3$ & 15.4M      \\
    MFSC features + LSTM & $14.2 \pm 0.2$ & 14.3M   \\
    \midrule
    SincNet \citep{ravanelli2019pytorchkaldi} & $17.2$ & 10.0M \\
    LiGRU \citep{ravanelli2018light}  & $13.8$ & 12.3M  \\
    \bottomrule
 \end{tabular}
\end{table}

In our experiments, we grid search the initial learning rate for the ``preprocessing layer'' (if applicable)
in \{5e-5, 1e-4, 2e-4, 4e-4, 8e-4, 1.6e-3\}, and fix all other hyperparameters
(including the initial learning rates for the other parts of the network)
to their default values in the PyTorch-Kaldi repository.
The model and any preprocessing layers are trained end-to-end with the RMSProp optimizer
for 24 epochs (as per the defaults in PyTorch-Kaldi).
For each model, we use the validation set to select the best preprocessing learning rate,
while the final error rates are reported on the separate held-out test set.
For all structured matrix baselines except circulant (which always has $n$ parameters for an $n \times n$ matrix),
the number of parameters in the structured matrices is set to equal the number of parameters in the butterfly layer,
while the unconstrained matrix is simply a standard dense complex-valued square matrix.
For all experiments with a trainable ``preprocessing layer,'' we initialize the preprocessing matrix to represent the FFT
(or approximate it as closely as possible [i.e. minimize the Frobenius error to the true FFT matrix],
in the case of low-rank, sparse, and circulant),
which we found to outperform random initialization.

\subsubsection{Extension: Combining {MFSC} and kaleidoscope}
As an additional experiment, we sought to investigate whether \textit{combining}
the hand-engineered MFSC featurization pipeline and a learnable kaleidoscope layer
(instead of replacing the former with the latter) could lead to accuracy gains.
Specifically, in this experiment we first used the standard filter bank featurization pipeline
described above, and trained end-to-end as usual.
Then, we replaced the FFT step with a K-matrix initialized to the FFT,
and made the weights of the Hamming window function and the mel filter bank matrix
learnable as well (similarly to \citep{zeghidour2018learning}). We fine-tuned the resulting architecture for an additional 10 epochs.
The final test PER\% attained by this ``hybrid'' model is $\textbf{14.0} \pm 0.3$;
the model has 14.4M parameters---a negligible increase over the 14.3M in the original
architecture.
Thus, by combining the manually encoded domain knowledge in
the filter bank featurization and allowing this structure to be \textit{learnable} rather than
fixed, we are able to nearly match the state-of-the-art 13.8\% accuracy on TIMIT.
While this ``hybrid'' model certainly involves some hand-engineering, the state-of-the-art results
use a concatenation of \textit{three} different speech audio featurizations---MFSC, MFCC, and fMLLR---as
the neural network input, along with a customized RNN architecture (LiGRU) specifically designed for speech recognition,
and thus require a more complicated pipeline that is arguably even more hand-crafted.

\subsection{Replacing CNN channel shuffle}

\subsubsection{Model architectures}

ShuffleNet is a convolutional neural network with residual (skip) connections
that uses a permutation matrix to shuffle the channels
after each grouped 1x1 convolution, sending the $i$-th channel to the
$(i \bmod g)$-th group, where $g$ is the total number of groups.
The architecture for each residual block in ShuffleNet is: 1x1 group conv $\to$ Batch norm,
ReLU $\to$ Permutation $\to$ 3x3 depthwise conv $\to$ Batch norm $\to$ 1x1 group conv.
The permutation is fixed.

\citet{zhao2019building} propose to instead use the Hadamard transform before and after
each grouped 1x1 convolution to mix the channels.
Note that the Hadamard transforms are placed \textit{before} the batch normalization and ReLU
layer (unlike the permutation matrix in the original ShuffleNet design).
In particular, the architecture for each block is: Hadamard $\to$ 1x1 group conv $\to$
Hadamard $\to$ Batch norm, ReLU $\to$ 3x3 depthwise conv $\to$ Batch norm $\to$ 1x1 group conv.
The Hadamard transform is fixed.

In our architecture, we use a kaleidoscope matrix in $\OBB$ (product of an
orthogonal butterfly matrix, a diagonal matrix, and the transpose of another
butterfly matrix) before and after each grouped 1x1 convolution.
We place the second K-matrix after the batch norm and ReLU, to more closely
mimic the original ShuffleNet design.
The structure for each block is: K-matrix $\to$ 1x1 group conv $\to$ Batch norm,
ReLU $\to$ K-matrix $\to$ 3x3 depthwise conv $\to$ Batch norm $\to$ 1x1 group conv.
The K-matrices are trained along with the rest of the network, rather than being fixed.

\subsubsection{Experimental setup}
We evaluate the CNN architectures on the image classification task of the
standard ImageNet dataset~\citep{ILSVRC15}.
We use the standard data augmentation, training, and evaluation
pipeline as in~\citep{xie2017aggregated}.
We train with SGD on 8 GPUs for 90 epochs, with a total batch size of 2048 and
initial learning rate 0.8.
For the 1.0 ShuffleNet g8 architecture, we reduce the total batch size to 1792 to
fit into GPU memory, and correspondingly linearly scale the initial learning rate to 0.7.
Other hyperparameters (e.g.\ learning rate schedule, weight decay, etc.) are kept the
same as in the ShuffleNet paper~\citep{zhang2018shufflenet}.
We use the training script from NVIDIA's deep learning examples repository.\footnote{\url{https://github.com/NVIDIA/DeepLearningExamples/tree/master/PyTorch/Classification/RN50v1.5}}

\subsubsection{Additional results}

In Table~\ref{tab:shufflenet_top5}, we report top-5 classification accuracy on
ImageNet, to complement the top-1 accuracies in Table~\ref{tab:shufflenet}.
\begin{table}[ht]
  \centering
  \caption{Top-5 classification accuracy of ShuffleNet on ImageNet validation
    set.
    We report results for different network width multipliers (number of
    channels), and for different kinds of matrices used for channel mixing.
    Using a learnable K-matrix in place of each fixed permutation (shuffle) or
    Hadamard matrix improves top-5 accuracy by up to 4.8\%.
    Parameter counts are the same as in Table~\ref{tab:shufflenet}.
  }
  \label{tab:shufflenet_top5}
  \begin{tabular}{lccc|c}
    \toprule
                       & Shuffle        & Hadamard       & Kaleidoscope (K.)       & K. vs. Shuffle  \\ \midrule
    0.25 ShuffleNet g8 & 68.6\% & 68.4\%  & \textbf{73.4}\%  & $+$4.8\% \\
    0.5 ShuffleNet g8  & 79.9\%   & 79.2\%  & \textbf{81.7}\%  & $+$1.8\%   \\
    1.0 ShuffleNet g8  & 86.0\%  & 85.8\%   & \textbf{86.8}\%  & $+$0.8\%  \\
    \bottomrule
  \end{tabular}
\end{table}

In each setting, the total training time of our K-matrix approach is within 20\% of the total
training time of vanilla ShuffleNet.

In Figure~\ref{fig:shufflenet_training}, we plot the loss and accuracy on the
training set and validation set when we train 1.0 ShuffleNet g8, with either a
fixed permutation (Shuffle) or a K-matrix for channel shuffling.
Even though each K-matrix is a product of multiple (sparse) matrices, the model with K-matrices
takes about the same number of training steps to converge as the baseline model does.
One possible reason is that we constrain the K-matrices to be orthogonal (Section~\ref{sec:extension}),
thus avoiding vanishing or exploding gradients.
\begin{figure}[ht]
  \centering
  \begin{subfigure}{0.49\linewidth}
    \includegraphics[width=\linewidth]{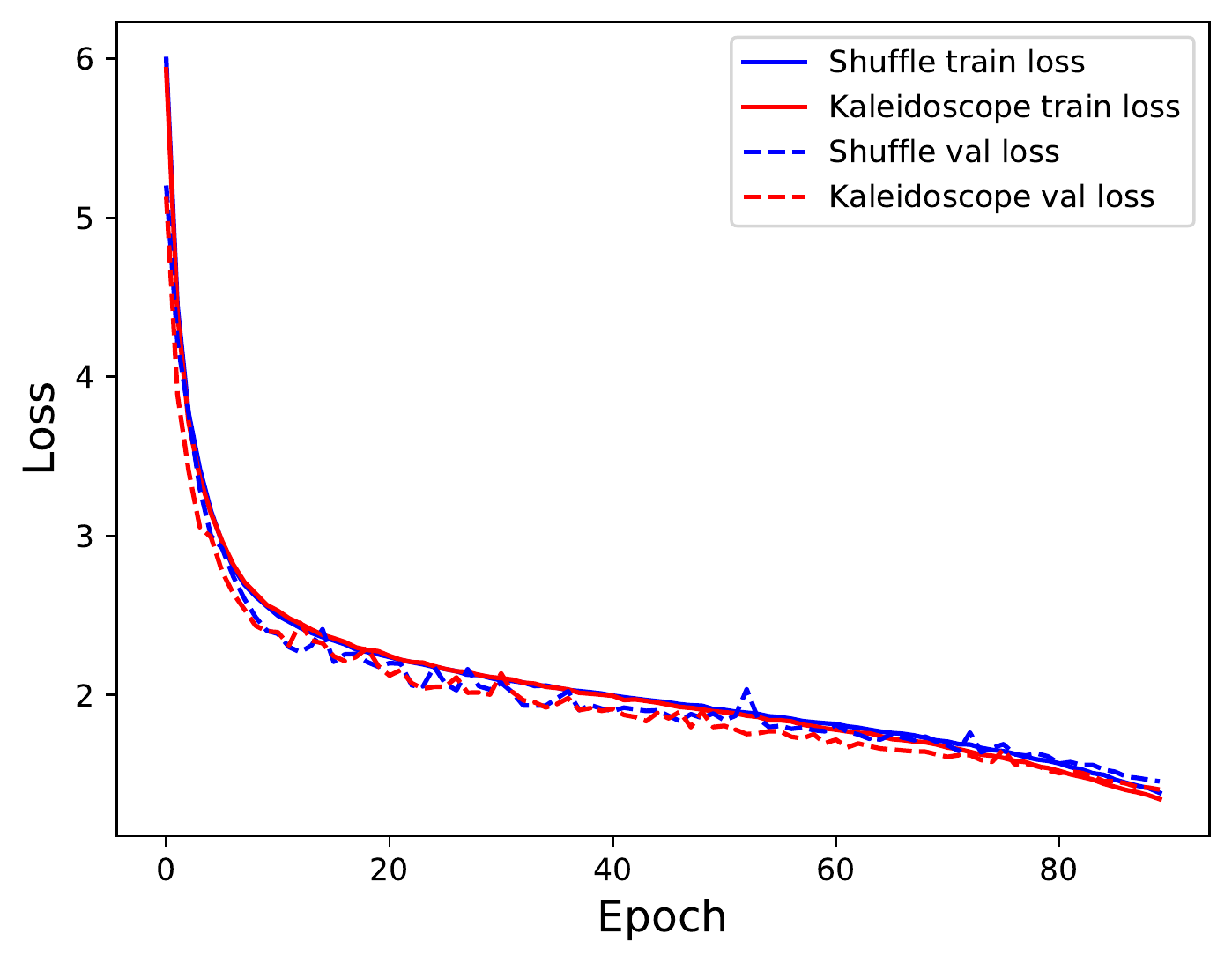}
    \caption{Train and validation loss}
  \end{subfigure}\hfill%
  \begin{subfigure}{0.49\linewidth}
    \includegraphics[width=\linewidth]{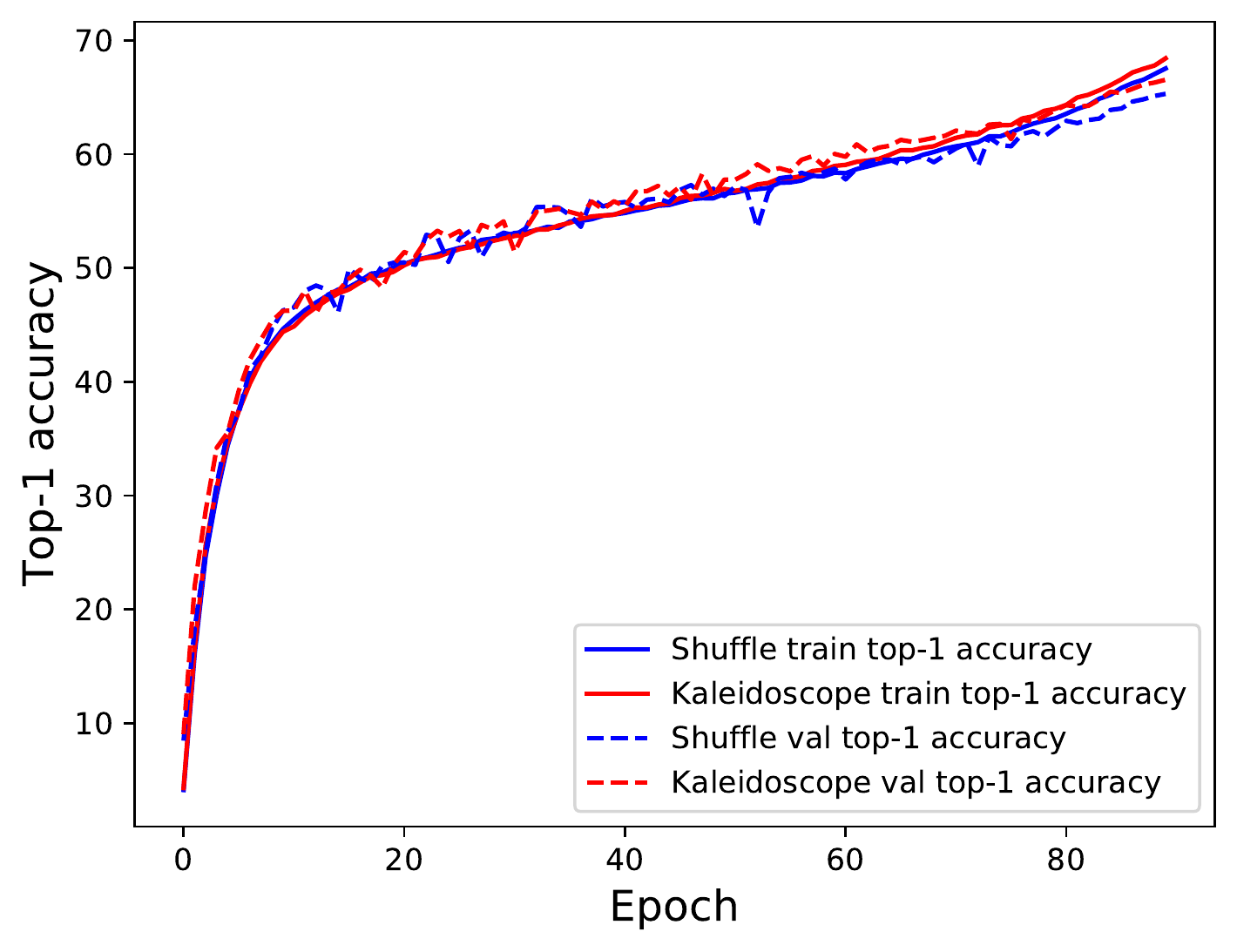}
    \caption{Train and validation accuracy}
  \end{subfigure}
  \caption{Loss and top-1 accuracy of 1.0 ShuffleNet g8 with either a fixed
    permutation (Shuffle) or a K-matrix for channel shuffling. The K-matrix model takes
    about the same number of training steps to converge as does the baseline model.}
  \label{fig:shufflenet_training}
\end{figure}

\subsection{Learning permutations}
\label{sec:learn-perm-appendix}
\subsubsection{Dataset}
The permuted CIFAR-10 dataset is constructed by applying a fixed permutation to every input.
We choose to use the 2-D bit-reversal permutation,\footnote{The bit-reversal
  permutation reverses the order of the bits in the binary representation of the
  indices. For example, indices [0, 1, ..., 7] with binary representations [000, 001,
  ..., 111] are mapped to [000, 100, ..., 111], which corresponds to [0, 4, 2, 6, 1, 5, 3, 7]} i.e., the bit reversal permutation on $32$ elements is applied to the rows and to the columns.
This permutation was chosen because it is locality-destroying: if two indices $i, j$ are close, they must differ in a lower-order bit, so that the bit-reversed indices $i', j'$ are far. This makes it a particularly challenging test case for architectures that rely on spatial locality such as ``vanilla'' CNNs.

\subsubsection{Model and Training}

We describe the model architectures used in Section~\ref{sec:latent-structure}
(those reported in Table~\ref{tab:permutation}).

\paragraph{Our model (K + CNN)}
The model represents a fixed permutation $P$, parametrized as a K-matrix, to
learn to recover the true permutation, followed by a standard ResNet18
architecture~\citep{he2016deep}.
Because of the simple decomposable nature of the butterfly factors
(Section~\ref{sec:background}), our parameterization is easily extensible with
additional techniques:
\begin{enumerate}[label=(\roman*)]
\item \label{perm-sampling} We constrain each butterfly factor matrix in the
K-matrix to be doubly-stochastic.
For example, each $2 \times 2$ block in the butterfly factor matrix of block size 2
has the form $\begin{bmatrix} a & 1-a \\ 1-a & a \end{bmatrix}$, where $a \in [0,1]$.
We treat this block as a \emph{distribution} over permutations, generating the
identity $\begin{bmatrix} 1 & 0 \\ 0 & 1 \end{bmatrix}$ with probability $a$ and
the swap $\begin{bmatrix} 0 & 1 \\ 1 & 0 \end{bmatrix}$ with probability $1-a$.
Butterfly factor matrices with larger block sizes are constrained to be
doubly-stochastic in a similar manner.
In this way, a permutation is sampled for each butterfly factor matrix, and these permutations are composed to get the final permutation that is applied to the image.
\item For each minibatch, the examples $Px$ by applying permutation samples on the (permuted) inputs are fed into an additional unsupervised reconstruction loss
\begin{equation}
  \sum_{0 \le i, j < n} \left\| \begin{bmatrix} (Px)[i+1,j]-(Px)[i,j] \\ (Px)[i,j+1]-(Px)[i,j] \end{bmatrix} \right\|_2
  \label{eq:reconst_loss}
\end{equation}
measuring total variation smoothness of the de-noised inputs.
Such loss functions are often used in image denoising~\citep{rudin1992nonlinear}.
A final regularization loss was placed on the entropy of $P$, which was annealed over time to encourage $P$ to converge toward a sharper doubly-stochastic matrix (in other words, a permutation).

The model is trained with just the reconstruction loss to convergence before
the standard ResNet is trained on top.

\end{enumerate}
These techniques are applicable to the K-matrix as well as specialized methods for representing permutations such as Gumbel-Sinkhorn~\citep{mena2018learning} and are important for recovering
the true permutation.
However, they are not applicable to a general linear layer, which showcases the flexibility of K-matrices for representing generic structure
despite not being specially tailored for this task.
We also remark that other classes of structured linear maps such as
low-rank, circulant, and so on, are even less suited to this task than dense matrices, as they are
incapable of representing all permutations.

\paragraph{Baseline architectures}
\begin{enumerate}
  \item Fully connected (FC): This is a 3-layer MLP, with hidden size 1024 and
  ReLU nonlinearity in-between the fully connected layers.
  \item Recurrent neural network (RNN): We use a gated recurrent unit (GRU)
  model~\citep{cho2014learning}, with hidden size 1024.
  Many RNN architectures have been proposed to capture long-range dependency on
  permuted image dataset such as Permuted MNIST~\citep{arjovsky2016unitary}.
  Standard gated architectures such as LSTM and GRU have shown competitive
  performance on the Permuted MNIST dataset, and we choose GRU as a baseline
  since it has been reported to slightly outperform
  LSTM~\citep{bai2018empirical}.
  \item CNN: We use the standard ResNet18 architecture, adapted to smaller image
  size of the CIFAR-10 dataset (changing stride from 2 to 1 of the first
  convolutional layer, and removing max-pooling layer that follows).
  \item Dense + CNN: We add an additional linear layer (i.e.\ a dense matrix) of
  size $1024 \times 1024$ before the ResNet18 architecture.
  This dense layer can in theory represent a permutation, but cannot benefit from the additional techniques described above.
  \item Baseline CNN (unpermuted): We use the standard ResNet18 architecture
  applied to the unpermuted CIFAR-10 dataset.
\end{enumerate}

All models are trained for 200 total epochs, with the Adam optimizer.
\note{TD: are you sure it's Adam and not SGD?}
We use the standard learning rate schedule and weight decay
from~\citet{mostafa2019parameter}.
We use Hyperband~\citep{li2017hyperband} to tune other hyperparameters such as
the initial learning rate and annealing temperature.

\subsection{Speeding up DynamicConv's inference}

\subsubsection{Model architecture}

We start with the DynamicConv Transformer architecture~\citep{wu2019pay}, which
is a variant of the Transformer architecture~\citep{vaswani2017attention} where
the self-attention in each layer is replaced with a light-weight DynamicConv
module.
We use the implementation from the Fairseq
library\citep{ott2019fairseq},\footnote{This library can be found at
  \url{https://github.com/pytorch/fairseq}} with PyTorch version 1.2.

The architecture of each layer of the decoder is: Linear $\to$ DynamicConv $\to$
Linear $\to$ LayerNorm $\to$ Encoder-decoder attention $\to$ LayerNorm $\to$ Linear $\to$
ReLU $\to$ Linear $\to$ ReLU $\to$ LayerNorm.
In every layer of the decoder, we replace the dense weight matrix in each of the
four Linear layers with a K-matrix from the $\mathcal{B}$ class (i.e.\
a butterfly matrix).

\subsubsection{Training and evaluation}

The models are trained from scratch using the training script from the Fairseq
repository, with the same hyperparameters (optimizer, learning rate, number of
updates, etc.) used in the DynamicConv paper~\citep{wu2019pay}.
We note that the DynamicConv model with K-matrices in the decoder
trains slightly faster than the default DynamicConv model
(both models are trained for 50,000 updates, which requires approximately 7\% less time
for the K-matrix model than for the default model).

To evaluate inference speed, we run the decoding script on the IWSLT-14 De-En
test set in single-threaded mode on a server Intel Xeon CPU E5-2690 v4 at
2.60GHz, and measure wall-clock time.
The test set contains 6750 sentences, with 149241 tokens.
Following~\citet{wu2019pay}, we set the batch size to 1 and beam size to 1 for this evaluation.

\subsubsection{Additional comparison with other structured matrices}

We additionally compare the speed-quality tradeoff of K-matrices with other
classes of structured matrices, when used to replace the fully-connected layers
of DynamicConv's decoder.
We consider the following additional classes of structured matrices:
low-rank, circulant, Toeplitz-like~\citep{sindhwani2015structured},
ACDC~\citep{moczulski2015acdc}, Fastfood~\citep{le2013fastfood}, and sparse.
For classes with a variable number of parameters (e.g.\ low-rank, sparse), we set
the number of parameters to match that of K-matrices.
For sparse matrices, besides the result for an ensemble of 10 models (the
default setting in the Fairseq repository), we also report the result for a single
model, as that could have faster inference time (since ensembling/averaging sparse
matrices produces a less sparse matrix).

In Figure~\ref{fig:dynamicconv_speed_bleu}, we plot the tradeoff between
translation quality (measured by BLEU score) and inference speed (sentences per
second).
Most classes of structured matrices produce similar translation quality (between
34.1 and 34.4 BLEU score).
K-matrices have the second fastest inference time, only 7\% slower than low-rank
matrices.
We note that low-rank matrices benefit from very well-tuned BLAS routines
(matrix-matrix multiplication).
Even though our implementation of K-matrix multiplication is not yet highly
optimized, it is already quite close to the speed of low-rank matrix multiplication
at an equivalent parameter count.
\begin{figure}[ht]
  \centering
  \includegraphics[width=0.8\linewidth]{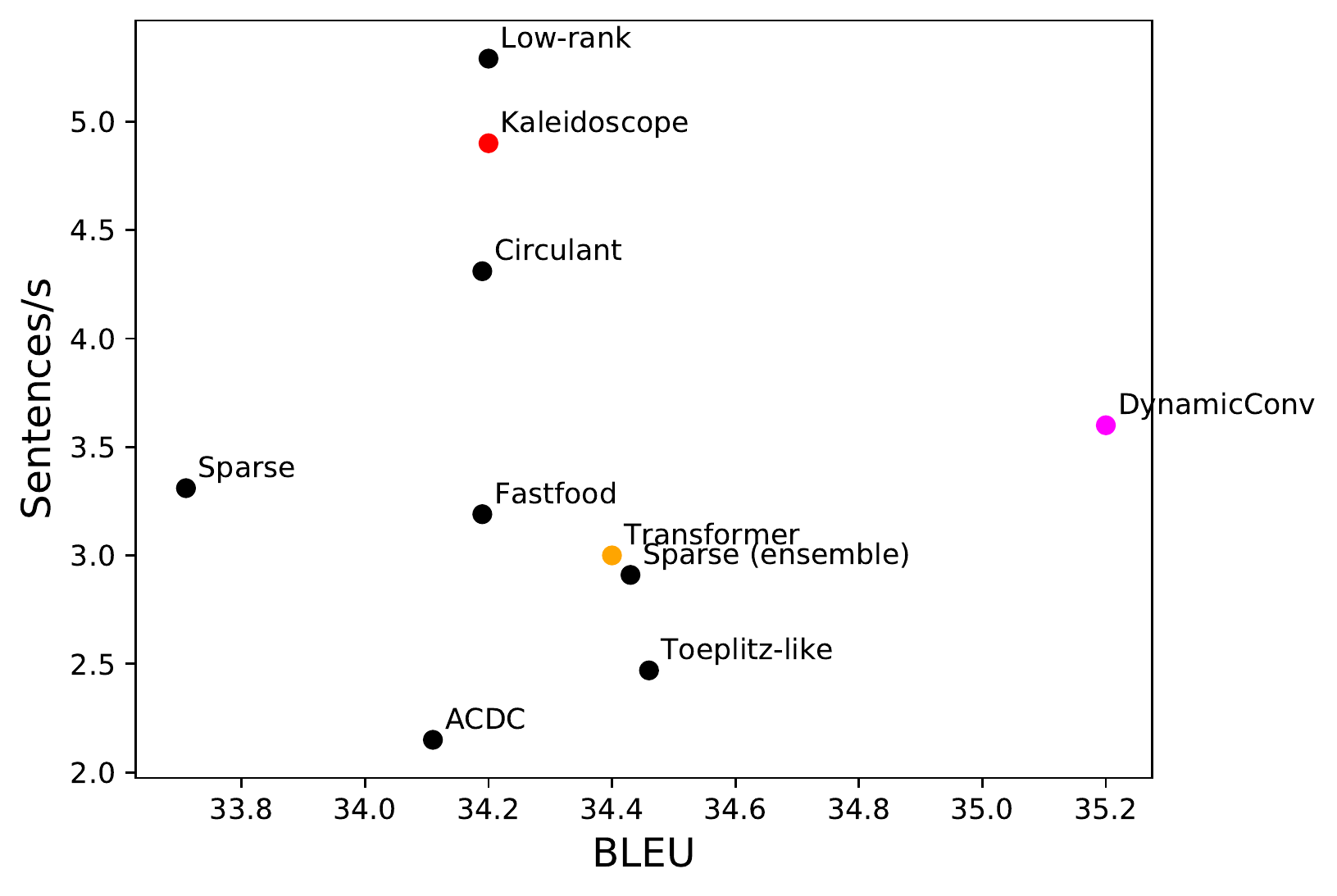}
  \caption{Tradeoff between translation quality (measured by BLEU score) and
    inference speed (sentences per second). K-matrices have the second fastest
    inference speed, only 7\% slower than low-rank matrices.}
  \label{fig:dynamicconv_speed_bleu}
\end{figure}

\section{Speed benchmark and implementation details}
\label{sec:speed_benchmark}

Each K-matrix (for fixed width and expansion), has an $O(n \log n)$
matrix-vector multiplication algorithm: sequentially multiply the input vector
with each of the sparse factors.
Our implementation of this simple algorithm is surprisingly
competitive with optimized subroutines, both on GPU (e.g.\ for training) and on
CPU (e.g.\ for inference).
In Figure~\ref{fig:speed}, we compare the speed of multiplying by a K-matrix
in class $\mathcal{B}$ (i.e.\ a butterfly matrix) against a specialized
implementation of the FFT.
We normalize the speed by the speed of dense matrix-matrix multiply (on GPU) or
dense matrix-vector multiply (on CPU).
On GPU, with input sizes $n = 1024$ and batch size 2048,
the training time (forward and backward) of K-matrices matrix is about 3x faster than
dense matrix multiply (GEMM from cuBLAS).
For inference on CPU, the kaleidoscope fast multiplication can be one or two orders of
magnitude faster than GEMV.
Over a range of matrix sizes, our implementation is within a factor of 2-4x of
specialized implementations of the FFT, a highly optimized kernel.

Our implementation is also memory-efficient.
In the forward pass through the $O(\log n)$ sparse factors, we do not store the
intermediate results, but recompute them during the backward pass.
Therefore the activation memory required is $O(bn)$ for an input batch size of $b$.

\begin{figure}[ht]
  \centering
  {
    \begin{subfigure}{0.49\linewidth}
      \includegraphics[width=\linewidth]{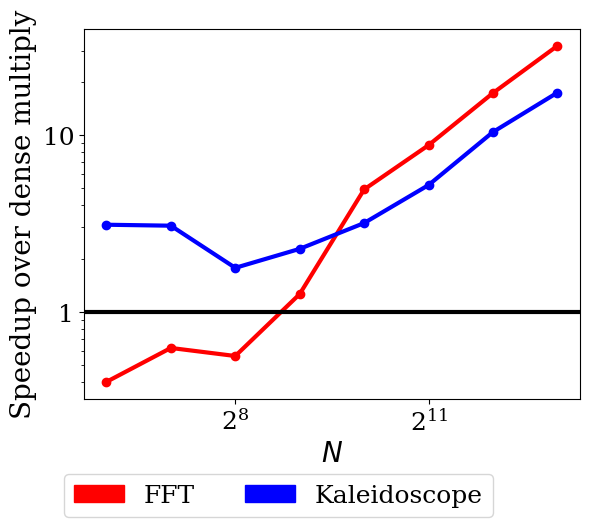}
      \caption{Training (GPU)}
    \end{subfigure}\hfill%
    \begin{subfigure}{0.49\linewidth}
      \includegraphics[width=\linewidth]{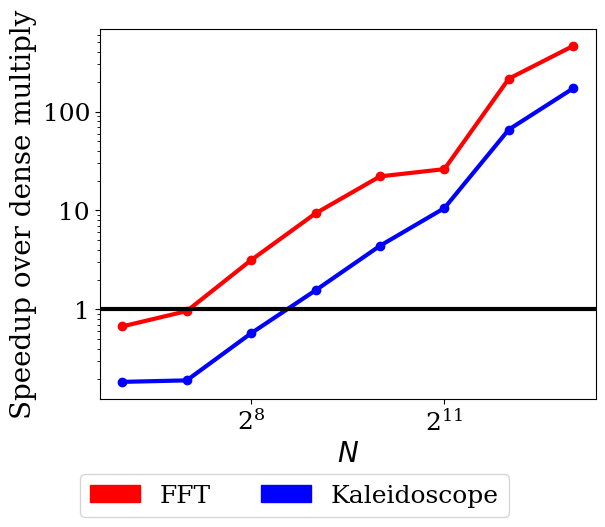}
      \caption{Inference (CPU)}
    \end{subfigure}
  }

  \caption{Speedup of FFT and Kaleidoscope against dense matrix-matrix
    multiply (GEMM) for training, and against dense matrix-vector
    multiply (GEMV) for inference.}
  \label{fig:speed}
\end{figure}

\section{Synthetic matrix recovery}
\label{sec:synthetic-recovery}
We directly validate Theorem~\ref{thm:ac-bbs} on well-known types of structured matrices used in machine learning.
Given a structured matrix $\vM$, we attempt to represent $\vM$ as closely as possible using K-matrices as well as the
standard classes of structured matrices: sparse and low-rank.
In Table~\ref{tab:synthetic}, we quantify the expressivity of each of these three methods,
as measured by their ability to approximate a range of different structures.
Results for ``global minimum'' of kaleidoscope matrices are obtained from the
theoretical expressiveness results in Section~\ref{sec:hierarchy-sparse} and Section~\ref{sec:bbs-vs-bp}.
Low-rank and sparse approximation have closed form solutions: truncating the SVD and keeping the largest-magnitude entries, respectively.
We also report the results using SGD for kaleidoscope matrices to validate that
good approximation with K-matrices can be obtained even from standard first-order
optimization algorithms.
Even with imperfect optimization, kaleidoscope matrices can still capture
out-of-class target matrices better than
low-rank and sparse matrices.

\begin{table}[ht]
  \centering
  \caption{\textbf{Expressiveness of different classes of structured matrices}:
    Frobenius error of representing common structured matrices (columns) of dimension $256$ using three
    structured representations of matrices with adjustable numbers of parameters.
    (Left group: Target matrices in the same class as the methods.
    Middle group: Target matrices with fixed number of parameters.
    Right: Random matrix to show typical scale of error.)
    Each method is allotted the same number of parameters, equal to a $\log n$ factor more than that of the target matrix.
    Low-rank and sparse matrices are unable to capture any structure outside their own class,
    while the minima for kaleidoscope matrices found via optimization better capture the actual structure for out-of-class targets better than the baselines.
  }
  \begin{tabular}{@{}ll|lll|ll|l@{}}
    \toprule
                & \diagbox[trim=lr, %
                height=1.7em]{Method}{Target} & Kaleidoscope & Low-rank & Sparse & Convolution  & Fastfood      & Random  \\ \midrule
                & Kaleidoscope & 0.0  & 0.0  & 0.0  & 0.0          & 0.0          &       \\
    Global Min. & Low-rank     & 14.9 & 0.0  & 10.8 & 14.6         & 11.6         & 15.5 \\
                & Sparse       & 11.7 & 12.2 & 0.0  & 13.1         & 7.1          & 14.1 \\ \midrule
    With SGD    & Kaleidoscope & 0.0  & 0.01 & 8.0  & \textbf{0.0} & \textbf{5.1} & 14.5        \\ \bottomrule
  \end{tabular}
  \label{tab:synthetic}
\end{table}

The target matrices are kaleidoscope, low-rank, sparse, convolution (i.e.\
circulant matrices), Fastfood~\citep{le2013fastfood}, and entrywise random IID
Gaussian matrix (to show the typical magnitude of the error).
All target matrices $\vM$ were randomly initialized such that
$\mathbb{E}[\vM^T \vM] = \vI$.

To find a kaleidoscope approximation with SGD, we used Hyperband to tune its learning
rate (between 0.001 and 0.5).

\section{Properties of the $\BBS$ Hierarchy} \label{sec:hierarchy-props}

Here, we justify why the definitions in Section~\ref{sec:hierarchy-def} give rise to a hierarchy. We first make some basic observations about the parameterization.

\begin{observation} \label{obs:params}
    An $n \times n$ matrix $\vM \in \BBS$ has $4n \log n$ parameters.
\end{observation}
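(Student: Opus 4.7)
The plan is a direct parameter count following the hierarchical definitions in Section~\ref{sec:background}. First I would unwind the definition of $\BBS$: by Definition~\ref{def:hierarchy}, $\vM = \vM_1 \vM_2^*$ for some $\vM_1, \vM_2 \in \B$, so it suffices to show that any $\vB^{(n)} \in \B$ is specified by $2n \log n$ parameters, since the conjugate transpose introduces no additional degrees of freedom and the total is $2 \cdot 2n \log n = 4n \log n$.

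Next I would count parameters in a single butterfly factor matrix $\vB_k^{(n)}$. By Definition~\ref{def:bfactor}, each butterfly factor $\vB_k$ is built from four $\tfrac{k}{2} \times \tfrac{k}{2}$ diagonal matrices, contributing $4 \cdot \tfrac{k}{2} = 2k$ parameters. A butterfly factor matrix of size $n$ with block size $k$ is block-diagonal with $\tfrac{n}{k}$ such factors, hence has $\tfrac{n}{k} \cdot 2k = 2n$ parameters, \emph{independent of} $k$.

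Then I would invoke Definition~\ref{def:bmatrix}, which writes $\vB^{(n)} = \vB_n^{(n)} \vB_{n/2}^{(n)} \cdots \vB_2^{(n)}$ as a product of $\log_2 n$ butterfly factor matrices (recalling the standing assumption that $n$ is a power of $2$). Since each contributes $2n$ free parameters and these factors share no entries, the total is $2n \log n$.

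The only subtle point, and the one I would be careful about, is ensuring no double counting or hidden constraints: the sparsity supports of the factors $\vB_k^{(n)}$ for different $k$ do overlap (this is precisely what the visualization in Figure~\ref{fig:bbs_sparsity} illustrates), but because $\vB^{(n)}$ is defined as the literal product of these factors rather than their superposition, the free parameters are those of the factors themselves, and the count is additive. With that observation the proof is immediate, and I expect no real obstacle beyond writing the arithmetic cleanly.
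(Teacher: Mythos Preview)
Your proposal is correct and follows essentially the same approach as the paper: decompose $\vM \in \BBS$ into $2\log n$ butterfly factor matrices, observe that each contributes $2n$ parameters, and multiply. The only cosmetic difference is that the paper counts ``2 parameters per row'' in each factor matrix while you count via the block structure ($2k$ per butterfly factor times $n/k$ blocks), but these are the same count.
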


\begin{proof}
    $\vM$ can be expressed as a product of $2 \log n$ butterfly factor matrices of size $n \times n$. Each of these factor matrices has 2 parameters per row, for a total of $2n$ parameters each. Hence, the total number of parameters is $4n \log n$.
\end{proof}

\begin{observation} \label{obs:runtime}
    Let $\vM$ be an $n \times n$ matrix in $(\BBS)^w_e$. Then, given an arbitrary vector $\vv$ of length $n$, we can compute $\vM\vv$ with $O(wne \log(ne))$ field operations.
\end{observation}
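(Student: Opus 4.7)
The plan is to unpack the nested definition of $(\BBS)^w_e$ layer by layer and tally the arithmetic cost at each level. By Definition~\ref{def:hierarchy}, any $\vM \in (\BBS)^w_e$ admits a factorization $\vM = \vS \vE \vS^T$ with $\vE \in (\BBS)^w$ of size $en \times en$ and $\vS = \begin{bmatrix} \vI_n & 0 \end{bmatrix}$. I would therefore reduce the computation of $\vM \vv$ to three steps: zero-pad $\vv$ to a length-$en$ vector (free, or $O(en)$ at worst), apply $\vE$ to this padded vector, and read off the first $n$ entries of the result. All of the nontrivial cost lies in the middle step.

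Next, I would expand $\vE = \vE_w \cdots \vE_1$ with each $\vE_i \in \BBS$ and apply the $\vE_i$'s to the running vector one at a time, from right to left. Each $\vE_i$ further decomposes as $\vM_{i,1} \vM_{i,2}^*$ where $\vM_{i,1}, \vM_{i,2}$ are butterfly matrices of size $en$, and by Definition~\ref{def:bmatrix} each such butterfly matrix is a product of $\log(en)$ butterfly factor matrices of the form $\vB_k^{(en)}$. Thus applying a single $\vE_i$ amounts to $2 \log(en)$ sparse matrix--vector multiplications.

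The key calculation is then the sparsity of an individual $\vB_k^{(en)}$. By Definition~\ref{def:bfactor}, every row of a $\vB_k$ block has exactly two nonzero entries (one contribution each from the diagonal blocks $\vD_i$), so $\vB_k$ has $2k$ nonzeros and the block-diagonal $\vB_k^{(en)}$ has $\frac{en}{k} \cdot 2k = 2en$ nonzeros total, regardless of $k$. Multiplying such a matrix by a vector therefore costs $O(en)$ field operations. Summing over the $2 \log(en)$ factors inside one $\vE_i$ gives $O(en \log(en))$, and summing over all $w$ factors $\vE_1, \ldots, \vE_w$ gives the claimed $O(wne \log(ne))$ bound, with the padding and truncation absorbed into this cost.

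The proof is essentially careful bookkeeping rather than a deep argument, so I do not anticipate a real obstacle. The only subtlety worth flagging explicitly is that applying the conjugate transpose $\vM_{i,2}^*$ costs the same as applying $\vM_{i,2}$: because $\vM_{i,2}^*$ is the product of the individual butterfly-factor conjugate transposes in reverse order, and each transposed factor has the same sparsity pattern (two nonzeros per row) as the original, the same $O(en)$-per-factor bound applies. This is the one place where one must be careful not to accidentally pay an extra multiplicative factor when passing from $\B$ to $\BBS$.
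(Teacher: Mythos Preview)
Your proposal is correct and follows essentially the same approach as the paper: decompose $\vM = \vS \vE \vS^T$, expand $\vE$ into $w$ factors in $\BBS$, then count the $2\log(ne)$ sparse factors per $\vE_i$ with $2ne$ nonzeros each to arrive at $O(wne\log(ne))$. Your explicit remark about the transpose having the same sparsity pattern is a nice addition that the paper leaves implicit (it merely cites the parameter count from Observation~\ref{obs:params}).
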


\begin{proof}
    Since $\vM \in (\BBS)^w_e$, we can decompose it as $\vS \vE_1 \vE_2 \dots \vE_w \vS^T$, where $\vS$ is as given in Definition~\ref{def:hierarchy}, and each $\vE_i$ is an $en \times en$ matrix in $\BBS$. Therefore, to compute $\vM\vv$, we can use associativity of matrix multiplication to multiply the vector by one of these matrices at a time.

    Since all of these factors are sparse, we use the na\"ive sparse
    matrix-vector multiplication algorithm (begin with a 0-vector and perform
    the corresponding multiplication and addition for each nonzero matrix
    entry). $\vS$ (and thus $\vS^T$) have $n$ NNZ. Therefore, matrix-vector multiplication by $\vS$ or $\vS^T$ requires $O(n)$ operations, which is dominated by the butterfly matrix-vector multiplication. Each $\vE_i$ can be further decomposed into $2\log(ne)$ matrices with at most $2ne$ non-zero entries each (by Observation~\ref{obs:params}). Therefore, matrix vector multiplication by each $\vE_i$ requires $O(ne \log(ne))$. Since there are $w$ such $\vE_i$, we require a total of $O(wne \log(ne))$ operations.
\end{proof}

Now, we are ready to show that our definition of classes $(\BBS)_e^w$ forms a natural hierarchy.

First, we must argue that all matrices are contained within the hierarchy.

\begin{lemma} \label{lem:svd}
    Let $\vM$ be an arbitrary $n \times n$ matrix. Then $\vM \in (\BBS)^{(2n-2)}$.
\end{lemma}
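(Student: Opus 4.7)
The plan is to apply the singular value decomposition to $\vM$ and then factor each unitary into a product of $n-1$ butterfly matrices via a recursive cosine--sine (CS) decomposition. Working over $\mathbb{C}$ (or $\mathbb{R}$), I would write $\vM = U \Sigma V^*$ with $U, V$ unitary and $\Sigma$ diagonal. The diagonal $\Sigma$ already lies in $\B$: take $\vB_n^{(n)} = \cdots = \vB_4^{(n)} = I$ and let $\vB_2^{(n)}$ be the block-diagonal matrix whose $n/2$ two-by-two blocks are $\mathrm{diag}(\sigma_{2i-1}, \sigma_{2i})$. So the real task is to show that every $n \times n$ unitary matrix lies in $\B^{n-1}$, i.e.\ is a product of $n-1$ elements of $\B$.

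I would prove this unitary claim by strong induction on $n$ (a power of $2$). The base case $n = 2$ is immediate because a $2 \times 2$ unitary is itself a butterfly factor $\vB_2$. For the inductive step, the CS decomposition gives
\[
  U \;=\; \begin{bmatrix} Q_1 & 0 \\ 0 & Q_2 \end{bmatrix}
          \begin{bmatrix} C & -S \\ S & C \end{bmatrix}
          \begin{bmatrix} R_1^* & 0 \\ 0 & R_2^* \end{bmatrix},
\]
with $(n/2) \times (n/2)$ unitaries $Q_i, R_i$ and real $(n/2) \times (n/2)$ diagonal $C, S$ satisfying $C^2 + S^2 = I$. The middle factor has exactly the shape of a butterfly factor $\vB_n$, so it is the single-block butterfly factor matrix $\vB_n^{(n)} \in \B$. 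By induction each $Q_i$ and $R_i$ decomposes into $n/2 - 1$ butterfly factor matrices at dimension $n/2$; since the block-diagonal of two $\vB_k^{(n/2)}$ factor matrices is precisely $\vB_k^{(n)}$, aligning the factorizations of $Q_1$ and $Q_2$ level by level collapses the outer block-diagonal into $n/2 - 1$ butterfly factor matrices at dimension $n$, and likewise for $R_1, R_2$. This yields the recurrence $T(n) = 2T(n/2) + 1$ with $T(2) = 1$, solving to $T(n) = n - 1$.

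To conclude, I absorb $\Sigma$ into the factorization of $U$: because each diagonal block of $\vB_n^{(n)}$ stays diagonal under left- or right-multiplication by a diagonal matrix, $\B \cdot \Sigma \subseteq \B$. So if $U = U_1 \cdots U_{n-1}$ with each $U_i \in \B$, then $U \Sigma = U_1 \cdots U_{n-2}(U_{n-1}\Sigma) \in \B^{n-1}$. Writing $V^* = V_{n-1}^* \cdots V_1^*$ with each $V_i^* \in \B^*$ gives $\vM = (U\Sigma) V^* \in \B^{n-1} \cdot (\B^*)^{n-1}$. Since $\B \subseteq \BBS$ (take the second butterfly to be $I$) and $\B^* \subseteq \BBS$ (take the first butterfly to be $I$), this yields $\vM \in (\BBS)^{2n-2}$.

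The main obstacle is the bookkeeping in the CS induction: one must verify that the recursive factorizations of $Q_1, Q_2$ (and of $R_1, R_2$) can be aligned so that at every corresponding depth the two factor matrices share the same block size $k$; only then is their block-diagonal a single valid $\vB_k^{(n)}$ in $\B$. This alignment is automatic if the same recursion tree is used for all subproblems, but must be spelled out carefully to guarantee that the accumulated product really is a sequence of butterfly factor matrices in $\B$ at dimension $n$.
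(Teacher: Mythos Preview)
Your argument is correct and reaches the same bound as the paper, but via a genuinely different decomposition of the unitary factors.

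The paper also begins with the SVD $\vM = U\Sigma V^*$, but then handles each unitary via the Householder QR route developed in its orthogonal-hierarchy appendix: every unitary is a product of $n-1$ Householder reflections (times a unitary diagonal), and each Householder reflection is shown to lie in $\OB\,\OB^*$ with inner diagonal $I$ by a separate inductive lemma. This yields $U, V^* \in (\OB\,\OB^*)^{n-1} \subseteq (\BBS)^{n-1}$, and the diagonal $\Sigma$ is absorbed into an adjacent butterfly factor exactly as you do.

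Your recursive CS decomposition instead proves the sharper intermediate statement that every unitary already lies in $\B^{n-1}$ (a product of $n-1$ butterfly matrices, not $n-1$ $\BBS$ pairs). The recurrence $T(n)=2T(n/2)+1$, $T(2)=1$ is correct, and the alignment concern you flag resolves as you indicate: the block-diagonal of two size-$n/2$ butterfly matrices is a size-$n$ butterfly matrix once you take the outermost factor $\vB_n^{(n)}=I$. One small terminological slip: in the inductive step you write ``butterfly factor matrices'' a couple of times where you mean ``butterfly matrices'' (elements of $\B$); the argument is unaffected.

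What each approach buys: your CS route is self-contained and avoids the Householder-reflection lemma entirely, while the paper's Householder route simultaneously feeds its orthogonal $\OBB$ hierarchy results, which it needs elsewhere. Both collapse to $(\BBS)^{2n-2}$ after absorbing $\Sigma$.
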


\begin{proof}
  Corollary~\ref{lem:svd} in Appendix~\ref{sec:orth-hierarchy} shows that any
  $n \times n$ matrix can be written in the form
  $\vM_1{\vM_1'}^* \dots \vM_{n-1} {\vM_{n-1}'}^* \vM \vM_n{\vM_n'}^* \dots \vM_{2n-2} {\vM_{n-2}'}^*$,
  where $\vM_i, \vM_i'$ are orthogonal butterfly matrices and $\vM$ is a diagonal
  matrix.
  We can combine $D$ with $M_n$ to form another (possibly not orthogonal)
  butterfly matrix.
  This yields a decomposition of $\vM$ as products of (possibly not orthogonal)
  butterfly matrices and their (conjugate) transposes, completing the proof.
\end{proof}

Next, we argue that, up to a certain point, this hierarchy is strict.

\begin{lemma}
    For every fixed $c \geq 1$, there is an $n \times n$ matrix $\vM_n$ (with $n$ sufficiently large) such that $\vM_n \in (\BBS)^{c+1}$ but $\vM_n \not\in (\BBS)^c$.
\end{lemma}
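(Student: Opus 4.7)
The plan is to argue by contradiction using a parameter-dimension count combined with the saturation result of Lemma~\ref{lem:svd}. The starting observation is that the hierarchy $(\BBS)^1 \subseteq (\BBS)^2 \subseteq \dots$ has the stability property that, once it collapses at one level, it collapses at every level above: since $(\BBS)^{w+1} = \BBS \cdot (\BBS)^w$, the assumed equality $(\BBS)^c = (\BBS)^{c+1}$ yields $(\BBS)^{c+2} = \BBS \cdot (\BBS)^{c+1} = \BBS \cdot (\BBS)^c = (\BBS)^{c+1} = (\BBS)^c$, and by induction $(\BBS)^c = (\BBS)^w$ for all $w \geq c$. Combined with Lemma~\ref{lem:svd}, which places every $n \times n$ matrix in $(\BBS)^{2n-2}$, this would force $(\BBS)^c$ to equal the entire space $\mathbb{F}^{n \times n}$ of real (or complex) dimension $n^2$.

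On the other hand, Observation~\ref{obs:params} tells us that every element of $\BBS$ is described by $4n \log n$ parameters, so $(\BBS)^c$ is the image of a polynomial (in fact multilinear) parameterization map from $\mathbb{F}^{4cn \log n}$ into $\mathbb{F}^{n \times n}$. The image of such a map is a constructible (respectively semi-algebraic) set whose dimension is at most the dimension of the domain, namely $4cn \log n$. For fixed $c$, this quantity is $o(n^2)$, so as soon as $n$ is large enough that $4cn \log n < n^2$, the set $(\BBS)^c$ must be a proper subset of $\mathbb{F}^{n \times n}$, contradicting the conclusion of the previous paragraph. Therefore $(\BBS)^{c+1} \setminus (\BBS)^c$ is nonempty for any such $n$, and any element of this difference gives the desired $\vM_n$.

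The main technical point to pin down carefully is the dimension bound in the second paragraph, i.e.\ that the image of a polynomial map $\mathbb{F}^N \to \mathbb{F}^{n^2}$ has dimension at most $N$. Over $\mathbb{C}$ this is immediate from the standard fact that the Zariski closure of the image of a morphism of affine varieties has dimension at most the dimension of the source. Over $\mathbb{R}$ one invokes the corresponding statement for semi-algebraic sets, or applies Sard's theorem to the smooth parameterization map and uses that the rank of its differential is at most $N$ everywhere. As a self-contained alternative, one could execute the argument entirely over $\mathbb{C}$ (where $(\BBS)^c_{\mathbb{R}} \subseteq (\BBS)^c_{\mathbb{C}}$ and matrix-space dimensions behave identically), which keeps the dimension bookkeeping transparent. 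Everything else in the argument is a one-line induction and an elementary inequality, so this dimension-of-image step is the only place where care is required.
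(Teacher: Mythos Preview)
Your argument is correct and follows essentially the same route as the paper: both proofs assume $(\BBS)^{c+1}=(\BBS)^c$, use this to collapse the hierarchy so that (via Lemma~\ref{lem:svd}) every $n\times n$ matrix lies in $(\BBS)^c$, and then derive a contradiction from the parameter count $4cn\log n<n^2$ coming from Observation~\ref{obs:params}. If anything, your version is more careful than the paper's about the one nontrivial step---that the image of the polynomial parameterization map has dimension at most that of its domain---whereas the paper simply appeals informally to ``$n^2$ arbitrary parameters.''
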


\begin{proof}
     Given $c$, fix $n$ to be a power of 2 such that $c < \frac{n}{4 \log_2 n}$. For sake of contradiction, assume that every $n \times n$ matrix in $(\BBS)^{c+1}$ is also in $(\BBS)^c$. Let $\vA$ be an arbitrary $n \times n$ matrix. From Lemma~\ref{lem:svd}, $\vA \in (\BBS)^{(2n-2)}$. From our assumption, we can replace the first $c+1$ $\BBS$ factors of $\vA$ with $c$ (potentially different) $\BBS$ factors and still recover $\vA$. We can repeat this process until we are left with $c$ $\BBS$ factors, implying that $\vA \in (\BBS)^c$. From Observation~\ref{obs:params}, we require $4c n \log n < n^2$ (by our choice of $n$) parameters to completely describe $\vA$. This is a contradiction since $\vA$ is an arbitrary $n \times n$ matrix, and therefore has $n^2$ arbitrary parameters. Hence, there must be some $n \times n$ matrix in $(\BBS)^{c+1}$ that is not in $(\BBS)^c$.
\end{proof}

\section{Arithmetic Circuits in $\BBS$ Hierarchy} \label{sec:hierarchy-ac}

In this appendix, we prove our main theoretical result, namely, our ability to capture general transformations, expressed as low-depth linear arithmetic circuits, in the $\BBS$ hierarchy. This result is recorded in Theorem~\ref{thm:ac-bbs}.

\nimit{TODO: fix fact that the following lines are copied verbatim from the body (thereby also getting new theorem numbers)}

\begin{customthm}{1} %
     Let $\vM$ be an $n \times n$ matrix such that matrix-vector multiplication of $\vM$ times an arbitrary vector $\vv$ can be represented as a be a linear arithmetic circuit $C$ comprised of $s$ gates (including inputs) and having depth $d$. Then, $\vM \in (\BBS)^{O(d)}_{O(\frac{s}{n})}$.  
\end{customthm}

To prove Theorem~\ref{thm:ac-bbs}, we make use of the following two theorems.

\begin{customthm}{2} \label{thm:perm}
    Let $\vP$ be an $n \times n$ permutation matrix (with $n$ a power of 2). Then $\vP \in \BBS$.
\end{customthm}

\begin{customthm}{3} \label{thm:sparse}
    Let $\vS$ be an $n \times n$ matrix of $s$ NNZ. Then $\vS \in (\BBS)_4^{\sparsewidth\ceil{\frac{s}{n}}}$.
\end{customthm}

Theorem~\ref{thm:perm} is proven in Appendix~\ref{sec:hierarchy-perm}, and Theorem~\ref{thm:sparse} is proven in Appendix~\ref{sec:hierarchy-sparse}. 

\begin{proof}[Proof of Theorem \ref{thm:ac-bbs}]
    We will represent $C$ as a product of $d$ matrices, each of size $s' \times s'$, where $s'$ is the smallest power of 2 that is greater than or equal to $s$. 
    
    To introduce some notation, define $w_1, \hdots w_d$ such that $w_k$ represents the number of gates in the $k$'th layer of $C$ (note that $s = n + \sum_{k=1}^{d}w_k$). Also, define $z_1, \hdots z_d$ such that $z_1 = n$ and $z_k = w_{k-1} + z_{k-1}$ ($z_k$ is the number of gates that have already been used by the time we get to layer $k$). 
    
    Let $g_i$ denote the $i$'th gate (and its output) of $C$ ($0 \leq i < s$), defined such that:
    \[
        g_i = \begin{cases}
            v_i & 0 \leq i < n \\
            \alpha_j g_{i_1} + \beta_i g_{i_2} & n \leq i < s
        \end{cases}
    \]
    where $i_1, i_2$ are indices of gates in earlier layers. 
    
    For the $k$'th layer of $C$, we define the $s' \times s'$ matrix $\vM_k$ such that it performs the computations of the gates in that layer. Define the $i$'th row of $\vM_k$ to be:
    \[  
        \vM_k[i:] = 
        \begin{cases}
            \ve_i^T & 0 \leq i < z_k \\
            \alpha_{i} \ve_{i_1}^T + \beta_{i} \ve_{i_2}^T & z_k \leq i < z_k + w_k
            \\
            0 & i \geq z_k + w_k
        \end{cases}
    \]
    
    For any $0 \leq k \leq d$, let $\mathbf{\vv_k}$ be vector
    \[
        \vv_k = \vM_k \hdots \vM_2 \vM_1 \begin{bmatrix} \vv \\ 0 \end{bmatrix}. 
    \]
    We'd like to argue that $\vv_d$ contains the outputs of all gates in $C$ (i.e, the $n$ values that make up $\vM\vv$). To do this we argue, by induction on $k$, that $\vv_k$ is the vector whose first $z_{k+1}$ entries are $g_0, g_1, \hdots, g_{(z_k-1)}$, and whose remaining entries are $0$. The base case, $k=0$ is trivial. Assuming this holds for the case $k-1$, and consider multiplying $\vv_{k-1}$ by $\vM_k$. The first $z_k$ rows of $\vM_k$ duplicate the first $z_k$ entries of $\vv_{k-1}$ The next $w_k$ rows perform the computation of gates $g_{z_k}, \hdots, g_{(z_{k+1}-1)}$. Finally, the remaining rows pad the output vector with zeros. Therefore, $\vv_k$ is exactly as desired. 
    
    The final matrix product will contain all $n$ elements of the output. By left multiplying by some permutation matrix $\vP$, we can reorder this vector such that the first $n$ entries are exactly $\vM\vv$. Hence, we are left to argue the position of $\vP \vM_d \hdots \vM_2 \vM_1$ within the $\BBS$ hierarchy. 
    Each $\vM_k$ is a matrix with total $2w_k + z_k < 2 s'$ NNZ. From Theorem~\ref{thm:sparse}, we can, therefore, represent $\vM_k$ as a product of $O(1)$ matrices (of size $2s'$) in $\BBS$. From Theorem \ref{thm:perm}, $\vP \in \BBS$. Note that $s \leq s' < 2s$, so $s' = \Theta(s)$.
    
    Our final decomposition will have $O(d)\:\BBS$ factors, and requires an expansion from size $n$ to size $2s'$, or an expansion factor of $O(\frac{s}{n})$. Therefore, $\vM \in (\BBS)^{O(d)}_{O(\frac{s}{n})}$, as desired.
\end{proof}

\begin{remark}
    By applying Observation~\ref{obs:runtime}, we see that Theorem~\ref{thm:ac-bbs} gives an $O(sd \log s)$ matrix vector multiplication algorithm for $\vM$.
\end{remark}

\section{Permutations in $\BBS$} \label{sec:hierarchy-perm}

In this appendix, we prove Theorem~\ref{thm:perm}.
In addition, we will also show that permutations are in $\BSB$, where the set $\BSB$
is defined analogously to $\BBS$ (i.e.\ matrices of the form
$\vM = \vM_1^* \vM_2$ for some $\vM_1, \vM_2 \in \B$).

To prove Theorem~\ref{thm:perm}, we decompose permutation matrix $\vP$ into $\vP = \vL \vR$, with $\vL \in \B$ and $\vR \in \B^*$. Throughout the proof, we make use of the following definition.

\begin{definition}
    Let $\vL$ be an $n \times n$ permutation matrix ($n$ a power of 2). We say that $\vL$ meets the $2^j$ \textbf{balance condition} if $\vL$ can be divided into chunks of $2^j$ (with each chunk having all columns $i$ such that $\floor{\frac{i}{2^j}}$ has the same value) such that for every $0 \leq m < 2^j$, each chunk has exactly one $\vL[:,k] = \mathbf{e}_{\pi_k}$ with $\pi_k \equiv m \pmod {2^j}$. We say that $\vL$ is \textbf{modular-balanced} if it meets the $2^j$ balance condition for each $2 \leq 2^j \leq n$.\nimit{Slightly cleaner statement: call it the "$j$-balance condition".}
\end{definition}

\begin{figure}[H]
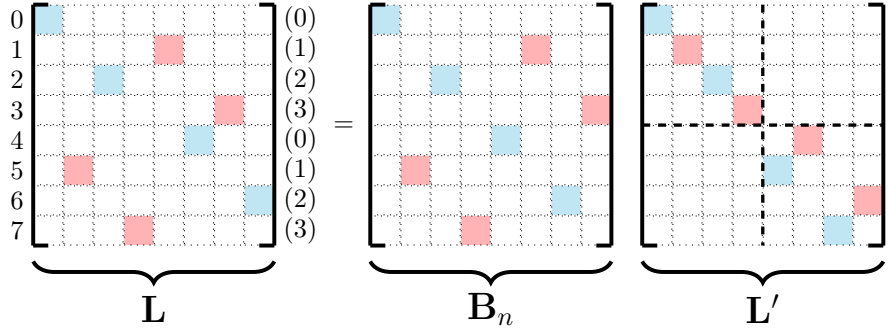
 \label{fig:modbal}
    \modbalfigure
    \caption{First step of decomposition of modular-balanced matrix $\vL$. Here, the red entries must be permuted into the main diagonal blocks.}
\end{figure}

\begin{lemma} \label{lem:mod_bal}
    Let $\vL$ be an $n \times n$ modular-balanced matrix. Then $\vL \in \B$.
\end{lemma}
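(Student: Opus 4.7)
}

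The plan is to induct on $n$ using the recursive characterization of butterfly matrices from Definition~\ref{def:bmatrix}: we peel off an outer butterfly factor matrix $\vB_n^{(n)}$ and exhibit $\vL = \vB_n^{(n)} \begin{bmatrix} \vL_1 & 0 \\ 0 & \vL_2 \end{bmatrix}$, where $\vL_1, \vL_2$ are $(n/2) \times (n/2)$ modular-balanced permutations. The base case $n=2$ is immediate, since $\B$ at size $2$ is simply the set of all $2 \times 2$ matrices, so any $2 \times 2$ permutation qualifies.

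For the inductive step, write $\vL[:,k] = \ve_{\pi_k}$ and split each row index as $\pi_k = \epsilon_k \cdot (n/2) + \sigma_k$ with $\epsilon_k \in \{0,1\}$ and $\sigma_k \in \{0,\dots,n/2-1\}$. Define $\vL_1$ by $\vL_1[:,k] = \ve_{\sigma_k}$ for $0 \le k < n/2$ and $\vL_2$ analogously on the right half of the columns. The $(n/2)$-balance condition applied to each of the two size-$(n/2)$ chunks of columns says that as $k$ ranges over $\{0,\dots,n/2-1\}$, the residues $\sigma_k = \pi_k \bmod (n/2)$ take every value in $\{0,\dots,n/2-1\}$ exactly once; hence $\vL_1$ (and likewise $\vL_2$) is a genuine permutation matrix. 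Then I define $\vB_n^{(n)}$ so that column $\sigma_k$ of its $k$-th input block routes to the upper or lower half according to $\epsilon_k$: concretely, for each $m < n/2$, let $k_1$ be the unique index in $[0,n/2)$ with $\sigma_{k_1}=m$ and $k_2$ the unique index in $[n/2,n)$ with $\sigma_{k_2 - n/2} = m$, and set $D_1[m,m] = [\epsilon_{k_1}=0]$, $D_3[m,m] = [\epsilon_{k_1}=1]$, $D_2[m,m] = [\epsilon_{k_2}=0]$, $D_4[m,m] = [\epsilon_{k_2}=1]$. A direct computation then shows $\vB_n^{(n)} \begin{bmatrix} \vL_1 & 0 \\ 0 & \vL_2 \end{bmatrix} \ve_k = \ve_{\pi_k}$ for all $k$, so the factorization is correct.

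The point I expect to require the most care is verifying that the constructed $\vB_n^{(n)}$ is a valid (permutation) butterfly factor, i.e.\ that each row, not merely each column, of the $2\times 2$ block $\bigl[\begin{smallmatrix} D_1[m,m] & D_2[m,m] \\ D_3[m,m] & D_4[m,m] \end{smallmatrix}\bigr]$ has a single $1$. This reduces to checking that among the two columns of $\vL$ with $\pi_k \equiv m \pmod{n/2}$ (one from the left half, one from the right half, by the $(n/2)$-balance condition), one has $\pi = m$ and the other has $\pi = m + n/2$; this follows because $\vL$ is a permutation, so rows $m$ and $m+n/2$ of $\vL$ each contain exactly one nonzero. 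Thus in either of the two possible cases, the $2\times 2$ block above is a permutation, making $\vB_n^{(n)}$ a bona fide butterfly factor matrix.

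Finally, I need to propagate modular-balance down to $\vL_1, \vL_2$. For any $2 \le 2^j \le n/2$, a size-$2^j$ chunk of columns of $\vL_1$ corresponds to a size-$2^j$ chunk of columns of $\vL$ (among the first $n/2$ columns), and the $2^j$-residue of $\sigma_k$ equals the $2^j$-residue of $\pi_k$ since $2^j \mid n/2$. Hence the $2^j$-balance condition for $\vL$ transfers to $\vL_1$ (and symmetrically to $\vL_2$), so both halves are modular-balanced and the inductive hypothesis gives $\vL_1, \vL_2 \in \B$. Applying Definition~\ref{def:bmatrix} then yields $\vL \in \B$, completing the proof.
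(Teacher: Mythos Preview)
Your proposal is correct and follows essentially the same approach as the paper: induct on $n$, define $\vL_1,\vL_2$ by reducing row indices modulo $n/2$ on the left and right column halves, use the $(n/2)$-balance condition together with $\vL$ being a permutation to show the outer factor $\vB_n^{(n)}$ is a genuine butterfly factor, and transfer the $2^j$-balance conditions to $\vL_1,\vL_2$ via $2^j\mid n/2$. (One small slip: in the definition of $k_2$ you want $\sigma_{k_2}=m$, not $\sigma_{k_2-n/2}=m$.)
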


\begin{proof}
    We proceed by induction on $n$. The base case $n=2$ is trivial. As our inductive hypothesis, we assume that all modular-balanced matrices of size $\frac{n}{2} \times \frac{n}{2}$ are butterfly matrices of size $\frac{n}{2}$. From Definition \ref{def:bmatrix}, it is sufficient to show that $\vL$ can be decomposed as:
    \[
        \vL = \vB_n \dimbmatrix{ \vL_1 & 0 \\ 0 & \vL_2 }{\vL'},
    \]
    where $\vB_n$ is a butterfly factor of size $n$ and each $\vL_j$ is an $\frac{n}{2} \times \frac{n}{2}$ modular-balanced matrix.

    Define $\vL_1$ and $\vL_2$ such that:
    \[
        \vL_1[i,j] = \vL[i,j] + \vL\left[i + \frac{n}{2}, j\right]
        \hspace{40pt}
        \vL_2[i,j] = \vL\left[i,j + \frac{n}{2} \right] + \vL\left[i + \frac{n}{2}, j + \frac{n}{2} \right].
    \]
    Note that since $\vL$ is a permutation matrix (and thus has exactly one non-zero entry per column), at most one term of each of these sums can be non-zero.

    For sake of contradiction, assume $\vL_1$ is not modular-balanced. Then, for some $2^j \leq \frac{n}{2}$, there are two columns $c_1, c_2$ such that $\floor{\frac{c_1}{2^j}} = \floor{\frac{c_2}{2^j}}$ and such that indices of the non-zero entries of $\vL_1$ in columns $c_1$ and $c_2$ are the same modulo $2^j$. However, from the definition of $\vL_1$, this implies that the indices of the non-zero entries of $\vL$ in columns $c_1$ and $c_2$ are also the same modulo $2^j$, contradicting $\vL$ being modular-balanced. Hence, $\vL_1$ is modular-balanced. An analogous argument (that instead considers columns $c_1 + \frac{n}{2}, c_2 + \frac{n}{2}$ of $\vL$) shows that $\vL_2$ is also modular-balanced.

    To complete the proof, we must argue that $\vB_n$ is a butterfly factor of size $n$. Since each $\vL_i$ is modular-balanced, it is a permutation matrix. Therefore, $\vL'$ has exactly 1 non-zero entry in each of the first $\frac{n}{2}$ rows and columns from $\vL_1$ and exactly 1 non-zero entry in each of the second $\frac{n}{2}$ rows and columns from $\vL_2$. Hence, $\vL'$ is a permutation matrix. Since both $\vL$ and $\vL'$ are permutation matrices, $\vB = \vL \left(\vL'\right)^{-1}$ must also be a permutation matrix. Therefore, we can view $\vB$ as performing a permutation of the rows of $\vL'$ to get $\vL$.

    Consider the $i$'th row of $\vL'$, with $0 \leq i < \frac{n}{2}$. There are two possible cases.

    \textsf{Case 1:} $\vL'[i,:] = \vL[i,:]$\par
    In this case, the column of $\vL$ with a non-zero entry in row $i$ is in the left $\frac{n}{2}$ columns. The column of $\vL$ with a non-zero entry in row $i + \frac{n}{2}$ must, therefore, be in the right $\frac{n}{2}$ columns, otherwise $\vL$ would not satisfy the $\frac{n}{2}$ balance condition. Therefore, $\vL'\left[i+\frac{n}{2},:\right] = \vL\left[i+\frac{n}{2},:\right]$, so we set $\vB[i,i] = \vB\left[i+\frac{n}{2},i+\frac{n}{2}\right] = 1$.

    \textsf{Case 2:} $\vL'[i,:] \not= \vL[i,:]$\par
    By the definition of $\vL'$, $\vL'[i,:] = \vL\left[i+\frac{n}{2},:\right]$. In this case, the column of $\vL$ with a non-zero entry in row $i + \frac{n}{2}$ must be in the left $\frac{n}{2}$ columns. By the $\frac{n}{2}$ balance condition of $\vL$, the column of $\vL$ with a non-zero entry in row $i$ must be in the right $\frac{n}{2}$ columns. Therefore, $\vL'\left[i+\frac{n}{2},:\right] = \vL\left[i,:\right]$, so we set $\vB\left[i,i+\frac{n}{2}\right] = \vB\left[i+\frac{n}{2},i\right] = 1$.

    In both cases, the non-zero entries of $\vB$ fall into the correct diagonal bands (the main diagonal, and the bands $\frac{n}{2}$ away).
    Hence, $\vB$ is a butterfly factor of size $n$.
\end{proof}

Now, we consider the process of transforming $\vP$ into a modular-balanced matrix. We make use of the following lemma.

\begin{figure}[H]
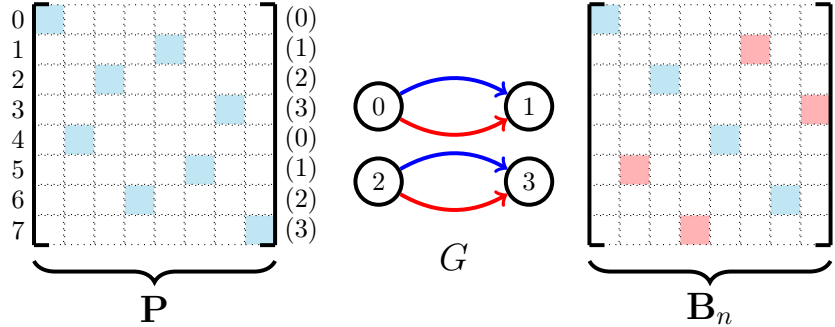
 \label{fig:perm}
    \permfigure
    \caption{First step of balancing $8 \times 8$ bit reversal permutation (a component of the $8 \times 8$ DFT). Red signifies edges that must be flipped. }
\end{figure}

\begin{lemma} \label{lem:bal_factor}
    Let $\vM$ be a $k \times k$ matrix with 1 non-zero entry per column, such that for each $0 \leq m < \frac{k}{2}$, there are exactly 2 columns with non-zero entry in a row with index $\equiv m \left( \hspace{-6pt} \mod \frac{k}{2}\right)$. Then, there is a butterfly factor $\vB_k$ such that $\vM \vB_k = \vM'$, where $\vM'$ meets the $k$ and $\frac{k}{2}$ balance conditions.
\end{lemma}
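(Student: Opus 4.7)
The plan is to exploit the $2\times 2$ diagonal-block structure of $\vB_k = \begin{bmatrix} \vD_1 & \vD_2 \\ \vD_3 & \vD_4 \end{bmatrix}$: right-multiplication by $\vB_k$ only mixes each pair of columns $(j, j+k/2)$ of $\vM$ via the $2\times 2$ matrix formed from their diagonal entries. I will restrict each such block to be either the identity (``keep'') or the off-diagonal swap (``swap''). Because $\vM$ has one nonzero per column, this already ensures that $\vM' = \vM\vB_k$ is a $0/1$ matrix with exactly one nonzero per column; combined with the hypothesis (which forces one nonzero per row as well), $\vM'$ is automatically a permutation matrix satisfying the $k$ balance condition. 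What remains is to choose the keep/swap pattern so that $\vM'$ also meets the $k/2$ balance condition: for each residue class $m \in \{0,\dots,k/2-1\}$, one of the two columns of $\vM'$ whose nonzero lies in a row $\equiv m \pmod{k/2}$ must sit in the left half $\{0,\dots,k/2-1\}$ and the other in the right half.

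Next, I would encode the keep/swap decision as an orientation problem on a graph. Let $r_j$ denote the row of the nonzero in column $j$ of $\vM$, and build a multigraph $G$ on vertex set $V = \{0,\dots,k/2-1\}$ (row-residue classes) by adding, for each butterfly pair $(j, j+k/2)$, an edge between $r_j \bmod (k/2)$ and $r_{j+k/2} \bmod (k/2)$ (a self-loop when these coincide). Give each edge the natural orientation from its ``left-half'' endpoint $j$ to its ``right-half'' endpoint $j+k/2$; swapping the pair reverses this orientation. The hypothesis that every residue class receives exactly two nonzero columns forces every vertex of $G$ to have degree $2$, so $G$ is $2$-regular and decomposes as a disjoint union of self-loops and simple cycles.

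Under this dictionary, the $k/2$ balance condition is exactly the assertion that in the final (post-swap) orientation each vertex of $G$ has in-degree $1$ and out-degree $1$: an ``outgoing'' end at $v$ corresponds to a column landing in the left half of $\vM'$, and an ``incoming'' end to a column landing in the right half. A self-loop supplies one incoming and one outgoing end at its vertex regardless of the swap choice, so loops impose no constraint. For each cycle of length $\ell \ge 2$ I would fix a cyclic traversal and swap precisely those edges whose natural orientation disagrees with the traversal; the resulting consistent cyclic orientation yields in-degree $1$ and out-degree $1$ at every vertex of the cycle. Assembling these keep/swap choices across all components into the diagonals of $\vB_k$ produces the required butterfly factor.

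The hard part is setting up the graph-theoretic correspondence cleanly, in particular verifying (i) that restricting each $2\times 2$ block of $\vB_k$ to identity or swap already suffices to produce a $0/1$ permutation matrix under the hypothesis, and (ii) that the ``one column in each half'' condition at residue class $m$ is equivalent to asking for one incoming and one outgoing edge at vertex $m$ of $G$. Once this translation is in place, the $2$-regularity of $G$ and the elementary cycle-orientation step close the argument immediately.
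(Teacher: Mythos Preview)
Your approach is essentially identical to the paper's: both build a graph on the residue classes modulo $k/2$, add one edge per column pair $(j,\,j+k/2)$ between the residues of their nonzero rows, observe that the hypothesis forces $2$-regularity (hence a disjoint union of cycles), and then orient each cycle consistently so that every vertex has in-degree and out-degree~$1$; the keep/swap choices read off from this orientation assemble into the desired butterfly factor. The paper phrases the edges as already directed (from the left-half column's residue to the right-half column's residue) and speaks of ``reversing'' mis-oriented edges rather than orienting an undirected graph, but the content is the same.

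One small slip: your parenthetical ``which forces one nonzero per row as well'' is not actually a consequence of the stated hypothesis---two columns in the same residue class modulo $k/2$ could have their nonzero entries in the \emph{same} row. The paper's own proof is equally silent on the $k$ balance condition and only argues the $k/2$ part; in every actual application of the lemma the input $\vM$ is already a permutation matrix (so the $k$ condition is automatic under column swaps), and this is the missing implicit assumption rather than something derivable from the residue hypothesis alone.
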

\begin{proof}
    We construct a directed graph $G$ with nodes in $\left[\frac{k}{2}\right]$. For each $0 \leq i < \frac{k}{2}$ we add a directed edge from node $\left( s \mod \frac{k}{2} \right)$ to node $\left( t \mod \frac{k}{2} \right)$ if $\vM[:,i] = \ve_s$ and $\vM\left[:,i+\frac{k}{2}\right] = \ve_t$. Each node has (undirected) degree exactly 2 by the structure of $\vM$. Hence, $G$ is a union of disjoint (undirected) cycles.

    If $\vM$ met the $\frac{k}{2}$ balance condition, then each node would additionally have in-degree exactly 1 and out-degree exactly 1. By reversing edges of $G$ such that each (undirected) cycle becomes a directed cycle, we can achieve this. However, reversing edges corresponds to swapping columns of $\vM$ that are $\frac{k}{2}$ apart. Let $\vB_k$ be the permutation matrix that performs all such swaps. $\vB_k$ has non-zero entries only along the main diagonal and the diagonal bands $\frac{k}{2}$ away, and thus is a butterfly factor of size $k$.
\end{proof}

We are ready to present the decomposition of $\vP$.

\begin{lemma} \label{lem:perm_decomp}
    Let $\vP$ be an $n \times n$ permutation matrix. Then we can decompose $\vP$ into $\vP = \vL\vR$, where $\vL$ is modular-balanced and $\vR \in \B^*$.
\end{lemma}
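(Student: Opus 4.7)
The plan is to construct butterfly factor matrices $\vB_n^{(n)}, \vB_{n/2}^{(n)}, \ldots, \vB_4^{(n)}$, each a real permutation, so that
\[
\vL \;=\; \vP \, \vB_n^{(n)} \vB_{n/2}^{(n)} \cdots \vB_4^{(n)}
\]
is modular-balanced, and then set $\vR = (\vB_4^{(n)})^T \cdots (\vB_n^{(n)})^T$. Because each factor $\vB_k^{(n)}$ is a permutation matrix (so its inverse equals its transpose), we directly get $\vL \vR = \vP$. To show $\vR \in \B^*$, append a trivial $\vB_2^{(n)} = \vI$ (a valid butterfly factor matrix, with every $2 \times 2$ block equal to $\vI_2$): then $\vR^* = \vR^T = \vB_n^{(n)} \vB_{n/2}^{(n)} \cdots \vB_2^{(n)}$ matches Definition~\ref{def:bmatrix}, so $\vR^* \in \B$, i.e., $\vR \in \B^*$.

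The butterfly factor matrices are chosen iteratively by invoking Lemma~\ref{lem:bal_factor} at successively smaller scales. Define $\vP_0 = \vP$ and $\vP_i = \vP_{i-1} \vB_{k_i}^{(n)}$ where $k_i = n / 2^{i-1}$. I claim by induction on $i$ that $\vP_i$ meets the $2^j$ balance condition for every $2^j \ge n/2^i$. The base case $i = 0$ is trivial since any permutation matrix meets the $n$ balance condition. For the inductive step, fix one of the $2^{i-1}$ chunks of $k_i$ consecutive columns of $\vP_{i-1}$. The $k_i$ balance condition (part of the inductive hypothesis) implies the non-zero rows of this chunk cover every residue modulo $k_i$ exactly once, so collapsing row indices modulo $k_i$ yields a $k_i \times k_i$ permutation matrix $\vM$. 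As $\vM$ is a permutation, each residue modulo $k_i/2$ appears among its non-zero rows exactly twice, so the precondition of Lemma~\ref{lem:bal_factor} is satisfied. Lemma~\ref{lem:bal_factor} then supplies a permutation butterfly factor $\vB_{k_i}$ such that $\vM \vB_{k_i}$ meets both the $k_i$ and $k_i/2$ balance conditions; take these factors (one per chunk) as the diagonal blocks of $\vB_{k_i}^{(n)}$.

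It remains to check that multiplication by $\vB_{k_i}^{(n)}$ preserves the balance conditions at all scales $\ge k_i$ while installing the new $k_i/2$ balance. Because $\vB_{k_i}^{(n)}$ is block-diagonal with blocks of size $k_i$, right multiplication permutes columns only within each chunk of $k_i$ consecutive columns, leaving the multiset of non-zero rows in every coarser chunk unchanged; hence the coarser balance conditions survive. The row-collapsing reduction used above is a bijection between residues modulo any divisor of $k_i$ in the original chunk and in the collapsed $\vM$, so the new $k_i/2$ balance that Lemma~\ref{lem:bal_factor} establishes for $\vM \vB_{k_i}$ lifts back to the corresponding chunk of $\vP_i$. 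Iterating for $i = 1, 2, \ldots, \log n - 1$ produces $\vP_{\log n - 1}$ satisfying every balance condition from $n$ down to $2$, i.e., modular-balanced.

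The main obstacle is the indexing bookkeeping: one must carefully verify that Lemma~\ref{lem:bal_factor}---stated for $k \times k$ matrices---can be applied to each $n \times k_i$ chunk via row-collapsing modulo $k_i$, and that the balance conditions on the collapsed matrix correspond bijectively to the balance conditions on the original chunk. Once this correspondence is pinned down and the invariant that coarser balance conditions survive block-diagonal right multiplication is recorded, the induction proceeds mechanically and the decomposition $\vP = \vL \vR$ emerges directly.
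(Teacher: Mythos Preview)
Your proposal is correct and follows essentially the same approach as the paper: iteratively apply Lemma~\ref{lem:bal_factor} at scales $k=n, n/2, \ldots$ to build a butterfly matrix $\vB$ (a permutation) with $\vL = \vP\vB$ modular-balanced, then take $\vR = \vB^{-1} = \vB^*$. Your write-up is in fact more careful than the paper's on the point you flag as the main obstacle---the paper simply says ``consider the first and last $n/2$ columns independently'' without spelling out the row-collapsing mod $k_i$ that lets Lemma~\ref{lem:bal_factor} (stated for square $k\times k$ matrices) apply to each $n\times k_i$ chunk, whereas you make this reduction and the lifting of balance conditions explicit.
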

\begin{proof}
    We repeatedly apply Lemma~\ref{lem:bal_factor}. First, we conclude that there is a butterfly factor $\vB_n$ such that
    \[
        \vP \vB_n = \vP',
    \]
    where $\vP'$ meets the $\frac{n}{2}$ balance condition. Now, we consider the first and last $\frac{n}{2}$ columns of $\vP'$ independently. We can again apply Lemma~\ref{lem:bal_factor} (twice) to conclude that there are butterfly factors $\left[\vB_{\frac{n}{2}}\right]_1,\left[\vB_{\frac{n}{2}}\right]_2$ such that
    \[
       \vP \vB_n
       \begin{bmatrix}
        \left[\vB_{\frac{n}{2}}\right]_1 & 0 \\ 0 & \left[\vB_{\frac{n}{2}}\right]_2
       \end{bmatrix}
       = \vP \vB^{(n)}_n \vB^{(n)}_{\frac{n}{2}} = \vP'',
    \]
    where $\vP''$ meets the $\frac{n}{2}$ and $\frac{n}{4}$ balance conditions.

    We continue this process until we obtain a matrix that meets all of the balance conditions. Our final equation is of the form:
    \[
        \vP \cdot
        \vB_{n}^{(n)}
        \vB_{\frac{n}{2}}^{(n)}
        \hdots
        \vB_{2}^{(n)}
        = \vP \vB = \vL,
    \]
    where $\B$ is a butterfly matrix and $\vL$ is a modular-balanced matrix. Let $\vR = \vB^{-1} = \vB^*$ (since $\vB$ is a permutation matrix, and thus is orthogonal) and hence $\vR\in\B^*$. Then $\vP = \vL\vR$, as desired.
\end{proof}

Theorem \ref{thm:perm} follows immediately from Lemmas \ref{lem:perm_decomp} and \ref{lem:mod_bal}.

\begin{remark}
  The result that $\BBS$ can represent a permutation has been known in the
  literature on switching networks~\citep{dally2004principles}.
  The matrix class $\BBS$ is called the Benes network, and the ability to
  represent any permutation is called
  rearrangeability~\citep{benevs1964permutation,benevs1965mathematical}.
\end{remark}

We now show that permutations are also in $\BSB$.
We start with the relationship between butterfly matrices and the bit-reversal
permutation.

\begin{lemma} \label{lem:b_bit_reversal}
  Let $\vP_\mathrm{br}$ be the $n \times n$ bit-reversal permutation matrix where $n$
  is some power of 2, and let $\vM_1 \in \B$ be an $n \times n$ butterfly matrix.
  Then there is some butterfly matrix $\vM_2 \in \B$ such that
  \begin{equation*}
    \vM_1^* = \vP_\mathrm{br} \vM_2 \vP_\mathrm{br}.
  \end{equation*}
\end{lemma}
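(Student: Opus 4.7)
}

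The plan is to observe that the bit-reversal permutation implements, via conjugation, the ``size reversal'' map $k \mapsto 2n/k$ on butterfly factor matrices, and then to pair this with the fact that taking the conjugate transpose of a butterfly matrix simply reverses the order of its factors. Composing these two observations will turn $\vM_1^*$ back into a product of butterfly factor matrices in the correct (decreasing block-size) order.

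First, I would expand $\vM_1 = \vB_n^{(n)} \vB_{n/2}^{(n)} \cdots \vB_2^{(n)}$ as in Definition~\ref{def:bmatrix}, so that
\[
  \vM_1^* = (\vB_2^{(n)})^* (\vB_4^{(n)})^* \cdots (\vB_n^{(n)})^*,
\]
a product of the conjugate transposes in the opposite order (so now the block sizes increase from $2$ up to $n$). Using $\vP_\mathrm{br}^2 = \vI$ and inserting $\vP_\mathrm{br}\vP_\mathrm{br}$ between consecutive factors gives
\[
  \vP_\mathrm{br} \vM_1^* \vP_\mathrm{br} = \prod_{j=1}^{\log n}\bigl(\vP_\mathrm{br} (\vB_{2^j}^{(n)})^* \vP_\mathrm{br}\bigr),
\]
so it suffices to understand the effect of $\vP_\mathrm{br}(\cdot)\vP_\mathrm{br}$ on a single butterfly factor matrix (its conjugate transpose is itself a butterfly factor matrix of the same block size, since each $\vD_i$ in Definition~\ref{def:bfactor} is diagonal).

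The main combinatorial step is to show that $\vP_\mathrm{br} \vB_k^{(n)} \vP_\mathrm{br}$ is a butterfly factor matrix $\vB_{2n/k}^{(n)}$. For this I would identify the allowed nonzero positions of $\vB_k^{(n)}$ in binary: an entry at $(i,j)$ can be nonzero precisely when the binary expansions of $i$ and $j$ agree in every bit position except possibly position $\log k - 1$ (the ``same block'' constraint fixes the top $\log(n/k)$ bits, and within a block the only allowed displacement is by $k/2$, i.e.\ a flip of bit $\log k - 1$). Conjugating by $\vP_\mathrm{br}$ relabels bit position $m$ as bit position $\log n - 1 - m$, so the single permitted disagreement moves from position $\log k - 1$ to position $\log(n/k)$, which is precisely the allowed sparsity pattern of $\vB_{2n/k}^{(n)}$. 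Together with the fact that conjugation by a permutation preserves the diagonal structure of each $\vD_i$ (it just relabels which diagonal entry sits in which block), this proves the desired identity.

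Putting the pieces together, as $j$ ranges over $1,\ldots,\log n$ the block sizes $2^j$ become $2n/2^j = 2^{\log n + 1 - j}$, i.e.\ they run from $n$ down to $2$. Hence
\[
  \vP_\mathrm{br} \vM_1^* \vP_\mathrm{br} = \vB_n^{(n)\,\prime} \vB_{n/2}^{(n)\,\prime} \cdots \vB_2^{(n)\,\prime}
\]
for some butterfly factor matrices, which by Definition~\ref{def:bmatrix} is a butterfly matrix $\vM_2 \in \B$; multiplying on the left and right by $\vP_\mathrm{br}$ gives the claim. The one step that requires care is the bit-position bookkeeping in the sparsity/value analysis of $\vP_\mathrm{br}\vB_k^{(n)}\vP_\mathrm{br}$; everything else is essentially formal once that is in hand.
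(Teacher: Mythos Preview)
Your proof is correct and follows essentially the same idea as the paper's proof sketch: conjugation by $\vP_{\mathrm{br}}$ sends a butterfly factor matrix of block size $k$ to one of block size $2n/k$, so it reverses the order of block sizes and turns $\vM_1^*$ (whose factors have increasing block size) back into a butterfly matrix. The paper phrases this algorithmically (``indices $n/2$ apart become $1$ apart'' under bit-reversal), whereas you make it precise via the bit-position characterization of the sparsity pattern; your version is in fact a cleaner formalization of what the paper only sketches. One minor remark: your parenthetical about conjugation ``preserving the diagonal structure of each $\vD_i$'' is not quite the right picture---the original blocks are destroyed and new ones form---but your sparsity argument already suffices, since any matrix supported on pairs $(i,j)$ differing in at most one fixed bit is automatically a butterfly factor matrix.
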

\begin{proof}[Proof sketch]
  For any input vector $\vx$ of length $n$, to perform $\vM_1^* \vx$, we trace
  through $\log_2 n$ steps of the multiplication algorithm.
  At each step, we perform $2 \times 2$ matrix multiplication on elements of $\vx$
  whose indices are $n/2$ apart (e.g.\ indices $0$ and $n/2$, $1$ and $n/2+1$,
  etc.), then $n/4$ apart, and so on, till indices are that $1$ apart.
  If we apply the bit-reversal permutation on $\vx$, then indices that are $n/2$
  apart will become 1 apart, indices that are $n/4$ apart will become 2 apart,
  and so on.
  So the multiplication algorithm $\vM_1^* \vx$ is equivalent to applying
  bit-reversal, then multiplying the permuted vector with another butterfly
  matrix (i.e.\ $2 \times 2$ matrix multiplication on indices that are 1 apart, then
  2 apart, and so on, till indices that are $n/2$ apart).
  Finally we need to do another bit-reversal permutation to put all the indices
  back to the original order.
  If we call this other butterfly matrix $\vM_2$, then we have shown that
  $\vM_1^* \vx = \vP_\mathrm{br} \vM_2 \vP_\mathrm{br} \vx$.
  This holds for all $\vx$ (for the same matrix $\vM_2$), so we have
  $\vM_1^* = \vP_\mathrm{br} \vM_2 \vP_\mathrm{br}$.
\end{proof}

\begin{remark}
  Lemma~\ref{lem:b_bit_reversal} explains the connection between the two most
  common fast Fourier transform algorithm, decimation in time and decimation in
  frequency.
  Using the decimation-in-time FFT, we can write the DFT matrix $\vF$ as product
  of a butterfly matrix $\vM_1$ and the bit-reversal permutation (see
  Section~\ref{sec:bbs-vs-bp}):
  \begin{equation*}
    \vF = \vM_1 \vP_\mathrm{br}.
  \end{equation*}
  Taking conjugate transpose, we obtain $\vF^* = \vP_\mathrm{br} \vM_1^*$ (recall
  that $\vP_\mathrm{br}$ is its own transpose/inverse).
  On the other hand, $\vF^*$ is just a scaled version of the inverse DFT
  matrix, so apply decimation-in-time FFT to the inverse DFT, we can write
  $\vF^* = \vM_2 \vP_\mathrm{br}$ for some other butterfly matrix $\vM_2$.
  Hence $\vP_\mathrm{br} \vM_1^* = \vM_2 \vP_\mathrm{br}$, and thus
  $\vP_\mathrm{br} \vM_1^* \vP_\mathrm{br} = \vM_2$ (for these particular
  butterfly matrices $\vM_1$ and $\vM_2$).
  Note that this yields another decomposition of the DFT matrix,
  $\vF = \vP_\mathrm{br} \vM_2^*$, which is exactly the decimation-in-frequency
  FFT algorithm.
\end{remark}

We are ready to show that permutations are in $\BSB$.

\begin{lemma} \label{lem:perm_bsb}
  Let $\vP$ be an $n \times n$ permutation matrix (with $n$ a power of 2). Then there
  are butterfly matrices $\vM_1, \vM_2 \in \B$ such that $\vP = \vM_1^* \vM_2$.
\end{lemma}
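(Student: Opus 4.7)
The plan is to reduce $\vP \in \BSB$ to the already-established fact that $\vP \in \BBS$ (Theorem~\ref{thm:perm}) by using the bit-reversal conjugation identity of Lemma~\ref{lem:b_bit_reversal}, which exchanges the roles of $\B$ and $\B^*$. The key structural observation is that conjugation by $\vP_\mathrm{br}$ gives a bijection between $\B$ and $\B^*$: Lemma~\ref{lem:b_bit_reversal} directly yields $\vP_\mathrm{br} \B^* \vP_\mathrm{br} \subseteq \B$ (set $\vM_2 = \vP_\mathrm{br} \vM_1^* \vP_\mathrm{br}$), and taking the conjugate transpose of $\vM_1^* = \vP_\mathrm{br} \vM_2 \vP_\mathrm{br}$ (using $\vP_\mathrm{br}^* = \vP_\mathrm{br}$) gives the other inclusion $\vP_\mathrm{br} \B \vP_\mathrm{br} \subseteq \B^*$. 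Establishing these two identities cleanly is the only nontrivial bookkeeping step.

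Next I would apply Theorem~\ref{thm:perm} not to $\vP$ itself but to the conjugated permutation $\vP' := \vP_\mathrm{br} \vP \vP_\mathrm{br}$, which is again an $n \times n$ permutation matrix (the product of three permutations). This yields a factorization $\vP' = \vA \vB^*$ with $\vA, \vB \in \B$.

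Inserting $\vP_\mathrm{br}^2 = \vI$ in the middle and grouping, I would write
\begin{equation*}
\vP \;=\; \vP_\mathrm{br} \vP' \vP_\mathrm{br} \;=\; \vP_\mathrm{br} \vA \vB^* \vP_\mathrm{br} \;=\; (\vP_\mathrm{br} \vA \vP_\mathrm{br})(\vP_\mathrm{br} \vB^* \vP_\mathrm{br}).
\end{equation*}
By the bijection established above, $\vP_\mathrm{br} \vA \vP_\mathrm{br} \in \B^*$ and $\vP_\mathrm{br} \vB^* \vP_\mathrm{br} \in \B$. Naming these $\vM_1^*$ and $\vM_2$ respectively (with $\vM_1, \vM_2 \in \B$) produces the desired decomposition $\vP = \vM_1^* \vM_2 \in \BSB$.

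The main obstacle, such as it is, is simply verifying the bidirectional $\vP_\mathrm{br}$-conjugation correspondence from Lemma~\ref{lem:b_bit_reversal}, since the lemma as stated only asserts one direction. Once that is in hand, the rest of the argument is a one-line algebraic rearrangement, mirroring the way the $\BBS$ result was obtained in Lemma~\ref{lem:perm_decomp} but with the two factor classes swapped.
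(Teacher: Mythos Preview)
Your proposal is correct and follows essentially the same approach as the paper: both apply Theorem~\ref{thm:perm} to the conjugated permutation $\vP_\mathrm{br}\vP\vP_\mathrm{br}$ and then use Lemma~\ref{lem:b_bit_reversal} (twice) to swap $\B$ and $\B^*$ via bit-reversal conjugation. The only cosmetic difference is that you package the two applications of Lemma~\ref{lem:b_bit_reversal} as a single bijection statement upfront, whereas the paper invokes the lemma inline on each factor.
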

\begin{proof}
  Consider the permutation $\tilde{\vP} = \vP_\mathrm{br} \vP \vP_\mathrm{br}$.
  By Theorem~\ref{thm:perm}, there are some butterfly matrices
  $\tilde{\vM_1}, \tilde{\vM_2} \in \B$ such that
  $\tilde{\vP} = \tilde{\vM_1} \tilde{\vM_2}^*$.
  Applying Lemma~\ref{lem:b_bit_reversal}, we can replace $\tilde{\vM_2}^*$ with
  $\vP_\mathrm{br} \vM_2 \vP_\mathrm{br}$ for some butterfly matrix
  $\vM_2 \in \B$.
  We thus have:
  \begin{equation*}
    \vP_\mathrm{br} \vP \vP_\mathrm{br} = \tilde{\vM_1} \vP_\mathrm{br} \vM_2 \vP_\mathrm{br}.
  \end{equation*}
  Pre- and post-multiply both sides by $\vP_\mathrm{br}$ (which is its own inverse):
  \begin{equation*}
    \vP = \vP_\mathrm{br} \tilde{\vM_1} \vP_\mathrm{br} \vM_2.
  \end{equation*}
  Applying Lemma~\ref{lem:b_bit_reversal} again, we can replace
  $\vP_\mathrm{br} \tilde{\vM_1} \vP_\mathrm{br} $ with $\vM_1^*$ for some
  butterfly matrix $\vM_1 \in \B$. Thus:
  \begin{equation*}
    \vP = \vM_1^* \vM_2.
  \end{equation*}
\end{proof}

\section{$\BBS$ Closure Lemmas} \label{sec:hierarchy-closure}

Here, we present some basic facts of the $\BBS$ hierarchy that will be useful for later constructions. For simplicity, we assume (WLOG via 0-padding) that all matrices are square matrices with size that is a power of 2.

\begin{lemma} \label{lem:dmult}
    If $\vM \in \B$ (or $\vM\in\B^*$), then $\vD\vM, \vM\vD \in \B$ ($\B^*$ resp.) for any diagonal matrix $\vD$.
\end{lemma}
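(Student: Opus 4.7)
The plan is to absorb the diagonal factor $\vD$ into the boundary butterfly factor matrix of $\vM$'s defining product. Recall from Definition~\ref{def:bmatrix} that $\vM \in \B$ of size $n$ means $\vM = \vB_n^{(n)} \vB_{n/2}^{(n)} \cdots \vB_2^{(n)}$, a product of butterfly factor matrices ordered from block size $n$ down to $2$. I would handle right multiplication $\vM \vD$ by fusing $\vD$ into the rightmost factor $\vB_2^{(n)}$, and left multiplication $\vD \vM$ by fusing $\vD$ into the leftmost factor $\vB_n^{(n)}$; in each case the rest of the product is untouched, so it suffices to show that the modified boundary factor is still a butterfly factor matrix.

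The key observation is that butterfly factor matrices (Definition~\ref{def:bfactor}) are closed under appropriate diagonal rescalings on the matching side. For the right-multiplication case, $\vB_2^{(n)}$ is block-diagonal with $2\times 2$ blocks, so post-multiplying by $\vD$ merely rescales the two columns of each block; each $1 \times 1$ ``diagonal'' entry $\vD_i$ of a size-$2$ butterfly factor simply absorbs a scalar, and the resulting matrix is again a valid $\vB_2^{(n)}$. For the left-multiplication case, $\vB_n^{(n)}$ is itself a single butterfly factor of the form $\bigl[\begin{smallmatrix} \vD_1 & \vD_2 \\ \vD_3 & \vD_4\end{smallmatrix}\bigr]$ with each $\vD_i$ an $(n/2)\times(n/2)$ diagonal matrix. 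Writing $\vD$ as the block-diagonal of its top and bottom halves $\vD', \vD''$, the product $\vD\, \vB_n^{(n)}$ has blocks $\vD'\vD_i$ and $\vD''\vD_i$, each still diagonal, so it remains a butterfly factor of size $n$. Either way, the modified product lies in $\B$.

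For the $\B^*$ statement, I would reduce to the $\B$ case by conjugate transposition: if $\vM \in \B^*$ then $\vM^* \in \B$, and $\vD^*$ is still diagonal, so $\vD^* \vM^*$ and $\vM^* \vD^*$ lie in $\B$ by the preceding argument; taking adjoints once more yields $\vM \vD, \vD \vM \in \B^*$.

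I do not expect a genuine obstacle here --- the lemma is essentially bookkeeping about where diagonal factors can be absorbed. The only point that requires mild care is correctly matching the block partition of $\vD$ to the block structure of the boundary butterfly factor matrix on the left-multiplication side; once that partition is lined up, closure is immediate from the definitions.
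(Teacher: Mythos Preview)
Your proposal is correct and takes essentially the same approach as the paper: both absorb the diagonal factor into the outermost butterfly factor matrix (leftmost for $\vD\vM$, rightmost for $\vM\vD$), noting that scaling rows or columns of a butterfly factor matrix yields another butterfly factor matrix. Your reduction of the $\B^*$ case via conjugate transposition is a minor cosmetic variation; the paper simply states the $\B^*$ case follows symmetrically.
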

\begin{proof}
    Left multiplication by a diagonal matrix scales the rows of $\vM$ by the corresponding diagonal entries. The same can be achieved by scaling all entries the leftmost butterfly factor matrix. Similarly, right multiplication by a diagonal matrix scales the columns of $\vM$, which can be achieved by scaling all entries in the columns of the rightmost butterfly factor matrix.
\end{proof}

\begin{lemma} \label{lem:prod}
    Let $\vA,\vB \in \mathbb{F}^{n \times n}$. If $\vA \in (\BBS)^{w_1}_e$ and $\vB \in (\BBS)^{w_2}_e$ then $\vA\vB \in (\BBS)^{w_1 + w_2}_e$.
\end{lemma}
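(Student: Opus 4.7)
\medskip

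\noindent\textbf{Proof plan for Lemma~\ref{lem:prod}.}

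The plan is to unfold the definition of $(\BBS)^w_e$ for both factors and then absorb the ``projection'' that arises in the middle into a single $\B^*$ factor via Lemma~\ref{lem:dmult}. Concretely, by the definition of $(\BBS)^w_e$ there exist $en \times en$ matrices $\vE_A \in (\BBS)^{w_1}$ and $\vE_B \in (\BBS)^{w_2}$ with
\[
    \vA = \vS \vE_A \vS^T, \qquad \vB = \vS \vE_B \vS^T,
\]
where $\vS = \begin{bmatrix} \vI_n & 0 & \hdots & 0 \end{bmatrix}$. Multiplying gives
\[
    \vA\vB = \vS \vE_A (\vS^T \vS) \vE_B \vS^T,
\]
so the whole task reduces to showing that $\vE_A (\vS^T \vS) \vE_B \in (\BBS)^{w_1+w_2}$.

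The key observation is that $\vS^T \vS$ is the $en \times en$ diagonal matrix with $1$ in its first $n$ diagonal entries and $0$ elsewhere, hence a diagonal matrix. Writing
\[
    \vE_A = \vP_{w_1} \vQ_{w_1}^* \, \vP_{w_1-1} \vQ_{w_1-1}^* \cdots \vP_1 \vQ_1^*,
\]
with each $\vP_i, \vQ_i \in \B$ (this is the definition of being in $(\BBS)^{w_1}$), the rightmost factor $\vQ_1^*$ lies in $\B^*$. By Lemma~\ref{lem:dmult}, right-multiplying an element of $\B^*$ by a diagonal matrix keeps it in $\B^*$, so $\vQ_1^* (\vS^T \vS) \in \B^*$. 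Call this new factor $\vQ_1'^*$. Then
\[
    \vE_A (\vS^T \vS) = \vP_{w_1} \vQ_{w_1}^* \cdots \vP_1 \vQ_1'^* \in (\BBS)^{w_1},
\]
since we have only replaced the rightmost $\B^*$ factor with another $\B^*$ factor.

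Concatenating with $\vE_B \in (\BBS)^{w_2}$ yields a product of $w_1 + w_2$ matrices from $\BBS$, so $\vE' := \vE_A (\vS^T \vS) \vE_B \in (\BBS)^{w_1+w_2}$. Therefore $\vA \vB = \vS \vE' \vS^T$ exhibits $\vA \vB$ as an element of $(\BBS)^{w_1+w_2}_e$, as desired. I do not anticipate any real obstacle here; the only subtle point is recognizing that $\vS^T \vS$ is diagonal (rather than the identity) so that Lemma~\ref{lem:dmult} applies and allows the extra projection to be absorbed without inflating the width.
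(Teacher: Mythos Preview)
Your proposal is correct and follows essentially the same approach as the paper: both recognize that $\vS^T\vS$ is the diagonal projection $\begin{bmatrix}\vI_n & 0 \\ 0 & 0\end{bmatrix}$ and absorb it into an adjacent butterfly factor via Lemma~\ref{lem:dmult}. The only cosmetic difference is that the paper absorbs the diagonal on the \emph{left} of $\vE_{\vB}$ (and also harmlessly inserts an extra copy on the left of $\vE_{\vA}$, since $\vS\begin{bmatrix}\vI_n & 0 \\ 0 & 0\end{bmatrix}=\vS$), whereas you absorb it on the \emph{right} of $\vE_A$; both are equally valid applications of Lemma~\ref{lem:dmult}.
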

\begin{proof}
    Let $\vE_{\vA}, \vE_{\vB} \in \mathbb{F}^{en \times en}$ be defined such that $\vA = \vS \vE_{\vA} \vS^T$, $\vB = \vS \vE_{\vB} \vS^T$ (with $\vS$ as in Definition~\ref{def:hierarchy}). Then
    \[
        \vA\vB = \vS \dimbmatrix{\vI_n & 0 \\ 0 & 0}{en \times en} \vE_{\vA} \dimbmatrix{\vI_n & 0 \\ 0 & 0}{en \times en} \vE_{\vB} \: \vS^T
    \]
    $\begin{bmatrix} \vI_n & 0 \\ 0 & 0 \end{bmatrix} \vE_{\vA} \in (\BBS)^{w_1}, \begin{bmatrix} \vI_n & 0 \\ 0 & 0 \end{bmatrix} \vE_{\vB} \in (\BBS)^{w_2}$ by Lemma~\ref{lem:dmult}. Hence, $\vA\vB \in (\BBS)^{w_1 + w_2}_{e}$ by Definition~\ref{def:hierarchy}.
\end{proof}

\begin{lemma} \label{lem:diag}
    Let $\vA_1,\hdots,\vA_m \in \mathbb{F}^{k \times k}$. If $\vA_1,\hdots,\vA_m \in (\BBS)^w_e$ then $\mathrm{Diag}(\vA_1,\hdots,\vA_m) \in (\BBS)^{w+2}_{e}$.
\end{lemma}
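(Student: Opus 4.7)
The plan is to unfold the $(\BBS)^w_e$ representation of each $\vA_i$, build a single large block-diagonal $(\BBS)^w$ matrix out of the inner parts, and then pay two extra $\BBS$ factors to realign the outer selection matrix with the canonical one required by Definition~\ref{def:hierarchy}. Concretely, write each $\vA_i = \vS_k \vE_i \vS_k^T$ with $\vE_i \in (\BBS)^w$ of size $ek \times ek$, and set $n = mk$. Then
\[
  \mathrm{Diag}(\vA_1, \ldots, \vA_m) \;=\; \vT \, \mathrm{Diag}(\vE_1, \ldots, \vE_m) \, \vT^T,
\]
where $\vT = \mathrm{Diag}(\vS_k, \ldots, \vS_k)$ is an $n \times en$ block-diagonal selection matrix. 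The three substeps are: (i) show $\mathrm{Diag}(\vE_1, \ldots, \vE_m) \in (\BBS)^w$ as an $en \times en$ matrix; (ii) replace $\vT$ by $\vS_n \vP$ for a suitable permutation $\vP$; (iii) absorb $\vP$ and $\vP^T$ via Theorem~\ref{thm:perm} and Lemma~\ref{lem:prod}.

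\textbf{Step (i): block diagonal of butterflies is a butterfly.} First I would prove the key structural fact that if $\vB^{(ek)}_1, \ldots, \vB^{(ek)}_m \in \B$, then $\mathrm{Diag}(\vB^{(ek)}_1, \ldots, \vB^{(ek)}_m) \in \B$ as an $emk \times emk$ matrix. Factoring each $\vB^{(ek)}_i = \vB_{ek,i}^{(ek)} \cdots \vB_{2,i}^{(ek)}$ and pulling the $\mathrm{Diag}$ inside the product, each level-$j$ factor becomes $\mathrm{Diag}(\vB_{j,1}^{(ek)}, \ldots, \vB_{j,m}^{(ek)})$, which is itself a butterfly factor matrix $\vB_j^{(emk)}$ because block-diagonal stacks of butterfly factors of size $j$ remain butterfly factors of size $j$. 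To promote this product into a full butterfly of size $emk$, prepend butterfly factor matrices of block sizes $2ek, 4ek, \ldots, emk$ set to the identity (which is a valid butterfly factor via $\vD_1 = \vD_4 = \vI$, $\vD_2 = \vD_3 = 0$). Applying this observation levelwise to the $w$ factors of $\vE_i = \vE_{i,w} \cdots \vE_{i,1}$, and separately to the $\B$ and $\B^*$ components of each $\vE_{i,j} \in \BBS$, yields $\mathrm{Diag}(\vE_1, \ldots, \vE_m) \in (\BBS)^w$.

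\textbf{Step (ii)--(iii): realignment and product rule.} Next I would construct a permutation $\vP$ of size $en \times en$ such that $\vS_n \vP = \vT$. Such a $\vP$ exists: its first $n$ rows are forced to equal the rows of $\vT$ (they send the "active" coordinates $(i, j)$ with $1 \le j \le k$ within each block of $ek$ to positions $(i-1)k + j$), and the remaining $en - n$ rows may be filled by any permutation of the leftover coordinates. Then
\[
  \mathrm{Diag}(\vA_1, \ldots, \vA_m) \;=\; \vS_n \bigl[\vP \, \mathrm{Diag}(\vE_1, \ldots, \vE_m) \, \vP^T\bigr] \vS_n^T.
\]
By Theorem~\ref{thm:perm}, $\vP, \vP^T \in \BBS$, and two applications of Lemma~\ref{lem:prod} at expansion factor $1$ show the bracketed matrix lies in $(\BBS)^{w+2}$. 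By Definition~\ref{def:hierarchy}, this places $\mathrm{Diag}(\vA_1, \ldots, \vA_m) \in (\BBS)^{w+2}_e$.

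\textbf{Expected obstacle.} I do not anticipate a deep difficulty; the conceptual work is concentrated in step (i), whose bookkeeping has to be done carefully at every level of the butterfly factorization to confirm that the identity padding at block sizes $2ek, \ldots, emk$ is consistent with the recursive definition of $\B$. The rest is essentially an application of results already proved: Theorem~\ref{thm:perm} to handle the realignment permutation and Lemma~\ref{lem:prod} to track the width growth, which is what accounts for the $+2$ in the final exponent.
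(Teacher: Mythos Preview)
Your proposal is correct and follows essentially the same approach as the paper: unfold each $\vA_i = \vS_k \vE_i \vS_k^T$, assemble the block diagonal $\mathrm{Diag}(\vE_1,\ldots,\vE_m)\in(\BBS)^w$ by running the butterfly factorizations in parallel (padding with identity factors at the larger block sizes), and then conjugate by a permutation $\vP$ to convert the block-diagonal selector $\vT$ into the canonical $\vS_n$, picking up the extra $+2$ in width from $\vP,\vP^T\in\BBS$ via Theorem~\ref{thm:perm} and Lemma~\ref{lem:prod}. The paper's proof is terser on your step~(i) (it simply asserts the decompositions can be done ``in parallel''), but the construction and the accounting are identical.
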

\begin{proof}
    For each $1 \leq i \leq m$, let $\vE_{\vA_i}\in \mathbb{F}^{ek \times ek}$ be defined such that $\vA_i = \vS \vE_{\vA_i} \vS^T$ (with $\vS$ as in Definition~\ref{def:hierarchy}). Then
    \[
       \begin{bmatrix}
        \vA_1 & 0 & \hdots & 0 \\
        0 & \vA_2 & \hdots & 0 \\
        \vdots & \vdots & \ddots & 0 \\
        0 & 0 & \hdots & \vA_m
       \end{bmatrix}
       =
       \vS \vP
       \begin{bmatrix}
        \vE_{\vA_1} & 0 & \hdots & 0 \\
        0 & \vE_{\vA_2} & \hdots & 0 \\
        \vdots & \vdots & \ddots & 0 \\
        0 & 0 & \hdots & \vE_{\vA_m}
       \end{bmatrix}
       \vP^T \vS^T
    \]
    where $\vP$ is a permutation that that moves the first $k$ rows of each $\vE_{\vA_i}$ (in order) into the top $mk$ rows. From Theorem~\ref{thm:perm}, $\vP \in \BBS$, (and so is $\vP^T$, also a permutation). Within the RHS block matrix, the decompositions of each $\vE_{\vA_i}$ can be done in parallel, requiring total width $w$. Hence, $\mathrm{Diag}(\vA_1,\hdots,\vA_m) \in (\BBS)^{w+2}_{e}$, as desired.
\end{proof}
\begin{remark} \label{rem:diag}
    If $e=1$ in Lemma \ref{lem:diag}, then $\vP$ is unnecessary. Hence, $\mathrm{Diag}(\vA_1,\hdots,\vA_m) \in (\BBS)^{w}$.
\end{remark}

\begin{lemma} \label{lem:sum}
    Let $\vA_1, \hdots, \vA_m$ be $k \times k$ matrices in $(\BBS)^w_e$ then $\sum_{i=1}^{m}\vA_i \in (\BBS)^{mw}_{4e}$.
\end{lemma}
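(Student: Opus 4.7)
The plan is to exploit the algebraic identity
\[
\prod_{i=1}^{m}\begin{bmatrix}\vI_k & \vA_i\\ 0 & \vI_k\end{bmatrix}
= \begin{bmatrix}\vI_k & \sum_{i=1}^{m}\vA_i\\ 0 & \vI_k\end{bmatrix},
\]
which converts the $m$-fold sum into an $m$-fold product. Since Lemma~\ref{lem:prod} makes products width-additive, this will yield a total width of $mw$ as long as each block-triangular factor can itself be captured in $(\BBS)^{w}_{2e}$.

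My first step would therefore be to show that the $2k\times 2k$ factor $\vT_i := \begin{bmatrix}\vI_k & \vA_i\\ 0 & \vI_k\end{bmatrix}$ lies in $(\BBS)^{w}_{2e}$. Writing $\vA_i = \vS_1\vE_i\vS_1^T$ with $\vE_i\in(\BBS)^{w}$ of size $ek\times ek$ (where $\vS_1=[\vI_k\mid 0]_{k\times ek}$), I would construct a $2ek\times 2ek$ expansion of $\vT_i$ by viewing its state as two length-$ek$ blocks: an ``accumulator'' block holding $\vu$ in its first $k$ entries and an ``input'' block holding $\vv$ in its first $k$ entries. The $w$ butterfly-product factors of $\vE_i$ would act on the input block to produce $\vA_i\vv$, while the accumulator block is preserved; the only non-identity coupling between the two blocks is a sparse accumulation that I would fold into the outermost butterfly layer of $\vE_i$'s factorization, keeping the width at $w$.

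Next, I would apply Lemma~\ref{lem:prod} to the product $\vT_m\cdots\vT_1$, placing it in $(\BBS)^{mw}_{2e}$. Then I would move $\sum_i\vA_i$ into the top-left corner by right-multiplying by the block swap $\vQ = \begin{bmatrix}0 & \vI_k\\ \vI_k & 0\end{bmatrix}$, which is a permutation and hence in $\BBS$ by Theorem~\ref{thm:perm}; the resulting matrix is $\begin{bmatrix}\sum_i\vA_i & \vI_k\\ \vI_k & 0\end{bmatrix}$. Since a $(\BBS)^{mw}_{2e}$ representation of a $2k\times 2k$ matrix lives inside an ambient $4ek\times 4ek$ matrix whose top-left $k\times k$ corner is now $\sum_i\vA_i$, I would conclude $\sum_i\vA_i\in(\BBS)^{mw}_{4e}$, absorbing the constant-width cost of the swap.

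The hardest part of this plan will be Step 1: keeping the width at exactly $w$ (rather than $w+O(1)$) per block-triangular factor. The passthrough of $\vv$ is free because the identity admits butterfly-product representations of any width, but the copy-and-accumulate at the interface between the two blocks must be scheduled to piggyback on the boundary butterfly layers of $\vE_i$'s factorization rather than introduce additional factors; getting the bookkeeping right here is the crux.
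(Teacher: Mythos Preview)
Your high-level strategy—using the block-upper-triangular identity to turn the sum into a product—is exactly the paper's approach. The difficulties you flag in Step~1 and at the final swap are real, however, and your sketch does not yet resolve them; in fact, the specific layout you propose will not work at the ambient size you chose.

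\textbf{Ambient size and free butterfly levels.} You claim $\vT_i\in(\BBS)^w_{2e}$ but then describe a $2ek\times 2ek$ expansion; for a $2k\times 2k$ matrix, expansion $2e$ means ambient $4ek$, so these two statements are inconsistent. More importantly, $2ek$ is too small in principle. The point of going to $4ek$ is that the block-diagonal embedding $\mathrm{diag}(\vI_{ek},\vI_{ek},\vE_{\vA_i},0)$ uses only butterfly factors of sizes $\le ek$, leaving the two outermost levels ($2ek$ and $4ek$) free. Those two free levels are precisely what carry the copy-into-workspace and the add-back: in the paper's construction the outer $\vB^{(4ek)}_{4ek}$ factors (your ``$\vL$'' and ``$\vR$'') perform the coupling between halves, and a $\vB^{(4ek)}_{2ek}$ factor handles the internal swap. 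At ambient $2ek$ there is only one free level above $ek$, which is not enough to encode both operations while keeping width exactly $w$.

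\textbf{Work at the $\vE_i$ level, not the $\vA_i$ level.} Your layout places the accumulator $\vu$ in the first $ek$-block and the input $\vv$ in the second $ek$-block; but then the top-left $2k\times 2k$ corner of your expanded matrix is \emph{not} $\vT_i=\begin{bmatrix}\vI_k&\vA_i\\0&\vI_k\end{bmatrix}$, since $\vv$ sits in positions $ek,\dots,ek+k-1$ rather than $k,\dots,2k-1$. The paper sidesteps this by applying the block-triangular trick directly to the expanded matrices: it builds $4ek\times 4ek$ factors $\vM_i\in(\BBS)^w$ whose top-left $2ek\times 2ek$ block is $\begin{bmatrix}\vI_{ek}&\vE_{\vA_i}\\0&\vI_{ek}\end{bmatrix}$, takes the product (trivially in $(\BBS)^{mw}$, with no appeal to Lemma~\ref{lem:prod} and hence no intermediate truncations), and only at the very end extracts the $k\times k$ corner. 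Since the $k\times k$ top-left of $\sum_i \vE_{\vA_i}$ is $\sum_i\vA_i$, this is enough.

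\textbf{Absorbing the final swap.} Your phrase ``absorbing the constant-width cost of the swap'' hides a genuine step: if the swap costs one extra $\BBS$ factor you get width $mw+1$, not $mw$. The paper shows explicitly that the needed column permutation $\vP_2$ can be merged into the rightmost $\vM_m$ by recombining it with the $\vP_1,\vR$ factors (which live at levels $2ek$ and $4ek$), so no width is added. This works precisely because those levels were kept free in the construction of each $\vM_i$; if you go through Lemma~\ref{lem:prod} generically, you lose control of which levels are free in the rightmost factor, and the absorption argument no longer goes through.
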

\begin{proof}
    For each $1 \leq i \leq m$, let $\vE_{\vA_i} \in \mathbb{F}^{ek \times ek}$ be defined such that $\vA_i = \vS \vE_{\vA_i} \vS^T$ (with $\vS$ as in Definition~\ref{def:hierarchy}). Note that $\vE_{\vA_i} \in (\BBS)^w$. Consider matrices of the form:
    \begin{equation*} \label{eq:sumform}
        \dimbmatrix{
            \vI_{ek} & \vE_{\vA_i} & 0 & 0 \\ 0 & \vI_{ek} & 0 & 0 \\
            0 & 0 & 0 & 0 \\ 0 & 0 & 0 & 0
         }{
        {\vM_i \in \mathbb{F}^{4ek \times 4ek}}}
        =
         \dimbmatrix{ \vI_{2ek} & \vI_{2ek} \\ 0 & 0 }{\vL}
         \dimbmatrix{
            \vI_{ek} & 0 & 0 & 0 \\ 0 & \vI_{ek} & 0 & 0 \\
            0 & 0 & \vE_{\vA_i} & 0 \\ 0 & 0 & 0 & 0
         }{\vS}
         {\setlength\arraycolsep{0pt}
         \dimbmatrix{ 
            {\setlength\arraycolsep{2pt}\setlength\fboxsep{2pt}\red{
            $\begin{matrix}
                \vI_{ek} & 0 \\ 0 & \vI_{ek}
            \end{matrix}$}} &
            {\setlength\arraycolsep{4pt}
            \begin{matrix}
                0 & \hspace{3pt}0 \\ 0 & \hspace{3pt}0
            \end{matrix}} \\
            {\setlength\arraycolsep{4pt}
            \begin{matrix}
                0 & \hspace{3pt}0 \\ 0 & \hspace{3pt}0
            \end{matrix}} &
            {\setlength\arraycolsep{2pt}\setlength\fboxsep{0pt}\yellow{
            $\begin{matrix}
                0 & \vI_{ek} \\ \vI_{ek} & 0
            \end{matrix}$}}
        }{\vP_1}}
         \dimbmatrix{ \vI_{2ek} & 0 \\ \vI_{2ek} & 0 }{\vR}.
    \end{equation*}
    
    Here, $\vL$ and $\vR$ compute the sum of the $2ek \times 2ek$ matrices on the diagonal of $\vS\vP_1$, where $\vP_1$ is a permutation swapping $\vE_{\vA_i}$ to the $4^{th}$ $ek$-block column. Note that $\vS$ is the diagonalization of four matrices in $(\BBS)^w$, so $\vS \in (\BBS)^w$ by Remark~\ref{rem:diag}. In addition, since each block in $\vS$ is a butterfly matrix of size $ek$, $\vS$ only uses butterfly factors up to size $ek$, so the outer factor matrices of sizes $4ek$ and $2ek$ in $\vS$ are unused. Also note that $\vL$ and $\vR$ are butterfly factor matrices of size $4ek$ (or $\vB^{(4ek)}_{4ek}$), and $\vP_1$ is a butterfly factor matrix of size $2ek$ (or $\vB^{(4ek)}_{2ek}$). This allows us to fold the surrounding matrices $\vL, \vP_1, \vR$ into $\vS$, so $\vM_i \in (\BBS)^w$.

    Through repeated application ($m$ times) of the identity
    \[
        \begin{bmatrix} \vI & \vA \\ 0 & \vI \end{bmatrix}
        \begin{bmatrix} \vI & \vB \\ 0 & \vI \end{bmatrix}
        = \begin{bmatrix} \vI & \vA+\vB \\ 0 & \vI \end{bmatrix},
    \]
    we see that
    \begin{equation} \label{eq:sumprod}
        \dimbmatrix{
            \vI_{ek} & \sum_{i=1}^{m} \vE_{\vA_i} & 0 & 0 \\ 0 & \vI_{ek} & 0 & 0 \\
            0 & 0 & 0 & 0 \\ 0 & 0 & 0 & 0
        }{\vM \in \mathbb{F}^{4en \times 4en}}
        =
        \prod_{i=1}^{m} \vM_i.
    \end{equation}
    From Lemma~\ref{lem:prod}, $\vM \in (\BBS)^{mw}$. Finally, note that $\sum_{i=1}^{m}\vA_i = \vS \vM \vP_2 \vS^T$, where $\vP_2$ is a permutation that moves the first $k$ columns of the second block-column of $\vM$ to the left. $\vP_2$ can be folded into the final summation factor $M_m$ as follows:
    \begin{equation} \label{eq:permfold}
        {\setlength\arraycolsep{0pt}
         \dimbmatrix{ 
            {\setlength\arraycolsep{2pt}\setlength\fboxsep{2pt}\red{
            $\begin{matrix}
                \vI_{ek} & 0 \\ 0 & \vI_{ek}
            \end{matrix}$}} &
            {\setlength\arraycolsep{4pt}
            \begin{matrix}
                0 & \hspace{3pt}0 \\ 0 & \hspace{3pt}0
            \end{matrix}} \\
            {\setlength\arraycolsep{4pt}
            \begin{matrix}
                0 & \hspace{3pt}0 \\ 0 & \hspace{3pt}0
            \end{matrix}} &
            {\setlength\arraycolsep{2pt}\setlength\fboxsep{0pt}\yellow{
            $\begin{matrix}
                0 & \vI_{ek} \\ \vI_{ek} & 0
            \end{matrix}$}}
        }{\vP_1}}
        \dimbmatrix{ \vI_{2ek} & 0 \\ \vI_{2ek} & 0 }{\vR}
        \dimbmatrix{
            0 & \vI_{ek} & 0 & 0 \\ \vI_{ek} & 0 & 0 & 0 \\
            0 & 0 & \vI_{ek} & 0 \\ 0 & 0 & 0 & \vI_{ek}
         }{\vP_2}
         =
         {\setlength\arraycolsep{0pt}
         \dimbmatrix{ 
            {\setlength\arraycolsep{2pt}\setlength\fboxsep{0pt}\yellow{
            $\begin{matrix}
                0 & \vI_{ek} \\ \vI_{ek} & 0
            \end{matrix}$}} &
            {\setlength\arraycolsep{4pt}
            \begin{matrix}
                0 & \hspace{3pt}0 \\ 0 & \hspace{3pt}0
            \end{matrix}} \\
            {\setlength\arraycolsep{4pt}
            \begin{matrix}
                0 & \hspace{3pt}0 \\ 0 & \hspace{3pt}0
            \end{matrix}} &
            {\setlength\arraycolsep{2pt}\setlength\fboxsep{2pt}\red{
            $\begin{matrix}
                \vI_{ek} & 0 \\ 0 & \vI_{ek}
            \end{matrix}$}}
        }{\vP'_1}}
        \dimbmatrix{ \vI_{2ek} & 0 \\ \vI_{2ek} & 0 }{\vR}
    \end{equation}
    
    Hence, $\sum_{i=1}^{m}\vA_i \in (\BBS)^{mw}_{4e}$, as desired.
\end{proof}

\begin{lemma} \label{lem:inv}
    Let $\vM$ be an invertible $n \times n$ matrix such that $\vM \in \B$. Then $\vM^{-1} \in \B^*$.
\end{lemma}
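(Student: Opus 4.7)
The plan is to invert the butterfly decomposition of $\vM$ factor by factor and show that the resulting product has the structure required to lie in $\B^*$.

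First I would write $\vM = \vB_n^{(n)} \vB_{n/2}^{(n)} \cdots \vB_2^{(n)}$ using Definition~\ref{def:bmatrix}. Since $\vM$ is invertible and all factors are square of the same size, each butterfly factor matrix $\vB_k^{(n)}$ is itself invertible (via multiplicativity of the determinant), and therefore each of its constituent $k \times k$ butterfly factors is invertible as well. So
\[
\vM^{-1} = (\vB_2^{(n)})^{-1} (\vB_4^{(n)})^{-1} \cdots (\vB_n^{(n)})^{-1}.
\]
Note that the order of factors in this inverse product already matches the order required for a matrix in $\B^*$ (smallest block size first). The remaining task is to show that each individual $(\vB_k^{(n)})^{-1}$ can be written as $(\vC_k^{(n)})^*$ for some butterfly factor matrix $\vC_k^{(n)}$ of size $n$ with block size $k$.

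The key technical step, which I expect to be the main obstacle, is showing that the inverse of an invertible butterfly factor is again a butterfly factor of the same size. For this I would exhibit a perfect-shuffle permutation $\vP$ on $k$ elements sending $\{i, i+k/2\} \mapsto \{2i, 2i+1\}$ for $0 \le i < k/2$, and observe that $\vP \vB_k \vP^T$ becomes a block-diagonal matrix with $k/2$ blocks of size $2 \times 2$ (the $2 \times 2$ submatrices of $\vB_k$ indexed by rows and columns $\{i, i+k/2\}$). Invertibility of $\vB_k$ forces each $2 \times 2$ block to be invertible, so $(\vP \vB_k \vP^T)^{-1}$ is again block-diagonal with $2 \times 2$ blocks, and conjugating back by $\vP$ yields a matrix with exactly the sparsity pattern of a butterfly factor. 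Extending this pointwise to the block-diagonal structure of $\vB_k^{(n)}$ gives that $(\vB_k^{(n)})^{-1}$ is a butterfly factor matrix of size $n$ with block size $k$.

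Finally, I would observe that conjugate transposition preserves the butterfly factor structure, since $\begin{bmatrix} \vD_1 & \vD_2 \\ \vD_3 & \vD_4 \end{bmatrix}^* = \begin{bmatrix} \vD_1^* & \vD_3^* \\ \vD_2^* & \vD_4^* \end{bmatrix}$ again has diagonal blocks. Combining this with the previous step, for each $k$ we can set $\vC_k^{(n)} = ((\vB_k^{(n)})^{-1})^*$, which is a butterfly factor matrix, and obtain $(\vB_k^{(n)})^{-1} = (\vC_k^{(n)})^*$. Substituting back,
\[
\vM^{-1} = (\vC_2^{(n)})^* (\vC_4^{(n)})^* \cdots (\vC_n^{(n)})^* = \bigl( \vC_n^{(n)} \vC_{n/2}^{(n)} \cdots \vC_2^{(n)} \bigr)^*,
\]
and the matrix inside the conjugate transpose is in $\B$ by Definition~\ref{def:bmatrix}, so $\vM^{-1} \in \B^*$ as claimed.
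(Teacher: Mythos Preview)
Your proof is correct and follows essentially the same approach as the paper: invert the butterfly factorization, show that the inverse of each butterfly factor matrix is again a butterfly factor matrix of the same block size, and observe that the resulting reversed product is precisely the form of a matrix in $\B^*$. The only difference is local: to prove that $(\vB_k)^{-1}$ is a butterfly factor, the paper uses a Gaussian-elimination argument (the only row operations needed to reduce $[\vB_k \mid \vI_k]$ act between rows $k/2$ apart), whereas you conjugate by a perfect-shuffle permutation to reduce to inverting $k/2$ independent $2\times 2$ blocks---both arguments are equally valid and yield the same conclusion.
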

\begin{proof}
    We prove this in a series of steps.

    First, let $\vB_k$ be an invertible butterfly factor of size $k$. Consider the method of computing $\vB_k^{-1}$ by performing Gaussian elimination on the matrix $\begin{bmatrix} \vB_k | \vI_k \end{bmatrix}$ to obtain the matrix $\begin{bmatrix} \vI_k | \vB_k^{-1} \end{bmatrix}$. By the form of $\vB$, non-zero entries within a row or column are always exactly $\frac{k}{2}$ positions apart. Therefore, the only row operations needed for this Gaussian elimination are:
    \begin{itemize}
         \item Scaling a row by a constant factor $c \not= 0$
         \item Addition of a row to another row exactly $\frac{k}{2}$ rows apart
    \end{itemize}
    Performing these operations on $\vI_k$ will only allow non-zeros on the main diagonal and $\frac{k}{2}$ diagonals away from the main diagonal. Hence, $\vB_k^{-1}$ is also a butterfly factor of size $k$.

    Next, let $\vB_k^{(n)}$ be an invertible butterfly factor matrix of size $n$ and block size $k$. Its inverse is the block diagonal matrix formed by the inverses of each of its constituent butterfly factors. From above, $\left(\vB_k^{(n)}\right)^{-1}$ is also a butterfly factor matrix of size $n$ and block size $k$.

    Finally, consider $\vM \in \B$.
    \[
        \vM^{-1} = {\left( \vB_n^{(n)} \vB_{\frac{n}{2}}^{(n)} \hdots \vB_2^{(n)} \right)}^{-1}
        = {\left(\vB_2^{(n)}\right)}^{-1} {\left(\vB_4^{(n)}\right)}^{-1} \hdots {\left(\vB_n^{(n)}\right)}^{-1}
        = {\vB'_2}^{(n)} {\vB'_4}^{(n)} \hdots {\vB'_n}^{(n)} \in \B^*
    \]
\end{proof}

Finally, we include a closure result for the Kronecker product, another common matrix composition operation.
Although Lemma~\ref{lem:kronecker} is not directly used in the subsequent proofs, it allows for examples the results for the DFT to be lifted to higher-dimensional Fourier transforms.
We also note that the closure bound in Lemma~\ref{lem:kronecker} can be tightened in such cases (cf\. Remark~\ref{rem:diag}).

\begin{lemma} \label{lem:kronecker}
    Let $\vA,\vB \in \mathbb{F}^{n \times n}$. If $\vA \in (\BBS)^{w_1}_e$ and $\vB \in (\BBS)^{w_2}_e$ then $\vA\otimes\vB \in (\BBS)^{w_1 + w_2 + 6}_e$.
\end{lemma}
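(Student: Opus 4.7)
The plan is to decompose the Kronecker product into a product of two simpler matrices, each of which is a block-diagonal embedding of $\vA$ or $\vB$ (up to a permutation conjugation), and then stitch everything together using the closure lemmas established in Appendix~\ref{sec:hierarchy-closure}. Concretely, I would use the standard identity
\[
\vA \otimes \vB = (\vA \otimes \vI_n)(\vI_n \otimes \vB),
\]
and then bound each factor separately inside the $\BBS$ hierarchy.

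First I would handle the easy factor $\vI_n \otimes \vB$, which is precisely the block-diagonal matrix $\mathrm{Diag}(\vB, \vB, \dots, \vB)$ with $n$ copies of $\vB$. Since $\vB \in (\BBS)^{w_2}_e$, Lemma~\ref{lem:diag} immediately gives $\vI_n \otimes \vB \in (\BBS)^{w_2 + 2}_e$. For the other factor, $\vA \otimes \vI_n$ is \emph{not} block-diagonal, but it is conjugate to one via the perfect-shuffle (commutation) permutation $\vP$ of size $n^2$: one has $\vA \otimes \vI_n = \vP (\vI_n \otimes \vA) \vP^T$. By Lemma~\ref{lem:diag} again, $\vI_n \otimes \vA \in (\BBS)^{w_1 + 2}_e$, and by Theorem~\ref{thm:perm} the permutation $\vP$ lies in $\BBS = (\BBS)^1_1 \subseteq (\BBS)^1_e$ (via the trivial embedding $\vP \mapsto \mathrm{Diag}(\vP, \vI)$, which remains in $\BBS$ by Remark~\ref{rem:diag}). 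Two applications of Lemma~\ref{lem:prod} then yield $\vA \otimes \vI_n \in (\BBS)^{w_1 + 4}_e$.

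Finally, one more application of Lemma~\ref{lem:prod} to the product $(\vA \otimes \vI_n)(\vI_n \otimes \vB)$ gives
\[
\vA \otimes \vB \in (\BBS)^{(w_1 + 4) + (w_2 + 2)}_e = (\BBS)^{w_1 + w_2 + 6}_e,
\]
as claimed.

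The only real obstacle I anticipate is bookkeeping around the expansion factor $e$: Lemma~\ref{lem:prod} requires both arguments to share the same expansion, so I must verify that a permutation (which is naturally in $(\BBS)^1_1$) can be lifted to $(\BBS)^1_e$ without cost. This follows because $\mathrm{Diag}(\vP, \vI_{(e-1)n^2}) \in \BBS$ by Remark~\ref{rem:diag} and conjugating by $\vS$ recovers $\vP$, so the lift is free. With that in hand, the rest of the argument is a straightforward chain of applications of the existing closure lemmas.
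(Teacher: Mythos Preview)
Your proposal is correct and follows essentially the same approach as the paper's own proof: factor $\vA \otimes \vB = (\vA \otimes \vI)(\vI \otimes \vB) = \vP^{-1}(\vI \otimes \vA)\vP(\vI \otimes \vB)$, apply Lemma~\ref{lem:diag} to the two block-diagonal factors, and use Theorem~\ref{thm:perm} plus Lemma~\ref{lem:prod} to collect the width as $1 + (w_1+2) + 1 + (w_2+2) = w_1 + w_2 + 6$. Your added discussion of lifting the permutation from $(\BBS)^1_1$ to $(\BBS)^1_e$ via Remark~\ref{rem:diag} is a helpful clarification that the paper's terse proof leaves implicit.
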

\begin{proof}
  Note that
  \[
    \vA \otimes \vB = (\vA \otimes \vI)(\vI \otimes \vB) = \vP^{-1}(\vI \otimes \vA)\vP (\vI \otimes \vB),
  \]
  for some permutation $\vP$.
  By Lemma~\ref{lem:diag}, $\vI \otimes \vA$ and $\vI \otimes \vB$ are in $(\BBS)^{w_1+2}_e, (\BBS)^{w_2+2}_e$ respectively.
  The result follows from combining with $\vP \in \BBS$ and Lemma~\ref{lem:prod}.
\end{proof}

\section{Sparse Matrices in $\BBS$ Hierarchy} \label{sec:hierarchy-sparse}

In this appendix, we prove Theorem~\ref{thm:sparse}. First, we consider matrices
with at most $n$ NNZ.
\begin{lemma}
\label{lem:sparse<=n}
    let $\vS$ be an $n \times n$ matrix with at most $n$ NNZ. Then, $\vS \in (\BBS)^{\sparsewidth}$.
\end{lemma}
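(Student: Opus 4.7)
The plan is to realize $\vS$ as a short product of permutation matrices and a ``core'' matrix lying in $\BBS$. Specifically, I would aim to find permutations $\vP_L, \vP_R$ so that $\vP_L \vS \vP_R$ has support contained in the fixed sparsity pattern of $\BBS$ depicted in Figure~\ref{fig:bbs_sparsity}. Once this is done, Theorem~\ref{thm:perm} places $\vP_L, \vP_R \in \BBS$, the core sits in $\BBS$, and Lemma~\ref{lem:prod} gives $\vS \in (\BBS)^3 \subseteq (\BBS)^{\sparsewidth}$.

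View the nonzero pattern of $\vS$ as a weighted bipartite graph $G$ with $n+n$ vertices and at most $n$ (weighted) edges. The central step is a \emph{routing lemma}: any bipartite graph on $n+n$ vertices with at most $n$ edges can, after suitable relabeling of its two sides (i.e., after row/column permutations), be embedded into the Benes-network pattern underlying $\BBS$. I would prove this inductively on $\log_2 n$. At the outer level, the outermost butterfly factors in a $\BBS$ product prescribe a particular crossing structure between the top and bottom halves of size $n/2$; the inductive step then reduces to balancing the $\le n$ edges of $G$ into two subproblems of size $n/2$, each with $\le n/2$ edges, that recursively fit into smaller $\BBS$ patterns. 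Existence of such a balanced split follows from a Hall-type matching argument, paralleling the rearrangeability proof for Benes networks (cf.\ the remark after Lemma~\ref{lem:perm_decomp}), together with Lemma~\ref{lem:bal_factor}. Once the support of $\vP_L \vS \vP_R$ sits inside the $\BBS$ pattern, the correct entry values are obtained by a straightforward constraint count: at most $n$ value constraints against $\Theta(n \log n)$ butterfly parameters that jointly parameterize the prescribed support.

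The main obstacle is the routing lemma under highly non-uniform degrees (e.g., all $n$ nonzeros incident to a single column-vertex of $G$). In such concentrated regimes, the recursive splitting may fail to produce a single balanced embedding, and the argument must be strengthened. I would handle this by first preprocessing $\vS$ as a sum $\vS = \vS_1 + \vS_2$ where each $\vS_i$ has at most one nonzero per row and one per column (so that each $\vS_i$ is a diagonally-scaled partial permutation, trivially in $\BBS$ via Theorem~\ref{thm:perm} and Lemma~\ref{lem:dmult}), and then combining the two summands into a single $(\BBS)^{\sparsewidth}$ product without incurring an expansion factor (exploiting the width budget of $4$ rather than invoking the expansion-increasing Lemma~\ref{lem:sum}). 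This preprocessing is where the extra factor pushing the width to $4$ (instead of $3$) enters the bound.
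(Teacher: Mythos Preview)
Your proposal has a genuine gap. The fallback decomposition $\vS = \vS_1 + \vS_2$ with each $\vS_i$ a diagonally-scaled partial permutation is false in exactly the regime you flagged as problematic: if all $n$ nonzeros of $\vS$ lie in a single column, then any matrix with at most one nonzero per column contributes at most one nonzero to that column, so $\vS_1 + \vS_2$ can recover at most two of the $n$ entries. In general you would need up to $n$ such summands, not two, which destroys the width bound.

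The primary approach also does not go through as stated. Fitting the \emph{support} of $\vP_L \vS \vP_R$ into the Benes pattern is not the same as showing $\vP_L \vS \vP_R \in \BBS$; the entries of a $\BBS$ product are multilinear polynomials in the butterfly parameters, and a raw parameter count ($n$ constraints vs.\ $\Theta(n\log n)$ parameters) does not certify solvability of a polynomial system. Moreover, the Benes rearrangeability argument routes \emph{permutations} (edge-disjoint perfect matchings), and the recursive balancing you invoke via Lemma~\ref{lem:bal_factor} breaks precisely when many edges share a vertex, so the routing lemma itself is suspect before you even reach the value-assignment step.

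The paper's proof takes a different route that sidesteps concentration entirely. It introduces \emph{horizontal} and \emph{vertical step matrices} (Definition~\ref{def:hstep}) and proves directly that these lie in $\B$ and $\B^*$ respectively (Lemma~\ref{lem:hstep}, Corollary~\ref{cor:vstep}). Any $\vS$ with at most $n$ NNZ is then written as $\vP_1 \vH \vP_2 \vV \vP_3$, where $\vP_1,\vP_3$ compact away zero rows and columns, $\vH$ records the nonzeros in row-major order (one per column, with row indices nondecreasing), and $\vV$ records the corresponding column indices after a sorting permutation $\vP_2$. The row-major listing guarantees the step-matrix condition regardless of how concentrated the nonzeros are. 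Finally, $\vP_2$ is placed in $\B^*\B$ (Lemma~\ref{lem:perm_bsb}) so that $\vH \vP_2 \vV \in (\BBS)^2$, and together with $\vP_1,\vP_3 \in \BBS$ this yields $\vS \in (\BBS)^4$.
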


We use this lemma and the addition closure lemma to prove Theorem~\ref{thm:sparse}.

\begin{proof}[Proof of Theorem \ref{thm:sparse}]
    We note that any $s$ sparse matrix is the sum of $\ceil{\frac{s}{n}}$
    matrices of at most $n$ NNZ, and we appeal to Lemma~\ref{lem:sum}.
\end{proof}

In the rest of the section we will prove Lemma~\ref{lem:sparse<=n}.
We begin by defining two classes of matrices that will be used in our decomposition.

\begin{definition} \label{def:hstep}
    An $n \times n$ matrix $\vH$ is a \textbf{\hstep} if for every $0 \leq i,i' < n$ and $0 \leq j \leq j' < n$, if $\vH[i,j] \not=0$ and $\vH[i',j'] \not=0$, then $j'-j \geq (i'-i) \mod n$.

    An $n \times n$ matrix $\vV$ is a \textbf{\vstep} if $\vV^*$ is a \hstep.
\end{definition}

With this definition, the {\hstep} obeys a ``Lipschitz-like" condition. Each column of a {\hstep} can have at most one non-zero entry, and given two non-zero columns $k$ apart, the non-zero entry in the right column must be between 0 and $k$ rows below the non-zero entry in the left column. Note that to show that a matrix is a {\hstep}, it is sufficient to argue that this condition holds for each pair of neighboring non-zero columns.

Similarly, each row of a {\vstep} can have at most one non-zero entry, and given two non-zero rows $k$ apart, the non-zero entry in the lower row must be between 0 and $k$ columns to the right of the non-zero entry in the upper row.

\begin{lemma} \label{lem:hstep}
    Let $\vH$ be an $n \times n$ \hstep. Then $\vH \in \B$.
\end{lemma}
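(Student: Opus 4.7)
The plan is to induct on $n$, using the recursive form of butterfly matrices: $\vB^{(n)} = \vB_n^{(n)} \cdot \mathrm{diag}\!\left(\vB^{(n/2)}_1, \vB^{(n/2)}_2\right)$. The base case $n=2$ is immediate since any $2\times 2$ matrix is a butterfly factor of size $2$. For the inductive step, I would split $\vH$ into its left and right halves of $n/2$ columns, and exhibit $\vH_1, \vH_2 \in \mathbb{F}^{(n/2)\times(n/2)}$ that are themselves \hstep{}s together with a butterfly factor $\vB_n = \begin{bmatrix} \vD_1 & \vD_2 \\ \vD_3 & \vD_4 \end{bmatrix}$ such that
\[
  \vH = \vB_n \begin{bmatrix} \vH_1 & 0 \\ 0 & \vH_2 \end{bmatrix}.
\]

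The first observation is that the step condition implies every column of $\vH$ has at most one nonzero (taking $j=j'$ in Definition~\ref{def:hstep} forces $i=i'$). For each column $j$, let $r_j$ denote the row of its unique nonzero entry (if any), with value $v_j$. For a left column $j<n/2$, I would define $\vH_1$ to put a nonzero in column $j$ at row $r_j \bmod (n/2)$, and use either $\vD_1$ or $\vD_3$ (depending on whether $r_j < n/2$ or $r_j \ge n/2$) to route this nonzero into the correct half of $\vH$. Similarly, right columns define $\vH_2$ and are routed by $\vD_2$ or $\vD_4$. The diagonal entries in each of $\vD_i$ are chosen to absorb the scalars $v_j$.

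The key step, which I expect to be the main obstacle, is checking that this routing is consistent --- in other words, that $\vD_1$ and $\vD_3$ never both need to be nonzero at the same diagonal index (and likewise for $\vD_2, \vD_4$). A conflict at row $r < n/2$ among left columns would require two left columns $j, j' < n/2$ with nonzeros at rows $r_j = r$ and $r_{j'} = r + n/2$ (one in the top half, one in the bottom). Assuming WLOG $j < j'$, the step condition forces $j' - j \ge (r_{j'} - r_j) \bmod n = n/2$, contradicting $j, j' < n/2$. The symmetric argument rules out conflicts between right columns, and since left and right columns use disjoint diagonals $\{\vD_1,\vD_3\}$ vs.\ $\{\vD_2,\vD_4\}$ there is no cross-interaction.

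Finally, I would verify that $\vH_1$ (and similarly $\vH_2$) is again a \hstep. For entries $\vH_1[r,j], \vH_1[r',j']$ with $j < j'$, these come from $\vH$-columns whose nonzero rows satisfy $r = r_j \bmod (n/2)$ and $r' = r_{j'} \bmod (n/2)$. Since $(a \bmod (n/2)) \le (a \bmod n)$ for any integer $a$, the step condition on $\vH$ gives $j' - j \ge (r_{j'} - r_j) \bmod n \ge (r' - r) \bmod (n/2)$, so $\vH_1$ satisfies Definition~\ref{def:hstep} at size $n/2$. Applying the inductive hypothesis, $\vH_1, \vH_2 \in \B$, and thus $\vH \in \B$ by the recursive definition of butterfly matrices.
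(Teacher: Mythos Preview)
Your approach is the same as the paper's: induct on $n$, write $\vH = \vB_n \,\mathrm{diag}(\vH_1,\vH_2)$ with $\vH_1,\vH_2$ horizontal step matrices of size $n/2$, and use the step condition to rule out the routing conflict between $\vD_1$ and $\vD_3$ (resp.\ $\vD_2$ and $\vD_4$). Your verification that $\vH_1$ is again a step matrix via $(a \bmod (n/2)) \le (a \bmod n)$ is correct and slightly slicker than the paper's contradiction argument.

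One small slip: you say the diagonal entries of the $\vD_i$ are ``chosen to absorb the scalars $v_j$.'' That does not work in general. A horizontal step matrix may have several nonzeros in the same row (e.g.\ $\vH = \begin{bmatrix} a & b \\ 0 & 0 \end{bmatrix}$), in which case two different left columns would demand two different values at the same diagonal position of $\vD_1$. The fix is to keep the scalars inside $\vH_1,\vH_2$ (i.e.\ set $\vH_1 = \vH_{11}+\vH_{21}$, $\vH_2 = \vH_{12}+\vH_{22}$) and take the $\vD_i$ to be $\{0,1\}$-valued selectors. Your consistency check --- that no row index $r$ is needed simultaneously by $\vD_1$ and $\vD_3$ --- is exactly what makes this selector choice well-defined, so the rest of your argument goes through unchanged.
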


\begin{proof}
    We proceed by induction on $n$. The base case $n=2$ is trivial. As our inductive hypothesis, we assume that all horizontal step matrices of size $\frac{n}{2} \times \frac{n}{2}$ are butterfly matrices of size $\frac{n}{2}$. From Definition~\ref{def:bmatrix}, it is sufficient to show that $\vH$ can be decomposed as:
    \begin{gather} \label{hstep:formula}
        \vH = \begin{bmatrix}
            \vD_1 & \vD_2 \\ \vD_3 & \vD_4
        \end{bmatrix}
        \begin{bmatrix}
            \vH_1 & 0 \\ 0 & \vH_2
        \end{bmatrix} =
        \begin{bmatrix}
            \vD_1\vH_1 & \vD_2\vH_2 \\ \vD_3\vH_1 & \vD_4\vH_2
        \end{bmatrix} ,
    \end{gather}
    where $\vH_1,\vH_2$ are $\frac{n}{2} \times \frac{n}{2}$ horizontal step matrices and each $\vD_k$ is a $\frac{n}{2} \times \frac{n}{2}$ diagonal matrix. Denote the four, $\frac{n}{2} \times \frac{n}{2}$ corner submatrices of $\vH$ by:
    \[
        \vH = \begin{bmatrix}
            \vH_{11} & \vH_{12} \\
            \vH_{21} & \vH_{22}
        \end{bmatrix}.
    \]
    Then, define $\vH_1$ and $\vH_2$ by:
    \[
        \vH_1 = \vH_{11} + \vH_{21}
        \hspace{30pt}
        \vH_2 = \vH_{12} + \vH_{22}
    \]

    For sake of contradiction, assume that $\vH_1$ is not a \hstep. Then, there are $0 \leq i,i' < \frac{n}{2}$, $0 \leq j \leq j' < \frac{n}{2}$ such that $\vH_1[i,j] \not= 0$,  $\vH_1[i',j'] \not= 0$, and $j' - j < (i'-i) \mod \frac{n}{2}$. From our definition of $\vH_1$, the non-zero entries in columns $j$ and $j'$ of $\vH$ are either $\left((i'-i) \mod \frac{n}{2}\right)$ or $\left(\frac{n}{2} + (i'-i) \mod \frac{n}{2}\right)$, both of which are greater than $j'-j$, rows apart. This contradicts $\vH$ being a \hstep. Hence, $\vH_1$ must be a \hstep, as must $\vH_2$ from an analogous argument.

    Next, we define $\vD_1, \vD_2, \vD_3, \vD_4$ by:
    \[
        \vD_1[k,k] = \begin{cases}
            1 & \vH_{21}[k,:] = \vzero \\
            0 & \text{otherwise}
        \end{cases}
        \hspace{30pt}
        \vD_2[k,k] = \begin{cases}
            1 & \vH_{22}[k,:] = \vzero \\
            0 & \text{otherwise}
        \end{cases}
    \]
    \[
        \vD_3[k,k] = \begin{cases}
            1 & \vH_{11}[k,:] = \vzero \\
            0 & \text{otherwise}.
        \end{cases}
        \hspace{30pt}
        \vD_4[k,k] = \begin{cases}
            1 & \vH_{12}[k,:] = \vzero \\
            0 & \text{otherwise}.
        \end{cases}
    \]

    To finish the proof, we argue the correctness of the decomposition by equating arbitrary entries of each of the 4 corner submatrices. We begin with the upper left submatrix.
    \begin{align*}
        \vD_1\vH_1[i,j] &= \sum_{k=0}^{\frac{n}{2}} \vD_1[i,k] \cdot \vH_1[k,j] && \textrm{by definition of matrix multiplication} \\
        &= \vD_1[i,i] \cdot \vH_1[i,j] && \textrm{$\vD_1$ is a diagonal matrix} \\
        &= \mathbbm{1}_{\left( \vH_{21}[i,:] = \vzero \right)} \cdot
        \left( \vH_{11}[i,j] + \vH_{21}[i,j] \right) && \textrm{by definition of $\vD_1$ and $\vH_1$}
    \end{align*}
    Here, we consider two cases:

    \textsf{Case 1:} $\vH_{21}[i,j] \not=0$\par
    Since $\vH$ is a {\hstep} (and hence may have at most one non-zero entry per column), it follows that $\vH_{11}[i,j] = 0$. In this case, the indicator function evaluates to $0$, so $\vD_1\vH_1[i,j] = 0 = \vH_{11}[i,j]$, as desired.

    \textsf{Case 2:} $\vH_{21}[i,j]=0$\par
    If $\vH_{11}[i,j]=0$, then $\vD_1\vH_1[i,j] = 0 = \vH_{11}[i,j]$. Otherwise, for sake of contradiction, suppose that $\vH_{21}[i,:] \not= \vzero$. Then, two of the first $\frac{n}{2}$ columns of $\vH$ would have non-zero entries $\frac{n}{2}$ rows apart, contradicting $\vH$ being a \hstep.
    Hence, $\vH_{21}[i,:] = \vzero$, so $\vD_1\vH_1[i,j] = \vH_{11}[i,j]$, as desired.

    In all cases, $\vD_1\vH_1[i,j] = \vH_{11}[i,j]$, so our decomposition correctly recovers the upper left corner of  $\vH$. Analogous arguments show that the other three corners are also correctly recovered. Hence, our decomposition is correct, and by induction, $\vH \in \B$.
\end{proof}

\begin{corollary} \label{cor:vstep}
    Let $\vV$ be a \vstep. Then $\vV \in \B^*$.
\end{corollary}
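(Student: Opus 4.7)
The proof is immediate from the definitions and Lemma~\ref{lem:hstep}. My plan is as follows: by the definition of a vertical step matrix (Definition~\ref{def:hstep}), if $\vV$ is a vertical step matrix then $\vV^*$ is a horizontal step matrix. Applying Lemma~\ref{lem:hstep} to $\vV^*$, we conclude $\vV^* \in \B$. Since $\B^*$ is defined as the set of conjugate transposes of matrices in $\B$, taking conjugate transposes on both sides yields $\vV = (\vV^*)^* \in \B^*$, as desired.

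There is essentially no obstacle here, since the corollary is stated precisely to be the ``transpose'' counterpart of Lemma~\ref{lem:hstep}, and the defining property of a vertical step matrix was chosen to make this reduction one line. The only minor sanity check worth noting is that the operation of conjugate transpose genuinely reverses the order of butterfly factor matrices: if $\vV^* = \vB_n^{(n)} \vB_{n/2}^{(n)} \cdots \vB_2^{(n)}$ is the butterfly decomposition of $\vV^*$, then $\vV = (\vB_2^{(n)})^* \cdots (\vB_{n/2}^{(n)})^* (\vB_n^{(n)})^*$, which is exactly an element of $\B^*$ in the sense used throughout Section~\ref{sec:hierarchy-closure} (compare, e.g., Lemma~\ref{lem:inv}, where the same reversal of factor order is used explicitly).
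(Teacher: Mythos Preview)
Your proof is correct and matches the paper's intended approach: the corollary is stated immediately after Lemma~\ref{lem:hstep} with no separate proof, precisely because it follows by the one-line transpose argument you give (using that $\vV^*$ is a horizontal step matrix by Definition~\ref{def:hstep}). Your additional remark about the reversal of the butterfly factor order under conjugate transpose is a helpful sanity check but not strictly needed.
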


\begin{figure}[H] 
    \sparsefigure
    \caption{Decomposition of $4 \times 4$ sparse matrix $\vS$ into $\vP_1 \vH \vP_2 \vV \vP_3$}
    \label{fig:sparse}
\end{figure}

Now, we use step matrices to prove Lemma~\ref{lem:sparse<=n}. %

\begin{proof}[Proof of Lemma~\ref{lem:sparse<=n}]
    Given $\vS$, we decompose it as $\vS = \vP_1 \vH \vP_2 \vV \vP_3$, where each $\vP_\ell$ is a permutation matrix, $\vH$ is a \hstep, and $\vV$ is a \vstep. For an example of this, see Figure~\ref{fig:sparse}.
    
    We first decompose $\vS$ as $\vS = \vP_1 \vS' \vP_3$, where $\vP_1$ is the permutation that moves all 0 rows of $\vS$ to the bottom and $\vP_3$ is the permutation that moves all 0 columns of $\vS$ to the right.

    Next, we further decompose $\vS'$ into $\vS' = \vH \vV'$ as follows. Since
    $\vS'$ has $s \leq n$ NNZ, we can parameterize $\vS'$ by $\theta = \{ (c_k, i_k, j_k) : 0 \leq k < s \}$ such that $\vS'[i_k, j_k] = c_k$, with the non-zero entries indexed in row-major order. Define matrix $\vH$ by:
    \[
        \vH[:,k] = \begin{cases}
            c_k \cdot \ve_{i_k} & 0 \leq k < s \\
            \vzero & \text{otherwise}.
        \end{cases}
    \]
    Define matrix $\vV'$ by:
    \[
        \vV'[k,:] = \begin{cases}
            \ve_{j_k}^T & 0 \leq k < s \\
            \vzero & \text{otherwise}.
        \end{cases}
    \]
    To show that $\vS' = \vH\vV'$, we consider an arbitrary entry:
    \begin{align*}
        \vH\vV'[i,j] &= \sum_{k=0}^{n} \vH[i,k] \cdot \vV'[k,j] && \textrm{by definition of matrix multiplication} \\
        &= \sum_{k=0}^{s} \vH[i,k] \cdot \vV'[k,j] && \textrm{$\vH$ is 0 in all but first $s$ columns} \\
        &= \sum_{k=0}^{s} c_k \cdot \mathbbm{1}_{i = i_k} \cdot \mathbbm{1}_{j = j_k} && \textrm{by definition of $\vH$ and $\vV'$}
    \end{align*}
    Here, we note that $(i,j)$ can equal $(i_k,j_k)$ for at most one value of $k$ since the locations in $\theta$ are unique. Hence, $\vH\vV'[i,j] = c_k$ only if $(i,j) = (i_k, j_k)$ for some $k$, which is exactly the definition of $\vS'$. Hence, $\vS' = \vH\vV'$.

    We argue that $\vH$ is a \hstep\ through a series of assertions. First, note that $\vH$ has exactly one non-zero entry in each of its first $s$ columns. Also, note that since $\theta$ is in row-major order, these non-zero entries are sorted (any column to the right cannot have a non-zero entry in a higher row). Hence, to show that $\vH$ is a \hstep, it is sufficient to argue that adjacent columns of $\vH$ have non-zero entries at most one row apart. This is equivalent to $\vS'$ having no zero rows between two non-zero rows, which is guaranteed by $\vP_1$. Hence, $\vH$ is a \hstep.

    Since $\vV'$ has at most one non-zero entry per row, we may permute the rows of $\vV'$ to obtain a matrix $\vV$, where the non-zero entries of  $\vV$ are sorted  (any lower row below cannot have a non-zero entry in an earlier column). Hence, for some permutation matrix $\left(\vP_2\right)^{-1}$, $\vV = \left(\vP_2\right)^{-1} \vV'$, which implies that $\vV' = \vP_2\vV$. It has exactly one non-zero entry in each of its first $s$ columns. From the action of $\vP_2$, these non-zero entries are sorted. Therefore, by the same argument as for $\vH$ above, $\vV^T$ is a {\hstep}. Hence, $\vV$ is a \vstep.

    In all, we have found a decomposition $\vS = \vP_1 \vH \vP_2 \vV \vP_3$,
    where each $P_\ell$ is a permutation matrix ($\in \BBS$ by
    Theorem~\ref{thm:perm}), $\vH$ is a \hstep\ ($\in \B$ by
    Lemma~\ref{lem:hstep}), and $\vV$ is a \vstep\ ($\in \B^*$ by
    Corollary~\ref{cor:vstep}).
    Moreover, by Lemma~\ref{lem:perm_bsb}, $\vP_2 \in \BSB$, so $\vH, \vP_2, \vV$
    can be combined to obtain $\vH \vP_2 \vV \in (\BBS)^2$.
    By Lemma~\ref{lem:prod}, $\vS \in (\BBS)^4$.
\end{proof}

\begin{corollary}
    Let $\vR$ be an $n \times n$ matrix of rank $r$. Then $\vR \in (\BBS)^{8r}_{4}$.
\end{corollary}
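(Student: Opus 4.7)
The plan is to reduce the statement directly to the sparse case already handled by Theorem~\ref{thm:sparse}. Since $\vR$ has rank $r$, I may write $\vR = \vU\vV$ for some $n \times r$ matrix $\vU$ and $r \times n$ matrix $\vV$ (for instance, from a rank factorization or truncated SVD). Zero-padding each of these to $n \times n$ produces two sparse matrices $\widetilde{\vU}, \widetilde{\vV}$ satisfying $\vR = \widetilde{\vU}\widetilde{\vV}$, each with at most $rn$ nonzero entries: all the NNZ of $\widetilde{\vU}$ lie in its first $r$ columns, and all the NNZ of $\widetilde{\vV}$ lie in its first $r$ rows.

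Applying Theorem~\ref{thm:sparse} with $s = rn$ (so $\lceil s/n \rceil = r$) gives $\widetilde{\vU}, \widetilde{\vV} \in (\BBS)^{4r}_{4}$. Invoking Lemma~\ref{lem:prod}, which is closed under products at a fixed expansion factor, then yields $\vR = \widetilde{\vU}\widetilde{\vV} \in (\BBS)^{8r}_{4}$, as required.

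There is no genuine obstacle in this argument; it is essentially a one-line consequence of the two previously established results. An entirely equivalent route would be to write $\vR$ as a sum of $r$ rank-$1$ matrices $u_i v_i^{\top}$, express each summand as a product of two $n\times n$ matrices with at most $n$ NNZ so that it lies in $(\BBS)^{8}$ by Lemma~\ref{lem:sparse<=n} combined with Lemma~\ref{lem:prod}, and then invoke Lemma~\ref{lem:sum} to combine the $r$ summands into $(\BBS)^{8r}_{4}$. This alternative gives the same constants and makes it clear where the factor of $4$ expansion comes from: it is exactly the overhead needed by Lemma~\ref{lem:sum} (or, equivalently, by the $\vP_{2}\in\BSB$ step inside the proof of Theorem~\ref{thm:sparse}) to express a sum of matrices via the ``$\begin{bmatrix}\vI & \vA\\ 0 & \vI\end{bmatrix}$'' telescoping trick.
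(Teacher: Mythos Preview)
Your proof is correct and follows essentially the same approach as the paper: factor $\vR$ as a product of two $n\times r$ (resp.\ $r\times n$) matrices, zero-pad each to an $n\times n$ matrix with at most $rn$ nonzeros, apply Theorem~\ref{thm:sparse} to place each factor in $(\BBS)^{4r}_4$, and then use Lemma~\ref{lem:prod} to obtain $\vR\in(\BBS)^{8r}_4$. The alternative sum-of-rank-ones route you sketch is also valid and yields the same constants.
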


\begin{proof}
     We can decompose $\vR$ as $\vR = \vG \vH^*$ where $\vG,\vH$ are $n \times r$
     matrices. With appropriate zero-padding, both of these can be made into
     $n \times n$ matrices with at most $rn$ NNZ. The proof follows immediately from Theorem~\ref{thm:sparse} and Lemma~\ref{lem:prod}.
\end{proof}

\section{Example of K-matrix representation of structured matrices and Comparison to $\mathcal{BP}$ Hierarchy} \label{sec:bbs-vs-bp}

In this appendix, we show explicitly how some common structured matrices (e.g.\
originating from fast transforms) can be represented as K-matrices.
We also draw comparisons between the $\BBS$ hierarchy and the $\mathcal{BP}$
hierarchy introduced by~\citet{dao2019learning}.

\begin{lemma} \label{lem:dft}
    Let $\vF_n$ be the Discrete Fourier Transform of size $n$. Then $\vF_n \in (\BBS)^2$.
\end{lemma}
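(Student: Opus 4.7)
The plan is to exhibit the classical Cooley--Tukey (decimation-in-time) factorization of the DFT and then verify that each factor sits inside $\BBS$. Recall that for $n$ a power of $2$, the Cooley--Tukey algorithm expresses $\vF_n$ as
\[
    \vF_n \;=\; \vB \, \vP_{\mathrm{br}},
\]
where $\vB$ is a product of $\log_2 n$ butterfly factor matrices (the ``twiddle factor'' stages of the FFT, with block sizes $2, 4, \dots, n$) and $\vP_{\mathrm{br}}$ is the bit-reversal permutation that reorders the input before the radix-$2$ recursion. The matrix $\vB$ is a butterfly matrix in the sense of Definition~\ref{def:bmatrix}, so $\vB \in \B$. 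I would first justify this factorization by unrolling the standard FFT recursion and matching each stage to a $\vB_{2^k}^{(n)}$, which is routine and essentially bookkeeping.

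Next, I would lift both factors into the $\BBS$ class. Since the identity matrix is itself a butterfly matrix (each butterfly factor can be set to $\mathrm{diag}(1,1;1,1)$ on the appropriate blocks), we have $\B \subseteq \BBS$ by taking $\vM_2 = \vI$ in the defining form $\vM_1 \vM_2^*$. Hence $\vB \in \BBS$. For the permutation factor, Theorem~\ref{thm:perm} gives immediately that $\vP_{\mathrm{br}} \in \BBS$.

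Finally, I would invoke Lemma~\ref{lem:prod} (product closure of the hierarchy), which yields $\vB \cdot \vP_{\mathrm{br}} \in (\BBS)^{1+1} = (\BBS)^2$, completing the argument. There is really no serious obstacle here: the only non-trivial ingredient is Theorem~\ref{thm:perm} for the bit-reversal permutation, which has already been established. The main care needed is in the first step, namely writing the FFT cleanly as a product of butterfly factor matrices in the exact form of Definition~\ref{def:bmatrix} so that $\vB \in \B$ on the nose; any standard reference on radix-$2$ FFT suffices. (One could alternatively absorb the twiddle-factor diagonals using Lemma~\ref{lem:dmult}, but for $\vF_n$ these diagonals naturally live inside the butterfly factor matrices themselves, so no extra work is needed.)
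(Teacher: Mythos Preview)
Your proposal is correct and matches the paper's proof essentially step for step: factor $\vF_n = \vB\,\vP_{\mathrm{br}}$ via the Cooley--Tukey/Parker decomposition with $\vB \in \B$, invoke Theorem~\ref{thm:perm} to place the bit-reversal permutation in $\BBS$, and then apply the product closure Lemma~\ref{lem:prod}. The only difference is that you spell out the inclusion $\B \subseteq \BBS$ and the FFT bookkeeping more explicitly, which the paper leaves implicit.
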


\begin{proof}
    From \citet{parker1995random}, we can express $\vF_n$ as $\vF_n = \vB \; \vP$, where $\vB \in \B$ and $\vP$ is a permutation (the bit reversal permutation). From Theorem \ref{thm:perm}, $\vP \in \BBS$. Hence, by Lemma \ref{lem:prod}, $\vF_n \in (\BBS)^2$.
\end{proof}

\begin{lemma}\label{lem:hadamard}
    Let $\vH_n$ be the Hadamard Transform of size $n$. Then $\vH_n \in \BBS$.
\end{lemma}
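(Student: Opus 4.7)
The plan is to observe that the Hadamard transform admits a direct recursive factorization that matches Definition~\ref{def:bmatrix} exactly, showing $\vH_n \in \B$, and then to include $\B$ into $\BBS$ by multiplying by (the conjugate transpose of) an identity butterfly.

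First, I would show by induction on $n$ that $\vH_n \in \B$. The base case is $\vH_2 = \begin{bmatrix} 1 & 1 \\ 1 & -1 \end{bmatrix}$, which is trivially a butterfly factor (and thus a butterfly matrix) of size $2$. For the inductive step, I would use the standard recursive identity
\[
  \vH_n = \begin{bmatrix} \vH_{n/2} & \vH_{n/2} \\ \vH_{n/2} & -\vH_{n/2} \end{bmatrix}
  = \underbrace{\begin{bmatrix} \vI_{n/2} & \vI_{n/2} \\ \vI_{n/2} & -\vI_{n/2} \end{bmatrix}}_{\vB_n}
    \begin{bmatrix} \vH_{n/2} & 0 \\ 0 & \vH_{n/2} \end{bmatrix}.
\]
The left factor $\vB_n$ fits the form in Definition~\ref{def:bfactor} with $\vD_1=\vD_2=\vD_3=\vI_{n/2}$ and $\vD_4=-\vI_{n/2}$, so it is a butterfly factor of size $n$ (and thus a butterfly factor matrix $\vB_n^{(n)}$). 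By the inductive hypothesis, $\vH_{n/2} \in \B$, so the recursive form of Definition~\ref{def:bmatrix} directly exhibits $\vH_n$ as an element of $\B$.

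To upgrade this to $\vH_n \in \BBS$, I note that the identity matrix $\vI_n$ lies in $\B$: each $2\times 2$ identity block is a butterfly factor (with $\vD_1=\vD_4=\vI$, $\vD_2=\vD_3=0$, all diagonal), so every $\vI_n^{(n)}, \vI_{n/2}^{(n)}, \dots, \vI_2^{(n)}$ is a butterfly factor matrix equal to $\vI_n$, and their product is $\vI_n$ itself. Hence $\vH_n = \vH_n \cdot \vI_n^{*}$ with $\vH_n, \vI_n \in \B$, which by Definition~\ref{def:hierarchy} means $\vH_n \in \BBS$.

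There is no real obstacle here; the only thing worth double-checking is that the recursive form in Definition~\ref{def:bmatrix} is stated in terms of possibly different subblocks $[\vB^{(n/2)}]_1, [\vB^{(n/2)}]_2$, which accommodates our choice of taking both blocks equal to $\vH_{n/2}$. If desired, one could also remark that this construction yields the tight $n\log n$-parameter (and $O(n \log n)$-runtime) representation of $\vH_n$, matching the complexity of the fast Walsh--Hadamard transform and improving on the generic bound of Theorem~\ref{thm:ac-bbs} by a logarithmic factor.
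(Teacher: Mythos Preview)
Your proposal is correct and follows essentially the same approach as the paper: the paper's proof is the one-line statement ``$\vH_n \in \B$, so trivially $\vH_n \in \BBS$,'' and your argument simply fills in the details of both assertions via the standard recursive factorization and the identity-butterfly inclusion $\B \subseteq \BBS$.
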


\begin{proof}
    $\vH_n \in \B$, so trivially $\vH_n \in \BBS$.
\end{proof}

\begin{lemma}
    Let $\vS_n$ be the Discrete Sine Transform of size $n$. Then $\vS_n$ is the
    real part of some matrix in $(\BBS)^2$.
\end{lemma}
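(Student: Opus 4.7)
The plan is to mirror the proof of Lemma~\ref{lem:dft}: express $\vS_n$ as the real part of an $n \times n$ complex matrix that factors into two $\BBS$ pieces. The key ingredient is the classical Makhoul-style reduction of the DST of size $n$ (taken here as DST-II) to a DFT of the \emph{same} size $n$, using only a permutation on the input side and a diagonal twiddle on the output side. Concretely, I will derive an identity of the form
\[
    \vS_n = \Re(\vD \vF_n \vP),
\]
where $\vD$ is a diagonal matrix whose entries are twiddle factors proportional to $i e^{-i\pi(k+1)/(2n)}$, and $\vP$ is the permutation implementing the even/odd reordering $y_k = x_{2k}$ for $0 \le k < n/2$ and $y_k = x_{2n-1-2k}$ for $n/2 \le k < n$.

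Once that identity is in hand, the placement in the hierarchy is a direct combination of earlier results. By Lemma~\ref{lem:dft}, write $\vF_n = \vB \vP_\mathrm{br}$ with $\vB \in \B$ and $\vP_\mathrm{br}$ the bit-reversal permutation. Grouping, $\vD \vF_n \vP = (\vD \vB)(\vP_\mathrm{br} \vP)$. By Lemma~\ref{lem:dmult}, $\vD \vB \in \B \subseteq \BBS$, and the composition of permutations $\vP_\mathrm{br} \vP$ is itself a permutation, which lies in $\BBS$ by Theorem~\ref{thm:perm}. Lemma~\ref{lem:prod} then yields $\vD \vF_n \vP \in (\BBS)^2$. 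Taking the real part of both sides gives $\vS_n = \Re(\vM)$ with $\vM \in (\BBS)^2$, as claimed.

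The main obstacle is the algebraic verification of the Makhoul identity itself. This amounts to expanding $\sin(\pi(k+1)(2j+1)/(2n)) = \Im(e^{i\pi(k+1)(2j+1)/(2n)})$, splitting the sum over $j \in \{0,\ldots,n-1\}$ into even and odd indices, folding the odd part onto the even part using the oddness of sine together with the reindexing $j \mapsto n - 1 - j$, and recognizing the resulting inner sum as $e^{i\pi(k+1)/(2n)}$ times a size-$n$ DFT entry $\vF_n[k, \pi(j)]$. The bookkeeping is routine but delicate; once it is in place, the rest of the argument is a formal application of the closure and permutation lemmas already established. Should the convention in force be DST-I instead, for which the natural companion DFT has size $2(n+1)$ rather than $n$, one would instead work in $(\BBS)^2_e$ with a small expansion factor $e$, using sparse embedding/cropping matrices (themselves in the hierarchy by Theorem~\ref{thm:sparse}) to bridge the size mismatch.
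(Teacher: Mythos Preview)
Your proposal is correct and follows essentially the same approach as the paper: reduce the DST to a size-$n$ DFT via a Makhoul-type identity, then invoke the DFT decomposition $\vF_n = \vB\vP_{\mathrm{br}}$ together with the closure lemmas (Lemma~\ref{lem:dmult}, Theorem~\ref{thm:perm}, Lemma~\ref{lem:prod}) to place the resulting complex matrix in $(\BBS)^2$. The only cosmetic difference is that the paper places the diagonal scaling on the \emph{input} side (writing the preprocessing as a ``scaled permutation'' that separates even/odd indices and negates the odd ones, so the diagonal has $\pm 1$ entries) and then commutes it past the bit-reversal permutation, whereas you place a complex twiddle diagonal on the \emph{output} side, which avoids that commuting step; both are standard variants of Makhoul's reduction and lead to the same two-factor $\BBS$ decomposition.
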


\begin{proof}
    As described in \citet{makhoul1980dct}, $\vS_n$ can be performed as a scaled permutation (separating the even and odd indices of the input, and reversing and negating the odd indices) composed with $\F_n$. Therefore, we may decompose $\vS_n$ as $\vS_n = \vB \; \vP_2 \; \vD \; \vP_1$, where $\vP_1, \vP_2$ are permutations, $\vB \in \B$, and $\vD$ is a diagonal matrix. $\vP_2 \; \vD \; \vP_1$ is simply a permutation matrix with scaled entries, which can be equivalently expressed as $\vD' \; \vP'$ for some diagonal matrix $\vD'$ and permutation $\vP'$. By Lemma~\ref{lem:dmult}, $\vB \; \vD' \in \BBS$. By Theorem~\ref{thm:perm}, $\vP' \in \BBS$. Hence, by Lemma~\ref{lem:prod}, $\vS_n \in (\BBS)^2$.
\end{proof}

\begin{remark}
    An analogous argument shows that the Discrete Cosine Transform is also the
    real part of some matrix in $(\BBS)^2$.
\end{remark}

\begin{lemma} \label{lem:conv}
    Let $\vC_n$ be an $n \times n$ circulant (convolution) matrix. Then $\vC_n \in \BBS$.
\end{lemma}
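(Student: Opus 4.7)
The plan is to exploit the convolution theorem and the DFT factorization already established in Lemma~\ref{lem:dft}. Any $n \times n$ circulant matrix $\vC_n$ is diagonalized by the DFT: $\vC_n = \vF_n^{-1} \vD \vF_n$ for some diagonal $\vD$ whose entries are the DFT of the first column of $\vC_n$. Up to a constant rescaling (which I absorb into $\vD$), we have $\vC_n = \vF_n^* \vD \vF_n$, since $\vF_n$ is unitary up to scale.

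Next, I plug in the decimation-in-time factorization $\vF_n = \vB \vP_\mathrm{br}$ from the proof of Lemma~\ref{lem:dft}, where $\vB \in \B$ and $\vP_\mathrm{br}$ is the bit-reversal permutation. Because $\vP_\mathrm{br}$ is real symmetric and its own inverse, $\vF_n^* = \vP_\mathrm{br} \vB^*$, giving
\[
\vC_n \;=\; \vP_\mathrm{br}\, \vB^* \, \vD \, \vB \, \vP_\mathrm{br}.
\]
The factor $\vP_\mathrm{br} \vD \vP_\mathrm{br}$ in the middle is just a reordering of the diagonal entries of $\vD$, so it is another diagonal matrix $\vD'$. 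I therefore insert $\vP_\mathrm{br} \vP_\mathrm{br} = \vI$ between $\vB^*$ and $\vD$ and again between $\vD$ and $\vB$ to rewrite this as
\[
\vC_n \;=\; (\vP_\mathrm{br}\, \vB^* \, \vP_\mathrm{br})\; \vD' \; (\vP_\mathrm{br}\, \vB\, \vP_\mathrm{br}).
\]

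Now I apply Lemma~\ref{lem:b_bit_reversal}: conjugating a butterfly by $\vP_\mathrm{br}$ on both sides yields (the conjugate transpose of) another butterfly. Concretely, $\vP_\mathrm{br} \vB^* \vP_\mathrm{br} = \vB_1 \in \B$ and $\vP_\mathrm{br} \vB \vP_\mathrm{br} = \vB_2^*$ with $\vB_2 \in \B$. Substituting,
\[
\vC_n \;=\; \vB_1 \, \vD' \, \vB_2^*.
\]
Finally, by Lemma~\ref{lem:dmult}, $\vB_1 \vD' \in \B$, so $\vC_n$ is the product of a butterfly matrix and the conjugate transpose of a butterfly matrix, i.e.\ $\vC_n \in \BBS$.

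The main thing to be careful about is orientation: we have two butterfly factors around $\vD$, one transposed and one not, and we need the \emph{conjugation-by-$\vP_\mathrm{br}$} identity from Lemma~\ref{lem:b_bit_reversal} to flip the transpose on exactly one side so that the final product lands in $\B \cdot \B^*$ rather than $\B^* \cdot \B$. Once that bookkeeping is correct, the rest is routine. A minor variant using the decimation-in-frequency form $\vF_n = \vP_\mathrm{br} (\vB')^*$ from the remark following Lemma~\ref{lem:b_bit_reversal} would also work and avoids invoking Lemma~\ref{lem:b_bit_reversal} a second time.
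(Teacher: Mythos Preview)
Your argument is correct, but it takes a somewhat different route from the paper's. Both start from the convolution theorem $\vC_n = \vF_n^{-1}\vD\,\vF_n$; the divergence is in how the DFT factorization is exploited. The paper factors the \emph{inverse} DFT directly as $\vF_n^{-1} = \vB\,\vP$ with $\vB \in \B$, so that $\vF_n = \vP^{-1}\vB^{-1}$ and the permutations land in the middle automatically: $\vC_n = \vB\,\vP\,\vD\,\vP^{-1}\vB^{-1} = \vB\,\vD'\,\vB^{-1}$. It then invokes Lemma~\ref{lem:inv} (the inverse of a butterfly matrix lies in $\B^*$) to finish. You instead factor $\vF_n = \vB\,\vP_\mathrm{br}$ and pass to $\vF_n^*$, which puts the bit-reversal permutations on the \emph{outside}; you then need Lemma~\ref{lem:b_bit_reversal} to conjugate them away and flip the transpose orientation. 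Your approach trades one auxiliary lemma (\ref{lem:inv}) for another (\ref{lem:b_bit_reversal}); the paper's version is a bit more direct since the permutations cancel in the middle without any extra conjugation identity, but both are short and equally valid.
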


\begin{proof}
    Using Theorem 2.6.4 of \citet{pan-book}, we can express $\vC_n$ as $\vC_n = \left(\vF_n\right)^{-1} \vD \vF_n$ where $\vF_n$ is the Discrete Fourier Transform and $\vD$ is a diagonal matrix. $\left(\vF_n\right)^{-1} = \vB \; \vP$ (with $\vB \in \B$, $\vP$ a permutation), which implies that $\vF_n = \left( \vP \right)^{-1} \left( \vB \right)^{-1}$. Therefore
    \[
        \vC_n = \vB \; \vP \; \vD \; \left( \vP \right)^{-1} \left( \vB \right)^{-1}.
    \]
    The middle three factors have the effect of performing a permutation, scaling each element, and undoing the permutation, which is equivalent to simply scaling by some diagonal matrix $\vD'$. Hence, we are left with
    \[
         \vC_n = \vB \; \vD' \left( \vB \right)^{-1}.
    \]
    By Lemma~\ref{lem:dmult}, $\vB \; \vD' \in \B$. By Lemma~\ref{lem:inv}, $\left( \vB \right)^{-1} \in \B^*$. Hence, $\vC_n \in \BBS$.
\end{proof}

\begin{remark}
    We can expand any $n \times n$ Toeplitz matrix $\vT_n$ into a $2n \times 2n$ circulant matrix (with upper left $n \times n$ submatrix equal to $\vT_n$). Hence, $\vT_n \in (\BBS)^1_2$ by Lemma~\ref{lem:conv}.
\end{remark}

The Fastfood matrix class~\citep{le2013fastfood} can be tightly captured in the
$\BBS$ hierarchy:
\begin{lemma}
  The product $\vS \vH \vD \vP \vH \vB$ where $\vS, \vD, \vB$ are diagonal
  matrices, $\vH$ is the Hadamard transform, and $\vP$ is a permutation matrix,
  is in $(\BBS)^2$.
\end{lemma}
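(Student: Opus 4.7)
The plan is to cleave the six-factor Fastfood product into two $\BBS$ pieces, each of the form $\B \cdot \B^*$, by exploiting (a) the fact that the Hadamard matrix $\vH$ lies in both $\B$ and $\B^*$, and (b) the $\BSB$ factorization of permutations provided by Lemma~\ref{lem:perm_bsb}.

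First I would gather the basic absorptions. Since $\vH \in \B$ by Lemma~\ref{lem:hadamard}, and since the Hadamard matrix is real symmetric (so $\vH^* = \vH$), we also get $\vH \in \B^*$. Applying Lemma~\ref{lem:dmult} on both sides, I obtain $\vA_1 := \vS \vH \vD \in \B$ (diagonal times $\B$ times diagonal) and, using the $\B^*$-version of the same lemma, $\vA_2 := \vH \vB \in \B^*$. This is the key novelty compared to the earlier lemmas in the appendix: the second Hadamard is treated as a $\B^*$-element rather than a $\B$-element, so that the resulting product can ``pair up'' correctly around the permutation.

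Next I would insert the $\BSB$-decomposition of the permutation. By Lemma~\ref{lem:perm_bsb}, write $\vP = \vJ_1^* \vJ_2$ with $\vJ_1, \vJ_2 \in \B$. Then
\[
  \vM \;=\; \vA_1 \, \vP \, \vA_2 \;=\; \vA_1 \vJ_1^* \, \vJ_2 \vA_2 \;=\; \bigl(\vA_1 \vJ_1^*\bigr)\bigl(\vJ_2 \vA_2\bigr).
\]
The first parenthesized factor is $\B \cdot \B^* = \BBS$, and the second is $\B \cdot \B^* = \BBS$. Finally, Lemma~\ref{lem:prod} yields $\vM \in (\BBS)^2$.

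The only place that could plausibly go wrong is the ``symmetric'' use of Lemma~\ref{lem:dmult}: one must notice that $\vH \in \B^*$ (not just $\B$) in order for $\vH \vB$ to land in $\B^*$ rather than $\B$, so that the $\B^*$ slot on the right of the second $\BBS$ is filled. Without this observation, one obtains only $(\BBS)^3$ from the natural grouping $(\vS \vH \vD) \cdot \vP \cdot (\vH \vB)$, because $\B \cdot \B$ is not closed. So the plan's crux is matching up the $\B$ and $\B^*$ slots on either side of the $\vJ_1^* \vJ_2$ decomposition of $\vP$; everything else is a direct application of Lemmas~\ref{lem:dmult}, \ref{lem:perm_bsb}, and \ref{lem:prod}.
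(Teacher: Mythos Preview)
Your proof is correct and matches the paper's argument essentially step for step: both use $\vH \in \B \cap \B^*$ (via symmetry) together with Lemma~\ref{lem:dmult} to obtain $\vS\vH\vD \in \B$ and $\vH\vB \in \B^*$, then invoke the $\BSB$ factorization of $\vP$ from Lemma~\ref{lem:perm_bsb} to split the product into two $\BBS$ factors. The only difference is cosmetic---you spell out the regrouping $(\vA_1\vJ_1^*)(\vJ_2\vA_2)$ explicitly, whereas the paper just writes ``$\vP \in \BSB$'' and concludes.
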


\begin{proof}
  We have shown in Lemma~\ref{lem:hadamard} that $\vH \in \B$.
  Since $\vH$ is symmetric, we also have $\vH = \vH^T \in \BS$.
  As $\B$ and $\BS$ are closed under diagonal multiplication
  (Lemma~\ref{lem:dmult}), $\vS \vH \vD \in \B$ and $\vH \vB \in \BS$.
  Lemma~\ref{lem:perm_bsb} shows that $\vP \in \BSB$.
  We therefore conclude that $\vS \vH \vD \vP \vH \vB \in (\BBS)^2$.
\end{proof}

The two classes of matrices introduced in~\citet{moczulski2015acdc}, called AFDF
and ACDC, are also tightly captured in the $\BBS$ hierarchy:
\begin{lemma}
  Let $\vA \vF^{-1} \vD \vF$ be a product of a diagonal matrix $\vA$, the
  inverse Fourier transform $\vF^{-1}$, another diagonal matrix $\vD$, and the
  Fourier transform $\vF$. Then $\vA \vF^{-1} \vD \vF \in \BBS$.

  Let $\vA \vC^{-1} \vD \vC$ be a product of a diagonal matrix $\vA$, the
  inverse cosine transform $\vC^{-1}$, another diagonal matrix $\vD$, and the
  cosine transform $\vC$. Then $\vA \vC^{-1} \vD \vC = \Re(\vM_1) \Re(\vM_2)$
  for some matrices $\vM_1, \vM_2 \in (\BBS)^2$.
\end{lemma}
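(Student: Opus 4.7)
The plan for the first statement is to adapt the proof of Lemma~\ref{lem:conv} with essentially no change. Since $\vF^{-1} \vD \vF$ is already a circulant matrix (Pan's identity cited in that proof), Lemma~\ref{lem:conv} gives $\vF^{-1} \vD \vF \in \BBS$; writing it as $\vB_1 \vB_2^*$ with $\vB_1, \vB_2 \in \B$, we have $\vA \vF^{-1} \vD \vF = (\vA \vB_1) \vB_2^*$, and by Lemma~\ref{lem:dmult} $\vA \vB_1 \in \B$, so the product lies in $\BBS$. This is routine and I do not expect difficulties.

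For the second statement, the plan is to exhibit $\vM_1, \vM_2 \in (\BBS)^2$ with $\vA \vC^{-1} = \Re(\vM_1)$ and $\vD \vC = \Re(\vM_2)$; since both real parts are real matrices, their ordinary product will equal $\vA \vC^{-1} \vD \vC$. The easy half is $\vM_2$: the remark after the DCT lemma gives $\vN \in (\BBS)^2$ with $\vC = \Re(\vN)$, and since $\vD$ is real, $\vD \vC = \Re(\vD \vN)$. Lemma~\ref{lem:dmult} lets us absorb $\vD$ into the leftmost butterfly factor of $\vN$, so $\vM_2 := \vD \vN \in (\BBS)^2$.

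The main obstacle is handling $\vC^{-1}$, since we only know $\vC$ itself as the real part of a complex matrix. The key observation I would use is that $\BBS$ is closed under conjugate transpose: if $\vM = \vM_1 \vM_2^* \in \BBS$ then $\vM^* = \vM_2 \vM_1^* \in \BBS$, and this lifts to $(\BBS)^2$ since $(\vN_1 \vN_2)^* = \vN_2^* \vN_1^*$. Because $\vC$ is real and orthogonal, $\vC^{-1} = \vC^T = \Re(\vN)^T = \Re(\vN^T)$; using $\vN^T = \overline{\vN^*}$ and $\Re(\overline{z}) = \Re(z)$, we get $\vC^{-1} = \Re(\vN^*)$ with $\vN^* \in (\BBS)^2$. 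Finally, since $\vA$ is real and diagonal, $\vA \vC^{-1} = \Re(\vA \vN^*)$, and by Lemma~\ref{lem:dmult} we may set $\vM_1 := \vA \vN^* \in (\BBS)^2$. Combining the two pieces yields $\vA \vC^{-1} \vD \vC = \Re(\vM_1) \Re(\vM_2)$ as required; the only step requiring care is the closure argument under conjugate transpose, which is short but must be verified at both the $\BBS$ and $(\BBS)^2$ level before invoking it.
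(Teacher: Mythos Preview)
Your argument for the first part (AFDF) is exactly the paper's: invoke Lemma~\ref{lem:conv} for $\vF^{-1}\vD\vF$ and absorb $\vA$ via Lemma~\ref{lem:dmult}. For the second part (ACDC) you follow the same splitting as the paper into $\vA\vC^{-1}$ and $\vD\vC$, each being $\Re$ of something in $(\BBS)^2$, and absorb the diagonals by Lemma~\ref{lem:dmult}.

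The one place where you differ is in justifying that $\vC^{-1}$ is the real part of some matrix in $(\BBS)^2$. The paper simply asserts this (presumably because the inverse DCT admits the same kind of Makhoul-style factorization as the forward DCT), whereas you derive it from the orthogonality of $\vC$ together with the observation that $(\BBS)^2$ is closed under conjugate transpose and $\Re(\vN^T)=\Re(\overline{\vN^*})=\Re(\vN^*)$. Your route is arguably cleaner and more self-contained: it avoids re-deriving a factorization for $\vC^{-1}$ and works uniformly for any real orthogonal transform arising as $\Re(\vN)$ with $\vN\in(\BBS)^w$. The only hypotheses you are silently adding are that $\vA,\vD$ are real (needed so that $\vA\,\Re(\cdot)=\Re(\vA\,\cdot)$) and that $\vC$ is orthogonal; both are standard for the ACDC construction, but worth stating explicitly.
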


\begin{proof}
  We have argued in Lemma~\ref{lem:conv} that $\vF^{-1} \vD \vF \in \BBS$.
  Since $\BBS$ is closed under diagonal multiplication (Lemma~\ref{lem:dmult}),
  we conclude that $\vA \vF^{-1} \vD \vF \in \BBS$.

  We have shown that $\vC$ is the real part of some matrix in $(\BBS)^2$, so
  $\vC^{-1}$ is the real part of some matrix in $(\BBS)^2$ as well.
  Since $\BBS$ is closed under diagonal multiplication (Lemma~\ref{lem:dmult}),
  we conclude that $\vA \vC^{-1}$ and $\vD \vC$ are both real parts of some
  matrices in $(\BBS)^2$, and thus their product has the form
  $\Re(\vM_1) \Re(\vM_2)$ for some $\vM_1, \vM_2 \in (\BBS)^2$.
\end{proof}

\begin{remark}
    Within each butterfly factor matrix of the DFT (excluding the bit reversal permutation) and the Hadamard transform, the columns are pairwise orthogonal and have norm 2. Hence, we can divide all factors by $\sqrt{2}$ to make orthogonal factor matrices. To counteract this scaling, we can add a diagonal matrix with $\sqrt{2}^{\log_2(n)} = \sqrt{n}$ in all entries to the factorization. By doing this we can place all of the above transforms in the $\OBB$ hierarchy (defined in Appendix~\ref{sec:orth-hierarchy}) with the same width and expansion factor.
\end{remark}

\subsection{Multi-dimensional transforms}

Here, we show that, using larger matrices, we are able to similarly capture multi-dimensional versions of the above transforms.

\begin{lemma} \label{lem:2ddft}
    Let $\vF^2_n$ be the 2-dimensional Discrete Fourier Transform (represented as an $n^2 \times n^2$ matrix). Then $\vF^2_n \in (\BBS)^2$.
\end{lemma}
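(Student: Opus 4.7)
The plan is to use the Kronecker product decomposition $\vF_n^2 = \vF_n \otimes \vF_n$ (the standard fact that the multi-dimensional DFT factors as a Kronecker product of one-dimensional DFTs). A direct application of Lemma~\ref{lem:kronecker} would give width $2+2+6 = 10$, which is too loose. Instead, I would exploit the very specific structure of the 1D DFT factorization from Lemma~\ref{lem:dft}, namely $\vF_n = \vB_{1D}\,\vP_\mathrm{br}$ with $\vB_{1D}\in\B$ and $\vP_\mathrm{br}$ the bit-reversal permutation, to obtain a much tighter decomposition.

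First, I would use the mixed-product identity $(\vA_1\vA_2)\otimes(\vC_1\vC_2) = (\vA_1\otimes\vC_1)(\vA_2\otimes\vC_2)$ twice to write
\begin{equation*}
\vF_n \otimes \vF_n \;=\; (\vB_{1D}\otimes\vB_{1D})\,(\vP_\mathrm{br}\otimes\vP_\mathrm{br}) \;=\; (\vB_{1D}\otimes\vI_n)\,(\vI_n\otimes\vB_{1D})\,(\vP_\mathrm{br}\otimes\vP_\mathrm{br}).
\end{equation*}
Since tensor product with a permutation yields another permutation, $\vP_\mathrm{br}\otimes\vP_\mathrm{br}$ is in $\BBS$ by Theorem~\ref{thm:perm}. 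The crux is then to show that the first two factors together lie in $\B$, so that the whole expression is a product $\B\cdot\BBS\subseteq(\BBS)^2$ via Lemma~\ref{lem:prod}.

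The main step, and the one I expect to require the most care, is establishing that $(\vB_{1D}\otimes\vI_n)(\vI_n\otimes\vB_{1D})\in\B$. I would unwind $\vB_{1D} = \vB^{(n)}_n\vB^{(n)}_{n/2}\cdots\vB^{(n)}_2$ and verify two identities about butterfly factor matrices on $n^2$-dimensional space:
\begin{align*}
\vI_n \otimes \vB^{(n)}_k &\;=\; \vB^{(n^2)}_k, \\
\vB^{(n)}_k \otimes \vI_n &\;=\; \vB^{(n^2)}_{kn}.
\end{align*}
The first holds because $\vI_n\otimes\vB^{(n)}_k$ is block-diagonal with $n\cdot(n/k)=n^2/k$ butterfly factors of size $k$, exactly matching the definition of $\vB^{(n^2)}_k$. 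The second follows from observing that $\vB_k\otimes\vI_n = \begin{bmatrix}\vD_1\otimes\vI_n & \vD_2\otimes\vI_n \\ \vD_3\otimes\vI_n & \vD_4\otimes\vI_n\end{bmatrix}$ is itself a butterfly factor of size $kn$ (since $\vD_i\otimes\vI_n$ is diagonal), and $\vB^{(n)}_k\otimes\vI_n$ then consists of $n/k$ such blocks.

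With these two identities in hand, $\vI_n\otimes\vB_{1D}$ produces the ``inner'' factors $\vB^{(n^2)}_n\,\vB^{(n^2)}_{n/2}\cdots\vB^{(n^2)}_2$, while $\vB_{1D}\otimes\vI_n$ produces the ``outer'' factors $\vB^{(n^2)}_{n^2}\,\vB^{(n^2)}_{n^2/2}\cdots\vB^{(n^2)}_{2n}$. Their product, taken in the correct order $(\vB_{1D}\otimes\vI_n)(\vI_n\otimes\vB_{1D})$, gives the full sequence $\vB^{(n^2)}_{n^2}\cdots\vB^{(n^2)}_2$, which is precisely a butterfly matrix of size $n^2$ by Definition~\ref{def:bmatrix}. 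Combining this with $\vP_\mathrm{br}\otimes\vP_\mathrm{br}\in\BBS$ via Lemma~\ref{lem:prod} concludes that $\vF_n^2\in(\BBS)^2$.
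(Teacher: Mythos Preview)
Your proposal is correct and takes essentially the same approach as the paper: both show that $\vF_n^2$ factors as a size-$n^2$ butterfly matrix (the row-wise and column-wise 1D butterfly factors filling out the block sizes $\le n$ and $\ge 2n$ respectively) times a single permutation, then invoke Theorem~\ref{thm:perm} and Lemma~\ref{lem:prod}. You phrase this cleanly via Kronecker-product identities $\vI_n\otimes\vB^{(n)}_k=\vB^{(n^2)}_k$ and $\vB^{(n)}_k\otimes\vI_n=\vB^{(n^2)}_{kn}$, whereas the paper arrives at the identical decomposition through a more descriptive four-step argument using the separation property and an explicit commutation of the row permutation past the row-wise butterflies; your mixed-product identity handles that commutation implicitly.
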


\note{ME: I hope to add a figure here.}
\arnote{A figure would be really useful. Illustrating various steps below on the example would be useful. Also I think it would be better if you explicitly state the ``current state" of the decomposition as you move through various steps.}

\begin{proof}
    The separation property of the 2-D DFT allows us to express its action on an $n \times n$ matrix as the composition of a 1-D DFT on each of its rows and a 1-D DFT on each of its columns. If we view the 2-D DFT as an $n^2 \times n^2$ matrix, its input and outputs will both be column vectors of size $n^2$. As our convention, we list the entries of the input vector in the row-major order corresponding to the $n \times n$ input matrix. Then, we consider the 2-D DFT in four steps, where the first two steps perform the 1-D DFT row-wise, and the second two steps perform the 1-D DFT column-wise:

    Step 1: Permute the columns:\par
    We permute the columns (with a bit reversal permutation), which performs a bit reversal permutation on each row. Viewing the input as a vector, this step corresponds to left multiplication by a permutation matrix $\vP_c$ that permutes the entries of each chunk of size $n$ of the input vector.
    Step 2: Multiply each row by a butterfly matrix\par
    Since the entries of the input were listed in row major order, this step is achieved through multiplication by a block diagonal matrix of $n$ butterfly matrices of size $n$, which can be viewed as a product of butterfly factor matrices $\vB^{(n^2)}_{n} \hdots \vB^{(n^2)}_{\frac{n}{2}} \vB^{(n^2)}_{2}$.

    Step 3: Permute the rows:\par
    We permute the rows (with a bit reversal permutation), which performs a bit reversal permutation on each column. This corresponds to left multiplication by a permutation matrix $\vP_r$. Since we are permuting the rows, $\vP_r$ permutes the entries at the granularity of each $n$-chunk. Since Steps 1 and 2 each performed an identical computation to each $n$-chunk we can move this row permutation before Step 2, combining $\vP_c$ and $\vP_r$ into a single permutation $\vP$.

    Step 4: Multiply each column by a butterfly matrix\par
    Consider multiplication by the first factor matrix. In each row, this matrix is taking linear combinations of adjacent column entries. In our length-$n^2$ vector, these entries will be exactly $n$ indices apart. Therefore this multiplication can be handled by a butterfly factor matrix $\vB^{(n^2)}_{2n}$. Similarly, we find that this butterfly multiplication can be expressed as multiplication by a product of butterfly factor matrices $\vB^{(n^2)}_{n^2} \hdots \vB^{(n^2)}_{\frac{n^2}{2}} \vB^{(n^2)}_{2n}$. Combined with the factor matrices from Step 2, these form a butterfly matrix $\vB$ of size $n^2$.

    In all, we see that the 2-D DFT may be realized as multiplication by a permutation matrix $\vP$ followed by multiplication by a butterfly matrix $\vB$. The same argument as Lemma \ref{lem:dft} shows that $\vF^2_n \in (\BBS)^2$.
\end{proof}

\begin{remark}
    An analogous argument (using the separation property of the respective transforms) can be used to argue that 2-D Discrete Sine and Discrete Cosine transforms are in $(\BBS)^2$, and that 2-D Hadamard Transforms are in $\BBS$.
\end{remark}

\begin{lemma}
     Let $\vC_n^2$ be a 2-dimensional convolution matrix. Then $\vC_n^2 \in \BBS$.
\end{lemma}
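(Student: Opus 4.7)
The plan is to mimic the 1-D argument in Lemma~\ref{lem:conv} almost verbatim, using the 2-D diagonalization of circulant-type operators by the 2-D DFT and the explicit $\vB \vP$-form of $\vF_n^2$ established in Lemma~\ref{lem:2ddft}. Concretely, the 2-D convolution theorem lets us write
\[
\vC_n^2 = (\vF_n^2)^{-1}\, \vD\, \vF_n^2
\]
for some $n^2 \times n^2$ diagonal matrix $\vD$ (the entrywise Fourier transform of the convolution kernel). The whole task is then to massage the right-hand side into an element of $\BBS$.

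First, I would use Lemma~\ref{lem:2ddft}, whose proof actually produces an explicit factorization $\vF_n^2 = \vB\,\vP$ with $\vB \in \B$ a butterfly matrix of size $n^2$ and $\vP$ a permutation matrix (a product of column/row bit-reversal permutations). The same separation-of-variables argument applied to the inverse 2-D DFT yields $(\vF_n^2)^{-1} = \vB''\,\vP''$ for some $\vB'' \in \B$ and permutation $\vP''$; equivalently, inverting the identity for $\vF_n^2$ gives $(\vF_n^2)^{-1} = \vP^{-1}\,\vB^{-1}$. Either way, after substituting both a ``$\vB \vP$-form'' for $(\vF_n^2)^{-1}$ on the left and a ``$\vP^{-1}\vB^{-1}$-form'' for $\vF_n^2$ on the right, I obtain
\[
\vC_n^2 = \vB\,\vP\,\vD\,\vP^{-1}\,\vB^{-1},
\]
where (possibly after renaming) $\vB \in \B$ and $\vP$ is a permutation.

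Next, the middle three factors collapse: for any diagonal $\vD$ and any permutation $\vP$, the matrix $\vP\,\vD\,\vP^{-1}$ is again diagonal (it merely relabels the diagonal entries of $\vD$ according to $\vP$). Call this diagonal matrix $\vD'$. Then
\[
\vC_n^2 = \vB\,\vD'\,\vB^{-1}.
\]
Now Lemma~\ref{lem:dmult} gives $\vB\,\vD' \in \B$ (right-multiplication by a diagonal stays in $\B$), and Lemma~\ref{lem:inv} gives $\vB^{-1} \in \B^*$. Hence $\vC_n^2 \in \B\,\B^* = \BBS$, as desired.

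The only non-routine step is the first one: extracting the ``$\vB\vP$'' form for $\vF_n^2$ (and its inverse). This is essentially already done inside Lemma~\ref{lem:2ddft}, but since that lemma only states the weaker conclusion $\vF_n^2 \in (\BBS)^2$, I would either cite the internal structure of its proof or reprove it in one line: by the separability of the 2-D DFT, $\vF_n^2 = (\vF_n \otimes \vI)(\vI \otimes \vF_n)$, and after conjugating by the appropriate row/column bit-reversals both factors become block-diagonal butterflies, which combine into a single $\vB \in \B$ multiplied by one global bit-reversal permutation $\vP$. With that in hand, the remainder of the argument is purely algebraic and mirrors Lemma~\ref{lem:conv}.
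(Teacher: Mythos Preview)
Your proposal is correct and follows essentially the same route as the paper: diagonalize $\vC_n^2$ via the 2-D DFT, invoke the $\vB\vP$ factorization from the proof of Lemma~\ref{lem:2ddft}, collapse $\vP\vD\vP^{-1}$ to a diagonal, and finish with Lemmas~\ref{lem:dmult} and~\ref{lem:inv}. The paper's proof is terser (it just says ``analogous to Lemma~\ref{lem:conv}''), but your expanded version is exactly what that analogy unpacks to.
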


\begin{proof}
    We can express a 2-D convolution matrix as $\vC_n^2 = (\vF^2_n)^{-1} \vD \vF^2_n$, where $\vD$ is diagonal, $\vF^2_n$ is the 2-D Fourier transform and $(\vF^2_n)^{-1}$ is the inverse 2-D Fourier transform. From the proof of Lemma~\ref{lem:2ddft}, we see that that we can express $\vF^2_n$ (and similarly $(\vF^2_n)^{-1}$) as the product of a butterfly matrix and a permutation matrix. The rest of the argument is analogous to the proof of Lemma~\ref{lem:conv}.
\end{proof}

\begin{remark}
    Using an inductive argument, we can show that all k-dimensional ($k \in \mathbb{Z}$) variants of the above transforms, expressed as $n^k \times n^k$ matrices are contained in $\BBS$ or $(\BBS)^2$. To do this, we use the separation property of the transforms to break them into a $k-1$-dimensional transform (the inductive hypothesis) followed by a 1-dimensional transform.
\end{remark}

\note{ ME : I can flesh this out further if needed }
\arnote{The above would be useful but it is probably low priority right now.}

\section{The Orthogonal Kaleidoscope Hierarchy}
\label{sec:orth-hierarchy}

Through practical application of the butterfly matrices, it has been found useful to constrain them in orthogonality. In Section~\ref{subsec:orth-hierarchy-def} we will modify the existing kaleidoscope hierarchy to create the \textit{orthogonal kaleidoscope hierarchy} $\OBB$. Then, in Section~\ref{subsec:orth-hierarchy-expr}, we will argue that all orthogonal matrices, and as a result all matrices, can also be expressed in this hierarchy in $O(n)$ width. Lastly, in Section~\ref{subsec:orth-hierarchy-cons}, we will argue that permutation matrices and sparse matrices also exist in this hierarchy in $O(1)$ width, which in turn implies a corresponding result for matrices with low-depth arithmetic circuits.

\subsection{Definition}
\label{subsec:orth-hierarchy-def}

The definition of the orthogonal butterfly is identical to the original butterfly, with the constraint that all butterfly factors are orthogonal. We specify this definition below: 

\begin{definition}[Analog of  Definition~\ref{def:bfactor}]
    An \textbf{orthogonal butterfly factor} of size $k\ge 2$ (denoted as $\orth{\vB}_k$) is a butterfly factor that is also orthogonal.
\end{definition}

\begin{definition}[Analog of  Definition~\ref{def:bmatrix}]
\label{obmatrix}
    An \textbf{orthogonal butterfly matrix} of size $n$ (denoted as $\orth{\vB}^{(n)}$) is a butterfly matrix with all butterfly factor matrices being orthogonal.
\end{definition}
Note that the above definition implies that an orthogonal butterfly matrix, as well as its conjugate transpose, is orthogonal.

The orthogonal hierarchy definition nearly mimics the original hierarchy Definition~\ref{def:hierarchy}, as follows:
\begin{definition}\ \label{def:orth_hierarchy}
    \begin{itemize}
        \item  We say that an $n \times n$ matrix $\vM \in \OB$ if we can express $\vM = \orth{\vB}^{(n)}$.
        \item We say that an $n \times n$ matrix $\vM \in \OB^*$ if we can express $\vM = \left[\orth{\vB}^{(n)}\right]^*$.
        \item We say that an $n \times n$ matrix $\vM \in \OBB$ if we can express $\vM = \vM_1 \vD \vM_2$ for some $\vM_1 \in \OB, \vM_2 \in \OB^*$, and diagonal matrix $\vD$. Note that $\vD$ need not be full rank.\nimit{latter statement seems unnecessary (why would reader assume $\vD$ is full rank anyway?)}
        \item Width $w$ and expansion $e$ in $(\OBB)^w_e$ mimic the same definition as in the original hierarchy, using $\OBB$ instead of $\BBS$, such that $\vE \in (\OBB)^w$.
    \end{itemize}
\end{definition}

By padding if necessary, we will assume that $n$ is a power of $2$.

\subsection{Expressivity}
\label{subsec:orth-hierarchy-expr}

In this subsection we prove that all orthogonal (resp.\ unitary) matrices are contained in $\OBB^n$. To do this, we consider the class of \emph{Householder reflections}, given by $\vI - 2\vu\vu^*$ for any unit vector $\vu$ \citep{householder1958qr}:

\begin{lemma} \label{lem:householder}
  All Householder reflections are in $\OBB$ with inner diagonal matrix $\vI$.
\end{lemma}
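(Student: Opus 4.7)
The plan is to exhibit, for any unit vector $\vu \in \mathbb{R}^n$, an orthogonal butterfly matrix $\vB \in \OB$ whose first column equals $\vu$, and then derive the desired representation from the Householder formula. Given such a $\vB$, since $\vB\vB^* = \vI$ and $\vB\ve_1 = \vu$, we immediately obtain
\[
\vI - 2\vu\vu^* \;=\; \vB(\vI - 2\ve_1\ve_1^*)\vB^* \;=\; \vB\,\vD_0\,\vB^*,
\]
where $\vD_0 = \mathrm{diag}(-1, 1, \ldots, 1)$. To eliminate the nontrivial inner diagonal, I would fold $\vD_0$ into the innermost factor of $\vB$: writing $\vB = \vB^{(n)}_n \vB^{(n)}_{n/2} \cdots \vB^{(n)}_2$, note that $\vB^{(n)}_2$ is block diagonal in orthogonal $2\times 2$ matrices, and right-multiplication by $\vD_0$ merely flips the sign of the first column of its first block. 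Sign-flipping a single column of an orthogonal $2\times 2$ matrix preserves orthogonality, so $\vB^{(n)}_2 \vD_0$ is still an orthogonal butterfly factor matrix, whence $\vB\vD_0 \in \OB$. Therefore $\vI - 2\vu\vu^* = (\vB\vD_0)\,\vB^*$ lies in $\OBB$ with inner diagonal $\vI$.

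The main work is then the inductive construction of an orthogonal butterfly $\vB$ with a prescribed first column $\vu$, which I would carry out using the recursive form $\vB = \vB^{(n)}_n \cdot \mathrm{diag}\bigl(\vB^{(n/2)}_1, \vB^{(n/2)}_2\bigr)$ from Definition~\ref{def:bmatrix}. Define the auxiliary vector $\vu' \in \mathbb{R}^{n/2}$ by $u'_i = \sqrt{u_i^2 + u_{i+n/2}^2}$; it is a unit vector since $\sum_i (u'_i)^2 = \sum_i (u_i^2 + u_{i+n/2}^2) = \|\vu\|^2 = 1$. By the inductive hypothesis there is $\vB^{(n/2)}_1 \in \OB$ with first column $\vu'$, and I would take $\vB^{(n/2)}_2 = \vI$, so that the first column of the inner block-diagonal factor equals $(\vu', \mathbf{0})^T$. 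It then remains to choose the orthogonal butterfly factor $\vB^{(n)}_n$ so that its $2\times 2$ block at coordinates $\{i, i+n/2\}$ sends $(u'_i, 0)^T$ to $(u_i, u_{i+n/2})^T$: when $u'_i \neq 0$, take this block to be the rotation whose first column is $(u_i/u'_i,\, u_{i+n/2}/u'_i)^T$, which has unit norm by the defining relation for $u'_i$; when $u'_i = 0$ (which forces $u_i = u_{i+n/2} = 0$) take the block to be $\vI_2$. The base case $n = 2$ is immediate, e.g.\ $\vB = \begin{bmatrix} u_1 & -u_2 \\ u_2 & u_1 \end{bmatrix}$.

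The main obstacle is the recursive construction itself, specifically identifying an auxiliary vector $\vu'$ that (i) is unit and (ii) automatically makes every $2\times 2$ block of the outermost factor orthonormal without further adjustment. The formula $u'_i = \sqrt{u_i^2 + u_{i+n/2}^2}$ is precisely what achieves both. Once this is in hand, the sign-flip absorption and the Householder identity are purely mechanical.
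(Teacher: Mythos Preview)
Your proof is correct. The core construction---the auxiliary vector $u'_i = \sqrt{u_i^2 + u_{i+n/2}^2}$ and the associated $2\times 2$ rotations---is exactly the one the paper uses, but your framing differs. The paper inducts directly on the Householder matrix: it exhibits an orthogonal butterfly factor $\vL$ (with $\vR = \vL^*$) such that $\vL(\vI - 2\vu\vu^*)\vR = \begin{bmatrix} \vI - 2\vw_0\vw_0^* & 0 \\ 0 & \vI \end{bmatrix}$, where $\vw_0$ is precisely your $\vu'$, thereby reducing to a Householder reflection of half the size. You instead isolate the standalone fact ``every unit vector is the first column of some $\vB \in \OB$'' and then invoke $\vI - 2\vu\vu^* = \vB\vD_0\vB^*$ together with the sign-absorption into $\vB^{(n)}_2$. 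Your route yields a reusable intermediate lemma and makes the ``inner diagonal $=\vI$'' conclusion fall out in a single stroke at the end; the paper's route keeps the induction at the level of Householder matrices throughout, which mirrors the target decomposition more transparently. Both rest on the same rotation trick and are equally tight.
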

We will prove this lemma shortly. First,
we use this lemma to present a decomposition for all orthogonal (resp.\ unitary) matrices.\nimit{Seems less awkward to just prove Lemma H.1 first and then Lemma H.2}

\begin{lemma} \label{lem:unitary}
  Let $\vM$ be an $n \times n$ orthogonal/unitary matrix. Then $\vM \in (\OBB)^{n-1}$.
\end{lemma}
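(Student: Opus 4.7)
The strategy is the classical Householder QR decomposition combined with an absorption trick for the trailing diagonal. First, I construct Householders $\vH_1, \ldots, \vH_{n-1}$ in the standard way: $\vH_i$ acts as the identity on the first $i-1$ coordinates and as a Householder reflection on the remaining $n-i+1$, chosen to zero out the $i$th column below the diagonal in the partially reduced matrix. Because $\vM$ is unitary, the residual $\vH_{n-1}\cdots\vH_1\vM$ is a unitary upper-triangular matrix, hence a diagonal matrix $\vD$ with unit-modulus entries, yielding $\vM = \vH_1 \vH_2 \cdots \vH_{n-1} \vD$ (using $\vH_i = \vH_i^{-1}$). By Lemma~\ref{lem:householder}, each $\vH_i \in \OBB$, and trivially $\vD \in \OBB$ (take both outer orthogonal butterflies to be the identity). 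Naively this places $\vM$ in $(\OBB)^n$, one factor too many; I need to save one factor.

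The main obstacle, and the only nontrivial step of the argument, is to merge $\vH_{n-1}$ and $\vD$ into a single element of $\OBB$. The key structural fact that I exploit is that in Householder QR, $\vH_{n-1}$ acts nontrivially only on the last two coordinates, so $\vH_{n-1} = \mathrm{diag}(\vI_{n-2}, \vR)$ where $\vR$ is a $2 \times 2$ reflection. Therefore $\vH_{n-1}\vD$ is block diagonal: its first $n-2$ diagonal entries are simply $d_1, \ldots, d_{n-2}$, and the bottom-right $2 \times 2$ block is a general unitary $\vR\,\mathrm{diag}(d_{n-1}, d_n)$.

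I will show this block-diagonal matrix lies in $\OBB$ by explicit construction. Any $2 \times 2$ unitary matrix admits a decomposition $\vR_1'\vD_0 (\vR_2')^*$ with $\vR_1', \vR_2' \in SO(2)$ and $\vD_0$ diagonal unitary (a standard $4$-parameter parameterization of $U(2)$). Lifting to $n$ dimensions, let $\tilde{\vB}_1, \tilde{\vB}_2 \in \OB$ be the orthogonal butterfly matrices whose only non-identity butterfly factor matrix has block size $2$, and in that factor matrix the only non-identity $2 \times 2$ block is the last one, set to $\vR_1'$ (resp.\ $\vR_2'$); all other $2\times 2$ blocks and all factor matrices of larger block sizes are $\vI$, which is a valid orthogonal butterfly factor. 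Then each $\tilde{\vB}_i$ equals the identity on the first $n-2$ coordinates and acts as $\vR_i'$ on the last two. Setting $\vD' = \mathrm{diag}(d_1, \ldots, d_{n-2}, \vD_0)$ gives $\vH_{n-1}\vD = \tilde{\vB}_1 \vD' \tilde{\vB}_2^* \in \OBB$. Combining, $\vM = \vH_1 \vH_2 \cdots \vH_{n-2}(\vH_{n-1}\vD)$ is a product of $n-1$ matrices in $\OBB$, so $\vM \in (\OBB)^{n-1}$, as required.
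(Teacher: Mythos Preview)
Your proof is correct and follows essentially the same strategy as the paper: Householder QR writes $\vM$ as a product of $n-1$ Householder reflections (each in $\OBB$ by Lemma~\ref{lem:householder}) times a unitary diagonal $\vD$, and the remaining work is to absorb $\vD$ into the last $\OBB$ factor. The only difference is in how that absorption is carried out. The paper simply observes that the rightmost butterfly factor matrix in the full product lies in $\orth{\vB}_n^{(n)}$, and right-multiplication by a unitary diagonal scales its columns by unit-modulus constants, preserving both the sparsity pattern and orthogonality; hence the product remains an orthogonal butterfly factor matrix and nothing else changes. Your route instead exploits the special form of $\vH_{n-1}$ (nontrivial only on the last two coordinates) together with a $U(2)$ Cartan-type factorization to rebuild an explicit $\OBB$ representation of $\vH_{n-1}\vD$. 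Both work, but the paper's column-scaling trick is shorter and does not depend on which Householder the diagonal is merged into.
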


\begin{proof}
    We consider the $\vQ\vR$ decomposition of $\vM$. It is known that we can compose $\vM$ into a product of $n-1$ Householder reflections and an orthogonal/unitary diagonal matrix~\citep{householder1958qr}.\footnote{$\vQ$ is the (orthogonal/unitary) product of $n - 1$ Householder reflections. $\vR$, the remaining upper triangular matrix after performing these reflections, is itself orthogonal/unitary, and therefore diagonal.} From Lemma~\ref{lem:householder}, each Householder reflection is in $\OBB$.
    
    To complete the proof, we argue that $\vR$ can be folded into the rightmost butterfly matrix. Let $\vQ_1$ be the rightmost butterfly factor matrix in $\vQ$ ($\in \orth{\vB}_{n}^{(n)}$). Right multiplication of $\vQ_1$ by $\vR$ scales each columns of $\vQ_1$ by some $c \in \C$ with $||c|| = 1$ ($\vR$ is unitary diagonal). This preserves both the sparsity pattern of $\vQ_1$ and the orthogonality of its columns. Moreover, the norm of each column of $\vQ_1\vR$ is 1. Therefore, $\vQ_1\vR$ is an orthogonal butterfly factor matrix, so $\vM = \vQ\vR \in (\OBB)^{n-1}$, as desired.
\end{proof}

We now return to the proof of Lemma~\ref{lem:householder}

\begin{proof}[Proof of Lemma~\ref{lem:householder}]

Given $\vu \in \C^n$ ($n$ a power of $2$), let $\vu_0 = \vu[:n/2] \in
\C^{n/2}, \vu_1 = \vu[n/2:] \in \C^{n/2}$ denote the first and second halves of $\vu$.

To show that $\vH \in \OBB$ with inner diagonal matrix $\vI$, we proceed by induction. The base case for
$n=2$ is trivial. It suffices to show that there exist
unitary butterfly factors $\vL, \vR$ such that $\vL \vH \vR$ has
the form $\begin{bmatrix} \vI_{n/2} - 2\vv_0\vv_0^* & \vzero \\ \vzero &
  \vI_{n/2} - 2\vv_1\vv_1^*
\end{bmatrix}
$ for some unit vectors $\vv_0, \vv_1 \in \C^{n/2}$.

Define
\begin{equation}
  \label{eq:unitary-B}
  (\vv_0[i], \vv_1[i]) =
  \begin{cases}
    \left( \frac{\vu_0[i]}{\sqrt{|\vu_0[i]|^2+|\vu_1[i]|^2}}, \frac{\vu_1[i]}{\sqrt{|\vu_0[i]|^2+|\vu_1[i]|^2}} \right) &
    \text{ if }|\vu_0[i]|^2+|\vu_1[i]|^2 \neq 0 \\
    \left( 1, 0 \right) & \text{otherwise} %
  \end{cases}
  .
\end{equation}
It is easily checked that
\begin{equation}
  \label{eq:unitary-good}
  \begin{aligned}
    \vv_0[i]^*\vv_0[i] + \vv_1[i]^*\vv_1[i] &= 1 \\
    \vv_0[i]^*\vu_0[i] + \vv_1[i]^*\vu_1[i] &= \sqrt{|\vu_0[i]|^2+|\vu_1[i]|^2} \\
    \vv_1[i]\vu_0[i] - \vv_0[i]^*\vu_1[i] &= 0
  \end{aligned}
  .
\end{equation}

We choose
\[
  \vL =
  \begin{bmatrix}
    \textrm{Diag}(\vv_0^*) & \textrm{Diag}(\vv_1^*) \\
    \textrm{Diag}(\vv_1) & \textrm{Diag}(-\vv_0)
  \end{bmatrix}
\]
\note{ ME: Up above, we use overline to denote the vector complex conjugate. Should this notation be propagated here?} \arnote{Um, sorry but where is this? I cannot seem to locate it.} \note{ ME: Main paper body, line 132.}\arnote{Ah. I suspect we have used $*$ more often than $\bar{}$ so I would think we should change the usage in the main body. However, for now, let us leave this comment in so that we can make a decision later on.}

and $\vR = \vL^*$. $\vL,\vR$ are (permuted) direct sums of blocks of the form
$\begin{bmatrix} \vv_0[i]^* & \vv_1[i]^* \\ \vv_1[i] & -\vv_0[i] \end{bmatrix}$,
which are orthogonal by construction (via~\eqref{eq:unitary-B}). Hence, $\vL \in \orth{\vB}^{(n)}_n$ and $\vR\in (\orth{\vB}^*)^{(n)}_n$. Further,
\begin{align*}
    \vL\vH\vR
    &= 
    \begin{bmatrix}
        \textrm{Diag}(\vv_0^*) & \textrm{Diag}(\vv_1^*) \\
        \textrm{Diag}(\vv_1) & \textrm{Diag}(-\vv_0)
    \end{bmatrix}
    \left( \vI - 2\begin{bmatrix}\vu_0\\\vu_1\end{bmatrix}\begin{bmatrix}\vu_0\\\vu_1\end{bmatrix}^* \right)
    \begin{bmatrix}
        \textrm{Diag}(\vv_0^*) & \textrm{Diag}(\vv_1^*) \\
        \textrm{Diag}(\vv_1) & \textrm{Diag}(-\vv_0)
    \end{bmatrix}^* \\
    &=
    \vI - 2 
    \begin{bmatrix}
        \textrm{Diag}(\vv_0^*) & \textrm{Diag}(\vv_1^*) \\
        \textrm{Diag}(\vv_1) & \textrm{Diag}(-\vv_0)
    \end{bmatrix}
    \begin{bmatrix}\vu_0\\\vu_1\end{bmatrix}
    \begin{bmatrix}\vu_0\\\vu_1\end{bmatrix}^*
    \begin{bmatrix}
        \textrm{Diag}(\vv_0^*) & \textrm{Diag}(\vv_1^*) \\
        \textrm{Diag}(\vv_1) & \textrm{Diag}(-\vv_0)
    \end{bmatrix}^* \\
    &= 
    \vI - 2 
    \dimbmatrix{
        \vv_0^*\circ\vu_0 + \vv_1^*\circ\vu_1 \\ 
        \vv_1\circ\vu_0 - \vv_0\circ\vu_1
    }{\vw}
    {\dimbmatrix{
        \vv_0^*\circ\vu_0 + \vv_1^*\circ\vu_1 \\ 
        \vv_1\circ\vu_0 - \vv_0\circ\vu_1
    }{\vw}}^*,
\end{align*}
where $\circ$ denotes the Hadamard product.
From~\eqref{eq:unitary-good}
\[
  \vw[i] = 
  \begin{cases}
    \sqrt{|\vu_0[i]|^2+|\vu_1[i]|^2} & i \in [n/2] \\
    0 & i \in [n/2:n]
  \end{cases}
\]
Denoting the first half of this vector by $\vw_0\in\C^{n/2}$, we have
\[
  \vL\vH\vR = \begin{bmatrix} \vI-2\vw_0\vw_0^* & \vzero \\ \vzero & \vI \end{bmatrix},
\]
where $\|\vw_0\|_2=\|\vu\|_2 = 1$.
The result follows inductively.
\end{proof}

As an immediate corollary, we can use Singular Value Decomposition to obtain a factorization for an arbitrary $n \times n$ matrix. 

\begin{corollary}
  Let $\vM$ be an arbitrary $n \times n$ matrix. Then, $\vM \in (\OBB)^{2n-1}$, where all but one matrix in the decomposition is orthogonal (unitary).
\end{corollary}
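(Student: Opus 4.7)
The plan is to use the Singular Value Decomposition: every $n \times n$ matrix $\vM$ can be written as $\vM = \vU \Sigma \vV^*$, where $\vU, \vV$ are orthogonal/unitary and $\Sigma$ is a nonnegative diagonal matrix. Since the corollary follows directly from Lemma~\ref{lem:unitary}, the main task is to bookkeep how many $\OBB$ factors arise and check that only the ``$\Sigma$'' factor fails to be orthogonal.

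First, I would invoke Lemma~\ref{lem:unitary} on $\vU$: this expresses $\vU$ as a product of $n-1$ matrices in $\OBB$, each arising from a Householder reflection via Lemma~\ref{lem:householder}, and each of which is orthogonal/unitary (Householder reflections are unitary, and the constructed $\OBB$ factorization in the proof of Lemma~\ref{lem:householder} uses only orthogonal butterfly factors and the identity diagonal). The same argument applied to $\vV$ (and hence $\vV^*$, by taking conjugate transposes of the butterfly factors, which remain orthogonal butterfly matrices by Definition~\ref{obmatrix}) yields another $n-1$ $\OBB$ factors, all orthogonal/unitary.

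Next, I would handle $\Sigma$. Although $\Sigma$ is not orthogonal in general, it lies trivially in $\OBB$: write $\Sigma = \vI \cdot \Sigma \cdot \vI$ and note that $\vI \in \OB$ (take every orthogonal butterfly factor to be the identity, which is orthogonal) and $\vI = \vI^* \in \OB^*$, so this matches the form $\vM_1 \vD \vM_2$ in Definition~\ref{def:orth_hierarchy}. This contributes exactly one (non-orthogonal) $\OBB$ factor.

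Concatenating the three pieces gives $\vM$ as a product of $(n-1) + 1 + (n-1) = 2n-1$ matrices in $\OBB$, with all but the central $\Sigma$-factor orthogonal/unitary, proving $\vM \in (\OBB)^{2n-1}$. There is no real obstacle here; the only subtlety is verifying that $\vV \in (\OBB)^{n-1}$ with unitary factors implies the same for $\vV^*$, which is immediate because the conjugate transpose of an orthogonal butterfly matrix is again an orthogonal butterfly matrix, and the conjugate transpose reverses the product.
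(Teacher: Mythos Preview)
Your proposal is correct and follows essentially the same approach as the paper: SVD into $\vU \Sigma \vV^*$, apply Lemma~\ref{lem:unitary} to the two unitary factors (yielding $n-1$ orthogonal $\OBB$ factors each), and take $\Sigma$ as the single non-orthogonal $\OBB$ factor. The only cosmetic difference is that the paper applies Lemma~\ref{lem:unitary} directly to the unitary matrix $\vV^*$ rather than to $\vV$ followed by your transpose argument, but both routes are valid.
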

\begin{proof}
    By employing Singular Value Decomposition, we can decompose $\vM$ as $\vM = \vU \mathbf{\Sigma} \vV^*$, where $\vU,\vV^*$ are orthogonal and $\mathbf{
    \Sigma}$ is diagonal. By Lemma~\ref{lem:unitary}, $\vU,\vV^* \in (\OBB)^{n-1}$, and trivially $\mathbf{\Sigma} \in \OBB$. Hence, $\vM \in (\OBB)^{2n-1}$. Note that $\mathbf{\Sigma}$ is the only matrix in the decomposition that is not orthogonal (unitary).
\end{proof}

\subsection{Constructions}
\label{subsec:orth-hierarchy-cons}

We show that we can construct $s$-sparse matrices in the $\OBB$
hierarchy with the same width as the $\BBS$ hierarchy. The proof follows a
structure to that of Theorem~\ref{thm:sparse}. We begin by arguing about
permutation and step matrices, then using the same factorization to argue that
matrices with at most $n$ NNZ are contained in $(\BBS)^\sparsewidth$. Then, we will appeal
to a modified sum closure lemma to extend the argument to matrices of general
$s$ NNZ. Similar to Appendix~\ref{sec:hierarchy-ac}, we can use these results to place all matrices with low-depth circuits for  matrix vector multiplication in the $\OBB$ hierarchy.

\subsubsection{Permutations}

We begin by presenting the argument that permutations are included in $\OBB$ as a corollary to Theorem~\ref{thm:perm}.

\begin{corollary}
\label{cor:orth_perm}
  Let $\vP$ be a permutation matrix. Then $\vP \in \OB\OB^*$. 
\end{corollary}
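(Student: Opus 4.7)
The plan is to lift the construction used in Theorem~\ref{thm:perm} (i.e.\ Lemmas~\ref{lem:perm_decomp} and~\ref{lem:mod_bal}) to the orthogonal setting, by observing that every butterfly factor produced in that construction is in fact a permutation matrix, and that permutation matrices are automatically orthogonal. So there is essentially no new content beyond checking sparsity patterns; the corollary just inherits Theorem~\ref{thm:perm}.

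In more detail, I would first recall that Theorem~\ref{thm:perm} writes $\vP = \vL \vR$ with $\vL \in \B$ and $\vR \in \B^*$. The next step is to observe that both $\vL$ and $\vR$ are themselves permutation matrices: $\vL$ is modular-balanced (hence has exactly one nonzero per column, with value $1$), and $\vR = \vB^{-1} = \vB^*$ where $\vB$ is the product of the butterfly factors $\vB_n^{(n)}, \vB_{n/2}^{(n)}, \ldots, \vB_2^{(n)}$ constructed in Lemma~\ref{lem:bal_factor}, each of which is explicitly defined as a permutation that swaps columns $k/2$ apart. Composition of permutations is a permutation, so $\vB$, $\vR$, and $\vL$ are all permutations.

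The key step is then to verify that the individual butterfly factor matrices $\vB_k^{(n)}$ appearing in the decompositions of $\vL$ and $\vR$ are each orthogonal (hence orthogonal butterfly factors in the sense of Definition H.1). For $\vR$, this is immediate from the construction in Lemma~\ref{lem:bal_factor}: each $\vB_k$ there is a $0/1$ permutation matrix that merely swaps certain pairs of columns, which is orthogonal. For $\vL$, one inspects the recursive decomposition from Lemma~\ref{lem:mod_bal}: the diagonal blocks $\vD_1, \vD_2, \vD_3, \vD_4$ in each $\vB_n$ are $0/1$ diagonal matrices with $\vD_1 + \vD_2 = \vI$ and $\vD_3 + \vD_4 = \vI$, and the two cases in the proof show that $\vB_n$ is exactly the permutation matrix that either keeps or swaps row indices $i$ and $i + n/2$. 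Any such $\vB_n$ is orthogonal.

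Putting these pieces together, every butterfly factor matrix in the decomposition of $\vL$ (resp.\ $\vR$) meets Definition H.1, so $\vL \in \OB$ and $\vR \in \OB^*$. Therefore $\vP = \vL \vR \in \OB\OB^*$, as desired. The main (minor) obstacle is just bookkeeping: making sure the definition of ``orthogonal butterfly factor'' used is the relaxed one (Definition H.1: any orthogonal matrix that is a butterfly factor), not the more restrictive $\begin{bmatrix} \vC & \vS \\ -\vS & \vC \end{bmatrix}$ parameterization mentioned earlier, since a swap permutation $\begin{bmatrix} 0 & 1 \\ 1 & 0 \end{bmatrix}$ cannot be written in the latter form with real diagonal $\vC, \vS$. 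Fortunately Definition H.1 is the one used in this appendix, so no modification is needed.
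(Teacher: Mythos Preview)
Your proposal is correct and follows essentially the same approach as the paper: the paper's proof is a one-line appeal to the fact that all butterfly factor matrices constructed in Lemmas~\ref{lem:perm_decomp} and~\ref{lem:mod_bal} are permutation matrices, hence orthogonal, which is exactly what you unpack in detail. Your additional remark about the distinction between the general Definition~H.1 and the more restrictive $\begin{bmatrix}\vC & \vS \\ -\vS & \vC\end{bmatrix}$ parameterization is a valid clarification that the paper leaves implicit.
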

\begin{proof}
    We appeal to the decomposition from Theorem~\ref{thm:perm}, noting that all butterfly factor matrices constructed in the proofs of Lemmas 
    \ref{lem:perm_decomp} and \ref{lem:mod_bal} are permutation matrices, and thus are orthogonal. Hence, $\vP \in \OBB$ where the inner diagonal matrix is $\vI$.
\end{proof}

Similarly, the construction of Lemma~\ref{lem:perm_bsb} also show that permutations are included in $\OB^* \OB$.
\begin{corollary}
  \label{cor:orth_perm_bsb}
  Let $\vP$ be a permutation matrix. Then $\vP \in \OB^*\OB$.
\end{corollary}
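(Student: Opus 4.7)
The plan is to mimic the proof of Lemma~\ref{lem:perm_bsb} verbatim, with all the ambient butterfly objects strengthened to orthogonal butterflies. Two ingredients are required: an orthogonal analog of Theorem~\ref{thm:perm} (decomposition of a permutation into $\OB \cdot \OB^*$), and an orthogonal analog of Lemma~\ref{lem:b_bit_reversal} (the bit-reversal conjugation trick). The first ingredient is Corollary~\ref{cor:orth_perm}, which I can use off the shelf. So the real work is the second ingredient.

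First I would prove the following refinement of Lemma~\ref{lem:b_bit_reversal}: if $\vM_1 \in \OB$, then the butterfly $\vM_2$ satisfying $\vM_1^* = \vP_\mathrm{br} \vM_2 \vP_\mathrm{br}$ can be chosen to also lie in $\OB$. The reason is that the transformation described in the proof sketch of Lemma~\ref{lem:b_bit_reversal} does not alter the values of the $2 \times 2$ blocks that appear inside each butterfly factor matrix; it only relabels which indices a given block acts on (distances $2^k$ in $\vM_1^*$ become distances $2^{\log n - 1 - k}$ in $\vM_2$). Thus, writing $\vM_1^* = \vB^{(n)}_2 \vB^{(n)}_4 \cdots \vB^{(n)}_n$, each $\vB^{(n)}_{2^k}$ in $\vM_1^*$ gets ``mapped'' to a factor $\vB^{(n)}_{2^{\log n - k + 1}}$ in $\vM_2$ whose constituent $2 \times 2$ blocks are a permutation of those of $\vB^{(n)}_{2^k}$. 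Since orthogonality of a butterfly factor reduces to orthogonality of its individual $2 \times 2$ blocks (equivalently the block-diagonal structure of $\vB^{(n)}_k$ decomposes block-wise), and this block-wise orthogonality is preserved under the index relabeling, $\vM_2$ is orthogonal whenever $\vM_1$ is.

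With this refinement in hand, I would run the identical algebraic manipulation as in Lemma~\ref{lem:perm_bsb}. Set $\tilde{\vP} = \vP_\mathrm{br} \vP \vP_\mathrm{br}$, which is again a permutation, so by Corollary~\ref{cor:orth_perm} we have $\tilde{\vP} = \tilde{\vM}_1 \tilde{\vM}_2^*$ with $\tilde{\vM}_1, \tilde{\vM}_2 \in \OB$. Apply the orthogonal bit-reversal lemma to $\tilde{\vM}_2^*$ to obtain $\tilde{\vM}_2^* = \vP_\mathrm{br} \vM_2 \vP_\mathrm{br}$ with $\vM_2 \in \OB$. Pre- and post-multiply both sides of $\tilde{\vP} = \tilde{\vM}_1 \vP_\mathrm{br} \vM_2 \vP_\mathrm{br}$ by $\vP_\mathrm{br}$ to eliminate the outer factors and obtain $\vP = \vP_\mathrm{br} \tilde{\vM}_1 \vP_\mathrm{br} \vM_2$. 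Finally, apply the orthogonal bit-reversal lemma once more to rewrite $\vP_\mathrm{br} \tilde{\vM}_1 \vP_\mathrm{br} = \vM_1^*$ for some $\vM_1 \in \OB$, yielding $\vP = \vM_1^* \vM_2 \in \OB^* \OB$, as required.

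The main obstacle is the orthogonal refinement of Lemma~\ref{lem:b_bit_reversal}; the rest is bookkeeping. A careful bookkeeping approach would be to write out the equivalence $\vM_1^* \vx = \vP_\mathrm{br} \vM_2 \vP_\mathrm{br} \vx$ at the level of individual stages of the $\log_2 n$-step multiplication algorithm, and verify that each $2 \times 2$ block appearing in the $k$-th stage of the $\vM_1^*$ algorithm appears unchanged in the $(\log_2 n - k + 1)$-th stage of the $\vM_2$ algorithm. I expect this to be straightforward but index-heavy, so in the appendix I would state it cleanly as a standalone lemma and defer the careful index manipulation to its proof.
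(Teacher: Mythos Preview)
Your proposal is correct and follows essentially the same approach as the paper, which simply asserts that ``the construction of Lemma~\ref{lem:perm_bsb} also show[s] that permutations are included in $\OB^*\OB$.'' You have just filled in the details the paper omits.

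One minor simplification you could exploit: since Corollary~\ref{cor:orth_perm} observes that the butterfly factors produced by Lemmas~\ref{lem:mod_bal} and~\ref{lem:perm_decomp} are themselves \emph{permutation matrices} (not merely orthogonal), the $\tilde{\vM}_1, \tilde{\vM}_2$ you start from have permutation-valued $2\times 2$ blocks. The bit-reversal conjugation of Lemma~\ref{lem:b_bit_reversal} only relabels these blocks, so the resulting $\vM_1, \vM_2$ also have permutation-matrix factors, and orthogonality is immediate. This means the full orthogonal refinement of Lemma~\ref{lem:b_bit_reversal} for arbitrary $\OB$ matrices, while true, is more general than the corollary requires.
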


To prove the containment of sparse matrices within the $\OBB$ hierarchy, we make use of the following lemma.

\begin{lemma} \label{lem:perm_commute}
     Let $\vP$ be a permutation matrix and $\vD$ a diagonal matrix. Then there exist diagonal matrices $\vD'$ and $\vD''$ such that:
     \[
        \vP \vD = \vD' \vP
        \hspace{40pt}
        \vD \vP = \vP \vD''
        .
     \]
\end{lemma}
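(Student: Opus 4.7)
The plan is to construct $\vD'$ and $\vD''$ explicitly as conjugates of $\vD$ by $\vP$, using the fact that a permutation matrix is orthogonal and that conjugating a diagonal matrix by a permutation simply reorders its diagonal entries.

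First I would set $\vD' := \vP \vD \vP^T$ and $\vD'' := \vP^T \vD \vP$. The key claim to verify is that both of these matrices are diagonal. For $\vD'$, compute entrywise:
\[
(\vP \vD \vP^T)_{i,j} = \sum_{k} \vP_{i,k}\, \vD_{k,k}\, \vP_{j,k}.
\]
Since each column of $\vP$ contains exactly one nonzero entry (equal to $1$), the product $\vP_{i,k} \vP_{j,k}$ is nonzero only when $i=j$, so $\vD'$ is diagonal. The same argument applied to $\vP^T$ (also a permutation matrix) shows that $\vD''$ is diagonal.

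Given this, the desired identities follow immediately from $\vP^T \vP = \vP \vP^T = \vI$:
\[
\vD' \vP = \vP \vD \vP^T \vP = \vP \vD, \qquad \vP \vD'' = \vP \vP^T \vD \vP = \vD \vP.
\]
There is no real obstacle here: the entire content of the lemma is the observation that permutation-conjugation preserves diagonality (corresponding to the intuition that $\vD'$ is obtained by permuting the diagonal entries of $\vD$ according to the permutation associated with $\vP$, and analogously for $\vD''$ under the inverse permutation). This matches the natural formulas $\vD'_{ii} = \vD_{\pi^{-1}(i),\pi^{-1}(i)}$ and $\vD''_{ii} = \vD_{\pi(i),\pi(i)}$, where $\pi$ is the permutation with $\vP \ve_j = \ve_{\pi(j)}$.
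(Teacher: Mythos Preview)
Your proof is correct and takes essentially the same approach as the paper: both arguments construct $\vD'$ and $\vD''$ by permuting the diagonal entries of $\vD$ and verify the identities entrywise. The only difference is cosmetic---the paper defines $\vD'[\sigma(j),\sigma(j)] = \vD[j,j]$ directly in terms of the permutation $\sigma$ and checks $(\vP\vD)[i,j] = (\vD'\vP)[i,j]$, whereas you package the same construction as $\vD' = \vP\vD\vP^T$ and invoke $\vP^T\vP = \vI$.
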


\begin{proof}
   Let $\sigma$ be the permutation such that $\vP[i,j] = \delta_{i,\sigma(j)}$.
   
   Define $\vD'$ such that $\vD'[\sigma(j),\sigma(j)] = \vD[j,j]$. Then, if $i = \sigma(j)$:
   \[
        (\vP \vD)[i,j] = \vP[i,j] \vD[j,j] = \vD'[\sigma(j),\sigma(j)] \vP[\sigma(j),j] = (\vD' \vP)[\sigma(j),j] = (\vD' \vP)[i,j].
   \]
   Otherwise, if $i \not= \sigma(j)$, then $(\vP \vD)[i,j] = 0 = (\vD' \vP)[i,j]$. Hence, $\vP \vD = \vD' \vP$.
   
   Define $\vD''$ such that $\vD''[j,j] = \vD[\sigma(j),\sigma(j)]$. An analogous argument to above shows that $\vD \vP = \vP \vD''$.
\end{proof}

\subsubsection{Step Matrices}

In the $\BBS$ hierarchy (Lemma~\ref{lem:hstep}), we were able to show that horizontal step matrices are butterfly matrices. Here, we present a similar result for the $\OBB$ hierarchy. 

\begin{lemma} \label{lem:orth_hstep}
    Let $\vH$ be an $n \times n$ \hstep. Then we can decompose $\vH = \vD \vO$, where $\vD$ is a diagonal matrix and $\vO \in \OB$.
\end{lemma}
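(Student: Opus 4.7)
The plan is to proceed by induction on $n$, mimicking the decomposition used in the proof of Lemma~\ref{lem:hstep} but normalizing each outer butterfly factor to be orthogonal and absorbing the magnitudes into an outer diagonal. For the base case $n = 2$, a $2 \times 2$ hstep has at most one nonzero per column; by direct enumeration of the sparsity patterns (diagonal, swap, one nonzero entry, or all zero), one can set $\vD[i,i] = \|\vH[i,:]\|_2$ (taking $\vD[i,i] = 0$ for zero rows) and choose $\vO$ to be the $2 \times 2$ orthogonal matrix whose nonzero rows are the unit-normalized nonzero rows of $\vH$, with any zero row of $\vO$ filled by a unit vector orthogonal to the other rows.

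For the inductive step, given an $n \times n$ hstep $\vH$, I would first apply the decomposition of Lemma~\ref{lem:hstep} to write $\vH = \vB \cdot \mathrm{bdiag}(\vH_1, \vH_2)$, where $\vB$ is the $\{0,1\}$-valued butterfly factor constructed there and $\vH_1, \vH_2$ are hsteps of size $n/2$. By the induction hypothesis, $\vH_i = \vD_i \vO_i$ with $\vO_i \in \OB$, so
\[
\vH = \vB \cdot \mathrm{bdiag}(\vD_1, \vD_2) \cdot \mathrm{bdiag}(\vO_1, \vO_2).
\]
The key step is to show that $\vB \cdot \mathrm{bdiag}(\vD_1, \vD_2) = \vD \tilde{\vB}$ for some diagonal $\vD$ and orthogonal butterfly factor $\tilde{\vB}$. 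Given this, setting $\vO := \tilde{\vB} \cdot \mathrm{bdiag}(\vO_1, \vO_2)$ yields an orthogonal butterfly matrix (by Definition~\ref{obmatrix}), and $\vH = \vD \vO$ as desired.

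To establish the key step, I would perform case analysis on the $2 \times 2$ block $\vB_{(i)}$ of $\vB$ at positions $(i, i+n/2) \times (i, i+n/2)$, combined with the corresponding entries $d_1 = \vD_1[i,i]$ and $d_2 = \vD_2[i,i]$ (which are the $\ell^2$-norms of rows $i$ of $\vH_1, \vH_2$, hence zero exactly when the corresponding half-rows of $\vH$ vanish). The hstep condition on $\vH$ rules out many $\vB_{(i)}$ patterns---crucially, rows $i$ and $i+n/2$ of $\vH$ cannot both have nonzeros in the same half of columns---leaving only a small list of cases (identity, swap, $\left[\begin{smallmatrix} 1 & 1 \\ 0 & 0 \end{smallmatrix}\right]$, $\left[\begin{smallmatrix} 0 & 0 \\ 1 & 1 \end{smallmatrix}\right]$, and their ``one-sided'' variants arising when a half-row is zero). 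In each case, the product $\vB_{(i)} \cdot \mathrm{diag}(d_1, d_2)$ either has rank at most one (when one of rows $i$, $i+n/2$ of $\vH$ is zero, allowing us to set the corresponding entry of $\vD$ to $0$ and freely choose the other row of $\tilde{\vB}_{(i)}$ to complete orthogonality), or equals a multiple of an orthonormal $2 \times 2$ block such as $\frac{1}{\sqrt{d_1^2 + d_2^2}} \left[\begin{smallmatrix} d_2 & -d_1 \\ d_1 & d_2 \end{smallmatrix}\right]$ with scaling factor $\sqrt{d_1^2 + d_2^2}$.

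The main obstacle will be carefully enumerating the possible patterns and checking each, using the hstep condition to eliminate the inconsistent ones and verifying that in every remaining case the zero entries of the outer diagonal $\vD$ always line up with the rows where $\tilde{\vB}_{(i)}$ has ``free'' entries to be chosen. Once this case analysis is complete, the diagonal $\vD$ is assembled from the scaling factors $\vD[i,i], \vD[i+n/2,i+n/2]$ produced per block, and $\tilde{\vB}$ is the block-diagonal assembly of the $\tilde{\vB}_{(i)}$, which is by construction an orthogonal butterfly factor of size $n$, completing the induction.
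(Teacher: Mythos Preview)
Your proposal is correct and follows essentially the same route as the paper: induct on $n$, invoke the decomposition of Lemma~\ref{lem:hstep}, and show that the outermost (\{0,1\}-valued) butterfly factor times the diagonal coming from the inductive hypothesis can be rewritten as a diagonal times an \emph{orthogonal} butterfly factor, by case analysis on each $2\times 2$ block. The paper organizes this slightly differently---it first isolates the claim ``any butterfly factor arising from Lemma~\ref{lem:hstep}, after right-multiplication by a diagonal, factors as diagonal times orthogonal butterfly factor'' (its Cases~1--4, using that each column of such a factor has at most one effective nonzero), and then plugs this into the induction---whereas you fold the case analysis directly into the inductive step; but the content is the same. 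One small imprecision: in the both-rows-nonzero case the product $\vB_{(i)}\cdot\mathrm{diag}(d_1,d_2)$ need not be a \emph{scalar} multiple of an orthogonal matrix (e.g.\ when $\vB_{(i)}=\vI$ it equals $\mathrm{diag}(d_1,d_2)$); what you actually need, and what the hstep constraint gives you, is that its two rows are orthogonal, so it factors as $\mathrm{diag}(e_1,e_2)\cdot\tilde{\vB}_{(i)}$ with $\tilde{\vB}_{(i)}$ orthogonal.
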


\begin{proof}
    Throughout the proof, we make reference to the original horizontal step matrix construction given in Lemma~\ref{lem:hstep} and its proof.
    
    To begin, we show that an arbitrary $2^k \times 2^k$ butterfly factor $\vH_{2^k}$ in the decomposition of $\vH$ can be expressed as the product of a diagonal matrix and an orthogonal butterfly factor. Since a butterfly factor is direct sum of $2 \times 2$ matrices, there is a permutation matrix $\vP_{2^k}$ such that conjugation of $\vH_{2^k}$ by $\vP_{2^k}$ gives a block diagonal matrix $\vH'_{2^k}$ of $\frac{n}{2}$ $2 \times 2$ matrices, i.e.
    \[
        \vP_{2^k}\vH_{2^k}\vP_{2^k}^*=\vH'_{2^k}.
    \] 
    (See Figure~\ref{fig:orthstep} for an illustration.) Specifically, $\vP_{2^k}$ is the permutation where:
    \[
        \vP_s[2i,:] = \ve_{i}^T
        \hspace{40pt}
        \vP_s[2i+1,:] = \ve_{i+\frac{n}{2}}^T.
    \]
    
    \begin{figure}[H] 
        \orthstepfigure
        \caption{Block diagonalization of $\vH_8$}
        \label{fig:orthstep}
    \end{figure}
    
    We argue that each of these $2 \times 2$ blocks can be decomposed into a diagonal matrix times an orthogonal matrix. Note that the butterfly factor matrices constructed in the proof of Lemma~\ref{lem:hstep} each have at most one non-zero entry per column. Hence, there are 4 cases to consider. Note that matrices with at most one non-zero entry are exhausted by Cases 1 and 2. 
    
    \textsf{Case 1:} $\begin{bmatrix} a & 0 \\ 0 & b \end{bmatrix} = \dimbmatrix{ a & 0 \\ 0 & b}{\vD} \dimbmatrix{ 1 & 0 \\ 0 & 1}{\vO}$ 
    
    \textsf{Case 2:} $\begin{bmatrix} 0 & a \\ b & 0 \end{bmatrix} = \dimbmatrix{ a & 0 \\ 0 & b}{\vD} \dimbmatrix{ 0 & 1 \\ 1 & 0}{\vO}$ 

    \textsf{Case 3:} $\begin{bmatrix} a & b \\ 0 & 0 \end{bmatrix} = \dimbmatrix{ \sqrt{a^2 + b^2} & 0 \\ 0 & 0}{\vD} \dimbmatrix{ \frac{a}{\sqrt{a^2 + b^2}} & \frac{b}{\sqrt{a^2 + b^2}} \\ \frac{b}{\sqrt{a^2 + b^2}} & \frac{-a}{\sqrt{a^2 + b^2}}}{\vO}, \hspace{20pt} a,b \not=0$ 

    \textsf{Case 4:} $\begin{bmatrix} 0 & 0 \\ a & b \end{bmatrix} = \dimbmatrix{ 0 & 0 \\ 0 & \sqrt{a^2 + b^2}}{\vD} \dimbmatrix{ \frac{b}{\sqrt{a^2 + b^2}} & \frac{-a}{\sqrt{a^2 + b^2}} \\ \frac{a}{\sqrt{a^2 + b^2}} & \frac{b}{\sqrt{a^2 + b^2}}}{\vO}, \hspace{20pt} a,b \not=0$
    
    In the last two cases, $\vO$ is a $2 \times 2$ rotation matrix, which is commonly known to be orthogonal. Assume that we perform the above decomposition on all of the blocks of $\vH'_{2^k}$ in parallel, therefore expressing $\vH'_{2^k} = \vD' \vO'$. We now have
    \[
        \vH_{2^k} = \vP_{2^k}^* \vD' \vO' \vP_{2^k}.
    \]
    By Lemma~\ref{lem:perm_commute}, we can rewrite this as
    \[
        \vH_{2^k} = \vD'' \vP_{2^k}^* \vO' \vP_{2^k}.
    \]
    Note that $\vP_{2^k}^* \vO' \vP_{2^k}$ is the product of three orthogonal matrices, and thus orthogonal. Additionally, the construction of $\vP_{2^k}$ ensures that $\vP_{2^k}^* \vO' \vP_{2^k}$ is butterfly factor.\footnote{Conjugation by $\vP_{2^k}$ is an isomorphism from $2^k \times 2^k$ butterfly factors onto block diagonal matrices with $2^{k-1}, 2 \times 2$ blocks. Therefore, conjugation by $\vP_{2^k}^{-1} = \vP_{2^k}^*$ maps a block diagonal matrix to a butterfly factor. } Hence, $\vH_{2^k}$ can be expressed as the product of a diagonal matrix and an orthogonal butterfly factor, as desired.
    
    Now, we show that this decomposition of butterfly factors implies Lemma~\ref{lem:orth_hstep}. By performing this decomposition in parallel on each butterfly factor, we conclude that any butterfly factor matrix $\vH_{2^k}^{(n)}$ of $\vH$ can be decomposed as $\vH_{2^k}^{(n)} = \vD_{2^k} \vO_{2^k}^{(n)}$.\footnote{Note that a block diagonal matrix composed of orthogonal matrices is, itself, orthogonal.}
    
    We complete the argument by induction on $n$. The base case $n=2$ holds by the observation about butterfly factor matrices above. Assume that any horizontal step matrix of size $\frac{n}{2} \times \frac{n}{2}$ can be expressed as a diagonal matrix times an orthogonal butterfly matrix. Now, consider the $n \times n$ horizontal step matrix $\vH$. From Lemma~\ref{lem:hstep}, $\vH$ can be expressed as
    \[
        \vH = \vB_n^{(n)} \begin{bmatrix}
            \vH_1 & 0 \\ 0 & \vH_2
        \end{bmatrix},
    \]
    where $\vH_1,\vH_2$ are $\frac{n}{2} \times \frac{n}{2}$ horizontal step matrices. By our inductive hypothesis,
    \[
        \vH = \vB_n^{(n)} \vD_1 \begin{bmatrix}
            \vO_1 & 0 \\ 0 & \vO_2
        \end{bmatrix},  
    \]
    where $\vD_1$ is diagonal and $\vO_1,\vO_2$ are $\frac{n}{2} \times \frac{n}{2}$ matrices in $\OB$. However, $\vB_n^{(n)} \vD_1$ is a butterfly factor, and therefore can be expressed as $\vD_n \vO_n^{(n)}$. Therefore,
    \[
        \vH = \vD_n \vO_n^{(n)} \begin{bmatrix}
            \vO_1 & 0 \\ 0 & \vO_2
        \end{bmatrix}
        = \vD_n \vO,
    \]
    with $\vO \in \OB$, as desired.
\end{proof}

Just as with the $\BBS$ hierarchy, the decomposition of vertical step matrices falls out as an immediate corollary to the horizontal step matrix proof. 

\begin{corollary} \label{cor:orth_vstep}
    Let $\vV$ be a \vstep. Then we can decompose $\vV = \vO^* \vD$, where $\vD$ is a diagonal matrix and $\vO^* \in \OB^*$.
\end{corollary}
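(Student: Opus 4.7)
The plan is to obtain this corollary by simply taking the conjugate transpose of the decomposition already guaranteed for horizontal step matrices in Lemma~\ref{lem:orth_hstep}. By Definition~\ref{def:hstep}, $\vV$ being a vertical step matrix means exactly that $\vV^*$ is a horizontal step matrix. So the first step is to apply Lemma~\ref{lem:orth_hstep} to $\vV^*$, yielding a decomposition $\vV^* = \vD_0 \vO_0$, where $\vD_0$ is diagonal and $\vO_0 \in \OB$.

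Taking the conjugate transpose of both sides then gives $\vV = \vO_0^* \vD_0^*$. Since the conjugate transpose of a diagonal matrix is again diagonal, we can set $\vD := \vD_0^*$ to obtain a diagonal matrix, and by the definition of $\OB^*$ in Definition~\ref{def:orth_hierarchy} we immediately have $\vO_0^* \in \OB^*$. Setting $\vO^* := \vO_0^*$ then yields the desired decomposition $\vV = \vO^* \vD$.

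There is really no obstacle here; the corollary is purely a notational consequence of Lemma~\ref{lem:orth_hstep} together with the symmetric definitions of \hstep\ and \vstep. The only minor subtlety worth noting explicitly is that ``$\vO^* \in \OB^*$'' holds by definition whenever $\vO \in \OB$, which is why no additional structural work is required beyond the transpose.
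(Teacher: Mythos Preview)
Your proposal is correct and is exactly the argument the paper has in mind: the corollary is stated immediately after Lemma~\ref{lem:orth_hstep} with the remark that it ``falls out as an immediate corollary to the horizontal step matrix proof,'' i.e., by taking the conjugate transpose of the decomposition of $\vV^*$.
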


\subsubsection{Sparse Matrices}

Now that we have argued about the decomposition of permutation and step matrices
in the $\OBB$ hierarchy, we can leverage the construction from
Lemma~\ref{lem:sparse<=n} to argue about matrices with at most $n$ NNZ.

\begin{corollary} \label{cor:orth_sparse<=n}
    Let $\vS$ be an $n \times n$ matrix with at most $n$ NNZ. Then, $\vS \in (\OBB)^{\orthsparsewidth}$.
\end{corollary}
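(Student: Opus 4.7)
The plan is to mimic the proof of Lemma~\ref{lem:sparse<=n} almost verbatim, but replace each ingredient with its orthogonal counterpart and use Lemma~\ref{lem:perm_commute} to shuffle leftover diagonals. Concretely, I would start by invoking the decomposition already established in the proof of Lemma~\ref{lem:sparse<=n}: write $\vS = \vP_1 \vH \vP_2 \vV \vP_3$ where each $\vP_\ell$ is a permutation, $\vH$ is a horizontal step matrix, and $\vV$ is a vertical step matrix.

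Next I would rewrite each factor using the orthogonal tools we now have. By Lemma~\ref{lem:orth_hstep}, $\vH = \vD_H \vO_H$ with $\vO_H \in \OB$. By Corollary~\ref{cor:orth_vstep}, $\vV = \vO_V^* \vD_V$ with $\vO_V^* \in \OB^*$. By Corollary~\ref{cor:orth_perm}, the outer permutations decompose as $\vP_1 = \vO_{1L}\, \vI\, \vO_{1R}^*$ and $\vP_3 = \vO_{3L}\, \vI\, \vO_{3R}^*$, where all the $\vO$'s are simultaneously orthogonal butterfly matrices \emph{and} permutation matrices (the construction in Theorem~\ref{thm:perm} only uses permutation butterfly factors). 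Crucially, by Corollary~\ref{cor:orth_perm_bsb} the middle permutation admits the reversed factorization $\vP_2 = \vO_{2L}^* \vO_{2R}$ with $\vO_{2L}^* \in \OB^*$, $\vO_{2R} \in \OB$, again with both factors being permutations.

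Substituting everything, I get
\[
\vS = \vO_{1L}\vO_{1R}^*\, \vD_H\, \vO_H\, \vO_{2L}^*\, \vO_{2R}\, \vO_V^*\, \vD_V\, \vO_{3L}\vO_{3R}^*.
\]
Now I use Lemma~\ref{lem:perm_commute} to push the diagonal $\vD_H$ to the left past the permutation $\vO_{1R}^*$ (yielding some diagonal $\vD_H'$), and to push $\vD_V$ to the right past the permutation $\vO_{3L}$ (yielding some diagonal $\vD_V'$). This gives
\[
\vS = \underbrace{\vO_{1L}\, \vD_H'\, \vO_{1R}^*}_{\in\,\OBB} \cdot \underbrace{\vO_H\, \vI\, \vO_{2L}^*}_{\in\,\OBB} \cdot \underbrace{\vO_{2R}\, \vI\, \vO_V^*}_{\in\,\OBB} \cdot \underbrace{\vO_{3L}\, \vD_V'\, \vO_{3R}^*}_{\in\,\OBB},
\]
which exhibits $\vS$ as a product of exactly $\orthsparsewidth = 4$ elements of $\OBB$, as desired.

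The main obstacle is purely bookkeeping: one must check that the $\OB$/$\OB^*$ orientations match up inside each group of three factors, and that the two diagonal matrices $\vD_H, \vD_V$ can be absorbed into exactly the right $\OBB$ slots. The reason this works with no width overhead is precisely the availability of the ``reversed'' permutation factorization $\vP_2 \in \OB^*\OB$ from Corollary~\ref{cor:orth_perm_bsb}: this is what lets the $\OB^*$ of $\vH$'s group meet the $\OB$ of $\vV$'s group through $\vP_2$ without having to pay an extra $\OBB$ factor. Everything else is routine once the diagonal-permutation commutation is applied.
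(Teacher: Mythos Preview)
Your proposal is correct and follows essentially the same approach as the paper: start from the $\vP_1\vH\vP_2\vV\vP_3$ decomposition of Lemma~\ref{lem:sparse<=n}, replace $\vH$ and $\vV$ via Lemma~\ref{lem:orth_hstep} and Corollary~\ref{cor:orth_vstep}, use the reversed factorization $\vP_2\in\OB^*\OB$ from Corollary~\ref{cor:orth_perm_bsb}, and commute the two leftover diagonals past the adjacent permutation butterfly factors via Lemma~\ref{lem:perm_commute} to obtain four $\OBB$ blocks. Your identification of the $\OB^*\OB$ factorization of $\vP_2$ as the key width-saving step matches the paper's argument exactly.
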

\begin{proof}
    We use the construction from Lemma~\ref{lem:sparse<=n}, along with Lemma \ref{lem:orth_hstep} and Corollary \ref{cor:orth_vstep}, to express $\vS$ as:
    \begin{align*}
        \vS &= 
        \dimmatrix{ \vO_1 \vO'_1 }{\vP_1} 
        \dimmatrix{ \vD_2 \vO_2 }{\vH} 
        \dimmatrix{ \vO_3 \vO'_3 }{\vP_2} 
        \dimmatrix{ \vO'_4 \vD_4  }{\vV} 
        \dimmatrix{ \vO_5 \vO'_5 }{\vP_3},
    \end{align*}
    with each $\vO_i \in \OB$, each $\vO'_j \in \OB^*$, and each $\vD_k$ diagonal.
    Since $\vP_2$ is a permutation, by Corollary~\ref{cor:orth_perm_bsb}, we can
    write it as $\tilde{\vO_3}' \tilde{\vO_3}$ for some $\tilde{\vO_3}' \in \OB^*$
    and $\tilde{\vO_3} \in \OB$.
    Moreover, noting that $\vO'_1$ and $\vO_5$ are permutations, we make use of
    Lemma~\ref{lem:perm_commute} to re-express $\vS$ as:
   \begin{align*}
        \vS &= 
        \dimmatrix{ \vO_1 \vD'_2 \vO'_1 }{\vM_1} 
        \dimmatrix{ \vO_2 \tilde{\vO_3}' }{\vM_2}
        \dimmatrix{ \tilde{\vO_3} \vO'_4 }{\vM_3}
        \dimmatrix{ \vO_5 \vD'_4 \vO'_5 }{\vM_4}.
    \end{align*}
    Note that each $\vM_\ell \in \OBB$. Hence, $\vS \in (\OBB)^{\orthsparsewidth}$, as desired.
\end{proof}

Just as in Appendix~\ref{sec:hierarchy-sparse}, we would like to extend this orthogonal-based construction to capture matrices of general sparsity. To accomplish this, we introduce an addition closure lemma analogous to Lemma~\ref{lem:orth_sum} for the $\OBB$ hierarchy.

\begin{lemma} \label{lem:orth_sum}
    Let $\vA_1, \hdots, \vA_m$ be $k \times k$ matrices in $(\OBB)^w_e$ then $\sum_{i=1}^{m}\vA_i \in (\OBB)^{mw}_{4e}$.
\end{lemma}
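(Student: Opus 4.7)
My plan is to follow the proof of Lemma~\ref{lem:sum} step for step, checking that each $\BBS$-specific argument has an $\OBB$ replacement. The outer skeleton is unchanged: for each $i$, write $\vA_i = \vS \vE_{\vA_i} \vS^T$ with $\vE_{\vA_i} \in (\OBB)^w$ on $\mathbb{F}^{ek\times ek}$; construct the $4ek\times 4ek$ gadget matrices $\vM_i$ of the form $\begin{bmatrix}\vI&\vE_{\vA_i}&0&0\\0&\vI&0&0\\0&0&0&0\\0&0&0&0\end{bmatrix}$; apply the telescoping identity $\prod_i\begin{bmatrix}\vI&\vE_{\vA_i}\\0&\vI\end{bmatrix} = \begin{bmatrix}\vI&\sum_i\vE_{\vA_i}\\0&\vI\end{bmatrix}$; and finally conjugate by $\vS,\vP_2,\vS^T$ exactly as in equation~\eqref{eq:permfold} to extract $\sum_i\vA_i$. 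The permutation $\vP_2$ is in $\OB^*\OB$ by Corollary~\ref{cor:orth_perm_bsb} and folds into the last gadget $\vM_m$. The ``core'' matrix $\vS_i := \mathrm{Diag}(\vI_{ek},\vI_{ek},\vE_{\vA_i},0)$ is in $(\OBB)^w$ by the orthogonal analog of Remark~\ref{rem:diag}: identity and zero blocks trivially belong to $\OBB$ (via $\vI=\vI\cdot\vI\cdot\vI^*$ and $0=\vI\cdot 0\cdot\vI^*$), and block-diagonalization preserves width when $e=1$ since each $\OB$ factor is closed under block-diagonalization of orthogonal butterfly matrices.

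The essential new step is showing $\vM_i\in(\OBB)^w$. I will replace the $\BBS$-style factorization $\vM_i=\vL\vS_i\vP_1\vR$ used in Lemma~\ref{lem:sum}. Since
\(
\vL = \begin{bmatrix}\vI_{2ek}&\vI_{2ek}\\0&0\end{bmatrix}
\)
and
\(
\vR = \begin{bmatrix}\vI_{2ek}&0\\\vI_{2ek}&0\end{bmatrix}
\)
are non-orthogonal butterfly factors, I split each as a product of an orthogonal butterfly factor matrix and a diagonal:
\[
    \vL = \sqrt{2}\,\vD_L\,\vH,\qquad \vR = \sqrt{2}\,\vH\,\vD_R,\qquad \vH := \tfrac{1}{\sqrt 2}\begin{bmatrix}\vI_{2ek}&\vI_{2ek}\\\vI_{2ek}&-\vI_{2ek}\end{bmatrix},\ \ \vD_L=\vD_R=\mathrm{diag}(\vI_{2ek},0).
\]
One checks directly that $\vH$ is an orthogonal butterfly factor matrix of size $4ek$ and that the two identities above hold. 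The orthogonal $\vH$ then folds into the outermost (size-$4ek$) butterfly slot of the leftmost (resp.\ rightmost) $\OB$ factor of $\vS_i$'s $\OBB$ decomposition---those slots are the identity since $\vS_i$'s underlying butterfly factors have block size at most $ek$. The permutation $\vP_1$ (a butterfly factor matrix at size $2ek$) folds in at the next layer similarly, using Corollary~\ref{cor:orth_perm}.

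The main obstacle is routing the leftover diagonals $\vD_L,\vD_R$, because in $\OBB$ they sit outside the orthogonal butterfly factor $\vO$ of the enclosing block $\vO\vD\vO^*$, and unlike the $\BBS$ case (Lemma~\ref{lem:dmult}) left-multiplication of $\vO$ by an arbitrary diagonal does not give an orthogonal butterfly matrix. My plan is to exploit the fact that $\vD_L$ and $\vD_R$ are each the orthogonal projection onto the first $2ek$ coordinates, which is block-diagonal at granularity $2ek$. After pulling this projection through the (block-diagonal at granularity $ek$) outer $\OB$ factor using the commutation identity $\mathrm{diag}(\vI,0)\cdot\mathrm{Diag}(\vO_1,\vO_2,\ldots)=\mathrm{Diag}(\vO_1,\vO_2,\ldots)\cdot\mathrm{diag}(\vI,0)$ for appropriately aligned blocks, the projection meets the inner diagonal $\vD$ of the $\OBB$ block and collapses into it (Definition~\ref{def:orth_hierarchy} explicitly allows a rank-deficient $\vD$). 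Once $\vM_i\in(\OBB)^w$ is established, the product $\prod_i \vM_i\in(\OBB)^{mw}$ by concatenation, and the outer conjugation proceeds exactly as in Lemma~\ref{lem:sum} to yield $\sum_i\vA_i\in(\OBB)^{mw}_{4e}$, as desired.
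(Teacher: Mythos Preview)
Your plan has a genuine gap at the step where you try to route the leftover projections $\vD_L=\vD_R=\mathrm{diag}(\vI_{2ek},0)$ into the inner diagonal of the $\OBB$ block. After you fold the orthogonal Hadamard factor $\vH$ into the outermost size-$4ek$ slot of the leftmost $\OB$ factor of $\vS_i$, that factor becomes $\vH\vL_w$ with $\vL_w=\mathrm{Diag}(\vI,\vI,\vO_w,\vI)$. This product is \emph{not} block-diagonal at granularity $ek$ (or even $2ek$): the factor $\vH$ mixes the top and bottom $2ek$-halves. Consequently the commutation identity you invoke, $\mathrm{diag}(\vI,0)\cdot\mathrm{Diag}(\vO_1,\vO_2,\dots)=\mathrm{Diag}(\vO_1,\vO_2,\dots)\cdot\mathrm{diag}(\vI,0)$, does not apply to $\vD_L\cdot(\vH\vL_w)$. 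Concretely, $\vD_L\vH=\tfrac{1}{\sqrt 2}\left[\begin{smallmatrix}\vI&\vI\\0&0\end{smallmatrix}\right]$ while $\vH\vD_L=\tfrac{1}{\sqrt 2}\left[\begin{smallmatrix}\vI&0\\\vI&0\end{smallmatrix}\right]$, so $\vD_L$ simply cannot be pushed past $\vH$; and you cannot reach $\vL_w$ without going through $\vH$. Nor can you rewrite $\vL$ in the opposite order $\vL=\vH'\vD'$ with $\vH'$ orthogonal (the repeated columns of $\vL$ preclude this), so there is no alternative placement of the projection that lands it between the $\OB$ and $\OB^*$ parts of a single $\OBB$ block. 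If you instead treat $\vD_L\vH$ and $\vH\vD_R$ as their own $\OBB$ factors, each $\vM_i$ costs width $w+2$ and you only get $(\OBB)^{m(w+2)}_{4e}$, not the claimed $(\OBB)^{mw}_{4e}$.

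The paper avoids this obstruction by changing the gadget itself. Instead of $\left[\begin{smallmatrix}\vI&\vE_{\vA_i}&0&0\\0&\vI&0&0\\0&0&0&0\\0&0&0&0\end{smallmatrix}\right]$, it uses $\vM_i=\left[\begin{smallmatrix}\vI&0&0&\vE_{\vA_i}\\0&\vI&0&0\\\vI&0&0&-\vE_{\vA_i}\\0&\vI&0&0\end{smallmatrix}\right]$, which factors as $\sqrt{2}\,\vO\,\vK\,\vP$ with $\vO$ the orthogonal Hadamard factor, $\vP$ a permutation, and $\vK=\mathrm{Diag}(\vI,\vI,\vE_{\vA_i},0)$. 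Here \emph{both} outer factors are already orthogonal butterfly factor matrices, so there is no leftover non-orthogonal diagonal to route; the scalar $\sqrt{2}$ is absorbed into the inner diagonal of $\vK$. This forces a different telescoping identity (on $4\times 4$ block matrices with a repeated $\vI$ in the first column) and a different extraction matrix $\vQ$ at the end, but it preserves the width bound $mw$. The key takeaway is that in the orthogonal hierarchy you cannot freely push rank-deficient diagonals through orthogonal butterfly factors the way Lemma~\ref{lem:dmult} lets you in $\BBS$; the fix is to design the gadget so that no such push is needed.
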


With Lemma~\ref{lem:orth_sum}, we arrive at the following Corollary on general orthogonal sparsity.

\begin{corollary} \label{cor:orth_sparse>n}
    Let $\vS$ be an $n \times n$ matrix with $s$ NNZ. Then, $\vS \in (\OBB)^{\orthsparsewidth\ceil{\frac{s}{n}}}_4$.
\end{corollary}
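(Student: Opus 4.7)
\begin{proof}[Proof plan for Corollary~\ref{cor:orth_sparse>n}]
The plan is to mirror the proof of Theorem~\ref{thm:sparse}, replacing the $\BBS$ ingredients with their $\OBB$ analogues that have already been established earlier in this appendix. In particular, the two inputs I need are (i) a bound on matrices with at most $n$ nonzeros in $\OBB$, which is exactly Corollary~\ref{cor:orth_sparse<=n}, and (ii) an addition closure statement for $\OBB$, which is Lemma~\ref{lem:orth_sum}.
\end{proof}

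\noindent\textbf{Step 1 (decomposition into $n$-sparse summands).} First I would partition the set of nonzero entries of $\vS$ into $m = \ceil{s/n}$ disjoint groups, each of size at most $n$. Let $\vA_i$ be the $n \times n$ matrix whose entries agree with $\vS$ on the $i$-th group and are zero elsewhere, so that $\vS = \sum_{i=1}^{m} \vA_i$ with each $\vA_i$ having at most $n$ NNZ.

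\noindent\textbf{Step 2 (place each summand in $\OBB$).} By Corollary~\ref{cor:orth_sparse<=n}, each summand satisfies $\vA_i \in (\OBB)^{\orthsparsewidth}$, i.e.\ $(\OBB)^{\orthsparsewidth}_{1}$ with no expansion.

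\noindent\textbf{Step 3 (invoke the orthogonal sum closure).} Finally I would apply Lemma~\ref{lem:orth_sum} with width $w = \orthsparsewidth$, expansion $e = 1$, and number of summands $m = \ceil{s/n}$. This yields
\[
    \vS \;=\; \sum_{i=1}^{m} \vA_i \;\in\; (\OBB)^{\orthsparsewidth \ceil{s/n}}_{4},
\]
which is the claimed containment.

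\noindent\textbf{Main obstacle.} Since Corollary~\ref{cor:orth_sparse<=n} and Lemma~\ref{lem:orth_sum} are assumed, this corollary reduces to a one-line appeal, exactly as Theorem~\ref{thm:sparse} reduces to Lemma~\ref{lem:sum} and Lemma~\ref{lem:sparse<=n} in the non-orthogonal setting. The only substantive work sits inside Lemma~\ref{lem:orth_sum}: namely, adapting the sum-closure construction from Lemma~\ref{lem:sum} so that every butterfly factor appearing in the ``adder'' gadget (the $\vL, \vR, \vP_1$ and block-identity pieces used to realize $\begin{bmatrix}\vI & \vA+\vB\\ 0 & \vI\end{bmatrix} = \begin{bmatrix}\vI & \vA\\ 0 & \vI\end{bmatrix}\begin{bmatrix}\vI & \vB\\ 0 & \vI\end{bmatrix}$) can be re-expressed as diagonal$\,\times\,$orthogonal butterfly, and that the necessary permutations land in $\OB\OB^*$ or $\OB^*\OB$ via Corollaries~\ref{cor:orth_perm} and~\ref{cor:orth_perm_bsb}. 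Once Lemma~\ref{lem:orth_sum} is in hand, Corollary~\ref{cor:orth_sparse>n} itself is immediate.
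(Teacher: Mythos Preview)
Your proposal is correct and follows essentially the same argument as the paper: decompose $\vS$ into $\ceil{s/n}$ summands with at most $n$ nonzeros each, place each in $(\OBB)^{\orthsparsewidth}$ via Corollary~\ref{cor:orth_sparse<=n}, and then invoke the orthogonal sum-closure Lemma~\ref{lem:orth_sum}. The paper's own proof is a one-line appeal to these two ingredients, exactly as you outline.
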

\begin{proof}
    Just as in the proof of Theorem~\ref{thm:sparse}, we accomplish this using a
    sum of $\ceil{\frac{s}{n}}$ matrices of at most $n$ NNZ. For handling the sum of matrices, we need to appeal to Lemma~\ref{lem:orth_sum}.
\end{proof}

To conclude the argument, we give the proof of Lemma~\ref{lem:orth_sum}.

\begin{proof}[Proof of Lemma~\ref{lem:orth_sum}]
     For each $1 \leq i \leq m$, let $\vE_{\vA_i} \in \mathbb{F}^{ek \times ek}$ be defined such that $\vA_i = \vS \vE_{\vA_i} \vS^*$ (with $\vS$ as in Definition~\ref{def:hierarchy}). Note that $\vE_{\vA_i} \in (\OBB)^w$. Consider matrices of the form:
    \begin{equation*}
        \dimbmatrix{
            \vI_{ek} & 0 & 0 & \vE_{\vA_i} \\ 
            0 & \vI_{ek} & 0 & 0 \\
            \vI_{ek} & 0 & 0 & \text{-}\vE_{\vA_i} \\ 
            0 & \vI_{ek} & 0 & 0
         }{
        {\vM_i \in \mathbb{F}^{4ek \times 4ek}}}
        =
        \sqrt{2}
        \dimbmatrix{ 
            \invsqtwo \vI_{2ek} & \invsqtwo \vI_{2ek} \\ 
            \invsqtwo \vI_{2ek} & \text{-}\invsqtwo \vI_{2ek} 
        }{\vO \in \orth{\vB}^{(4ek)}_{4ek}}
        \dimbmatrix{
            \vI_{ek} & 0 & 0 & 0 \\
            0 & \vI_{ek} & 0 & 0 \\
            0 & 0 & \vE_{\vA_i} & 0 \\
            0 & 0 & 0 & 0
        }{\vK}
        \dimbmatrix{
            \vI_{ek} & 0 & 0 & 0 \\
            0 & \vI_{ek} & 0 & 0 \\
            0 & 0 & 0 & \vI_{ek} \\
            0 & 0 & \vI_{ek} & 0 \\
        }{\vP \in \orth{\vB}^{(4ek)}_{2ek}}
    \end{equation*}
        
    Note that $\vK$, a block diagonal matrix composed of matrices in $(\OBB)^w$, is itself in $(\OBB)^w$ since
    \begin{equation*}
        \vK =
        \prod_{j=1}^{w}
         \dimbmatrix{
            \vI_{ek} & 0 & 0 & 0 \\ 0 & \vI_{ek} & 0 & 0 \\
            0 & 0 & \vO_j & 0 \\ 0 & 0 & 0 & \vI_{ek}
         }{\vL_j \in \OB}
         \dimbmatrix{
            \vI_{ek} & 0 & 0 & 0 \\ 0 & \vI_{ek} & 0 & 0 \\
            0 & 0 & \vD_j & 0 \\ 0 & 0 & 0 & 0
         }{\mathrm{Diagonal}}
         \dimbmatrix{
            \vI_{ek} & 0 & 0 & 0 \\ 0 & \vI_{ek} & 0 & 0 \\
            0 & 0 & \vO'_j & 0 \\ 0 & 0 & 0 & \vI_{ek}
         }{\vR_j \in \OB^*},
    \end{equation*}
    where each $\vO_j$ is a $ek \times ek$ matrix in $\OB$, and each $\vO'_j$ is a $ek \times ek$ matrix in $\OB^*$. $\vL_w$ (the leftmost factor) is a block diagonal matrix composed of 4 $ek \times ek$ matrices in $\OB$. Therefore, we can fold $\vO$ into this factor (since a butterfly factor in $\orth{\vB}^{(4ek)}_{4ek}$ was not yet used in $\vL_w$) to conclude that $\vO\vL_w \in \OB$. Similarly, since no btterfly factor from $\orth{\vB}^{(4ek)}_{2ek}$ has been used in $\vR_1$, we may fold $\vP$ into $\vR_1$ to conclude that $\vR_1\vP \in \OB^*$. Finally, we address the scalar multiple of $\sqrt{2}$ by multiplying all entries of any diagonal matrix in the decomposition of $\vK$ by $\sqrt{2}$. Hence, we may conclude that $\vM_i \in (\OBB)^w$.
    
    Through repeated application ($m$ times) of the identity
    \begin{equation}
        \begin{bmatrix} 
            \vI & \vA_1 & 0 & \vB_1 \\
            0 & \vI & 0 & 0 \\
            \vI & \vA_2 & 0 & \vB_2 \\
            0 & \vI & 0 & 0 \\
        \end{bmatrix}
        \begin{bmatrix} 
            \vI & 0 & 0 & \vC_1 \\
            0 & \vI & 0 & 0 \\
            \vI & 0 & 0 & \vC_2 \\
            0 & \vI & 0 & 0 \\
        \end{bmatrix}
        = 
        \begin{bmatrix} 
           \vI & \vA_1 + \vB_1 & 0 & \vC_1 \\
           0 & \vI & 0 & 0 \\
           \vI & \vA_2 + \vB_2 & 0 & \vC_1 \\
           0 & \vI & 0 & 0 \\
        \end{bmatrix},
    \end{equation}
    we see that
    \begin{equation*}
        \prod_{i=1}^{m} \vM_i
        =
        \dimbmatrix{
           \vI_{ek} & \sum_{i=2}^{m} \vE_{\vA_i} & 0 &  \vE_{\vA_1} \\
           0 & \vI_{ek} & 0 & 0 \\
           \vI_{ek} & \text{-} \vE_{\vA_m} + \sum_{i=2}^{m-1}  \vE_{\vA_i} & 0 &  \vE_{\vA_1} \\
           0 & \vI_{ek} & 0 & 0 \\
        }{\vM \in \mathbb{F}^{4en \times 4en}}.
    \end{equation*}
    
    Therefore, $\vM \in (\OBB)^{mw}$. Next, we note that
    \begin{equation*}
        \sum_{i=1}^{m}\vA_i = \vS \vM 
        \dimbmatrix{
            0 & \vI_{ek} & 0 & \vI_{ek} \\ 
            \vI_{ek} & 0 & \vI_{ek} & 0 \\ 
            0 & \vI_{ek} & 0 & \text{-}\vI_{ek} \\ 
            \vI_{ek} & 0 & \text{-}\vI_{ek} & 0
        }{\vQ}
        \vS^T.
    \end{equation*}
        
    We would like to show that we can fold $\vQ$ into the rightmost $\OBB$ factor of $\vM$. The rightmost matrix in the decomposition of $\vM$ is $\vP$. Note that 
    \begin{equation*}
        \vP\vQ = 
        \begin{bmatrix}
            0 & \vI_{ek} & 0 & \vI_{ek} \\ 
            \vI_{ek} & 0 & \vI_{ek} & 0 \\ 
            \vI_{ek} & 0 & \text{-}\vI_{ek} & 0 \\
            0 & \vI_{ek} & 0 & \text{-}\vI_{ek} 
        \end{bmatrix}
        =
        \sqrt{2}
        \dimbmatrix{
            0 & \vI_{ek} & 0 & 0 \\ 
            \vI_{ek} & 0 & 0 & 0 \\ 
            0 & 0 & \vI_{ek} & 0 \\
            0 & 0 & 0 & \vI_{ek} 
        }{\orth{\vB}^{(4ek)}_{2ek}}
        \dimbmatrix{
            \invsqtwo \vI_{2ek} & \invsqtwo \vI_{2ek} \\ 
            \invsqtwo \vI_{2ek} & \text{-}\invsqtwo \vI_{2ek} 
        }{\orth{\vB}^{(4ek)}_{4ek}}.
    \end{equation*}
    
    Just as earlier, the factor of $\sqrt{2}$ can be multiplied through any diagonal matrix. Also, these two orthogonal butterfly factor matrices can be folded into the the rightmost $\vR$ matrix (the decomposition of $\vK$ above does not use these two, rightmost butterfly factors). Hence, $\sum_{i=1}^{m}\vA_i \in (\OBB)^{mw}_{4e}$, as desired.
\end{proof}

\subsubsection{Arithmetic Circuits}

Just as in Theorem~\ref{thm:ac-bbs}, we can use the sparsity result in Lemma~\ref{lem:orth_sum} to place matrices with low-depth (linear) arithmetic circuits for matrix vector multiplication in the $\OBB$ hierarchy. 

\begin{corollary}
    Let $\vM$ be an $n \times n$ matrix such that matrix-vector multiplication of $\vM$ times an arbitrary vector $\vv$ can be represented as a be a linear arithmetic circuit $C$ comprised of $s$ gates (including inputs) and having depth $d$. Then, $\vM \in (\OBB)^{O(d)}_{O(\frac{s}{n})}$.  
\end{corollary}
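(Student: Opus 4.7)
The plan is to mimic the proof of Theorem~\ref{thm:ac-bbs} in Appendix~\ref{sec:hierarchy-ac}, substituting the orthogonal analogs of its two key building blocks that have already been established earlier in this appendix. Specifically, Corollary~\ref{cor:orth_perm} places every permutation matrix in $\OBB$, and Corollary~\ref{cor:orth_sparse>n} places every $n \times n$ matrix with $s$ NNZ in $(\OBB)^{O(\lceil s/n \rceil)}_4$; these play exactly the roles of Theorem~\ref{thm:perm} and Theorem~\ref{thm:sparse} in the original argument.

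The first step is to lift the circuit $C$ to a matrix factorization in the same way as the proof of Theorem~\ref{thm:ac-bbs}: choose $s'$ to be the smallest power of $2$ with $s' \ge s$, and for each layer $k = 1, \dots, d$ of the circuit build an $s' \times s'$ matrix $\vM_k$ whose rows pass previously-computed gate values through via the identity, compute the new gate values in that layer as linear combinations $\alpha_i \ve_{i_1}^T + \beta_i \ve_{i_2}^T$ of two earlier gates, and zero-pad the remaining rows. Let $\vP$ be the permutation that moves the $n$ output gates to the top of the final state vector. Then $\vM \vv$ equals the first $n$ entries of $\vP\, \vM_d \cdots \vM_1 [\vv;\, 0]$, and each $\vM_k$ has at most $2s'$ NNZ in $s'$ columns. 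Since $s' = \Theta(s)$, this gives an expansion factor $O(s/n)$ relative to $n$.

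The second step applies the orthogonal machinery: by Corollary~\ref{cor:orth_sparse>n}, each $\vM_k$, viewed as an $s' \times s'$ matrix with at most $2s'$ NNZ, lies in $(\OBB)^{O(1)}_{4}$, and by Corollary~\ref{cor:orth_perm}, $\vP \in \OBB$. To conclude $\vM \in (\OBB)^{O(d)}_{O(s/n)}$ we need the product-closure analog of Lemma~\ref{lem:prod} for $\OBB$, namely that $(\OBB)^{w_1}_e \cdot (\OBB)^{w_2}_e \subseteq (\OBB)^{w_1 + w_2}_e$. The proof of Lemma~\ref{lem:prod} transfers essentially verbatim because the middle diagonal inside each $\OBB$ factor can absorb the diagonal matrix $\bigl[\begin{smallmatrix}\vI_n & 0\\ 0 & 0\end{smallmatrix}\bigr]$ that arises from the $\vS$ padding, playing the role that Lemma~\ref{lem:dmult} played in the $\BBS$ case. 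Stringing together the $d+1$ factors then yields the claimed membership, with total width $O(d)$ and expansion $O(s/n)$.

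The main obstacle is purely bookkeeping rather than conceptual: the orthogonal appendix only states closure explicitly for sums (Lemma~\ref{lem:orth_sum}), so one must record the straightforward orthogonal analogs of the product-closure lemma and the diagonal-absorption lemma. Once these two short auxiliary facts are in hand, the proof is a one-to-one translation of the proof of Theorem~\ref{thm:ac-bbs}, with the only numeric change being that the constant hidden inside the $O(1)$ sparsity-width bound and the $O(d)$ circuit-width bound come from $\orthsparsewidth$ rather than $\sparsewidth$.
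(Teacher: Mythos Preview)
Your proposal is correct and follows essentially the same route as the paper: reuse the layered sparse-matrix factorization from the proof of Theorem~\ref{thm:ac-bbs} and swap in the orthogonal analogs of the permutation and sparsity lemmas. The paper's own proof is a two-line remark doing exactly this.

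Two small points of divergence are worth noting. First, the paper invokes Corollary~\ref{cor:orth_sparse<=n} (the at-most-$n$-NNZ case) rather than Corollary~\ref{cor:orth_sparse>n}. After zero-padding each $\vM_k$ to size $2s' \times 2s'$, it has at most $2s'$ nonzeros, so Corollary~\ref{cor:orth_sparse<=n} places it in $(\OBB)^{O(1)}$ with \emph{no} further expansion; the product of the $d+1$ factors is then literally a concatenation of $\OBB$ factorizations at size $2s'$, and no product-closure lemma with expansion is needed at all. Your route through Corollary~\ref{cor:orth_sparse>n} works too but forces you to handle the expansion-$4$ embedding at every layer, which is the bookkeeping you flagged.

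Second, your specific justification for the $\OBB$ product-closure analog is not quite right: you cannot absorb the projection diagonal $\mathrm{diag}(\vI_n,0)$ into the ``middle diagonal'' of an adjacent $\OBB$ factor, because that diagonal is sandwiched between two \emph{orthogonal} butterflies, and orthogonal butterflies do not absorb arbitrary diagonals the way general butterflies do (Lemma~\ref{lem:dmult} fails in $\OB$). The fix is immediate, though: $\mathrm{diag}(\vI_n,0)$ is itself trivially in $\OBB$ (take both orthogonal butterflies to be the identity), so each application of product-closure costs at most one extra $\OBB$ factor, still $O(d)$ in total. With that correction your argument goes through.
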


\begin{proof}
    We use the construction given in the proof of Theorem~\ref{thm:ac-bbs}. Corollaries \ref{cor:orth_sparse<=n} and \ref{cor:orth_perm} allow us to recover the same width and expansion factor with the $\OBB$ hierarchy.
\end{proof}

\section{ReLU network with structured weight matrices}
\label{sec:relu}

We show that for any neural network with ReLU nonlinearities and whose weight
matrices have arithmetic circuits with few gates, its linear network counterpart
(obtained by removing all the ReLU's) also has an arithmetic circuit with not
too many more gates.
This implies that in trying to find the smallest arithmetic circuit augmented
with ReLU gates to represent a ReLU network, one might as well try to find the
smallest arithmetic circuits that represent the matrix-vector multiplication of
each weight matrix.

\begin{proposition} \label{thm:relu_network}
  Consider a neural network architecture consisting of $L$ layers with weight
  matrices $\vW_1, \dots, \vW_L \in \mathbb{F}^{n \times n}$ and ReLU nonlinearity in
  between.

  Suppose that matrix-vector multiplication of $\vW_i$ times an arbitrary vector
  $\vv$ can be represented as a linear arithmetic circuit with $s_i$ gates
  (including inputs).
  Then there exists an arithmetic circuit augmented with ReLU gates with
  $\sum_{i=1}^{L} s_i + Ln$ total gates
  that computes the output $\relu(\vW_L(\dots \relu(\vW_1 \vv)))$ of the network for
  an arbitrary input vector $\vv$.

  Conversely, if there is an arithmetic circuit augmented with ReLU gates with
  $s$ total gates that computes all the activations of the network
  $\relu(\vW_1 \vv), \dots, \relu(\vW_L \dots \relu(\vW_1 \vv))$ for an
  arbitrary input $\vv$, then there exists an arithmetic circuit augmented with
  ReLU gates with $2s + 2Ln$ total gates that computes the activations of the
  network without ReLU $\vW_1 \vv, \dots, \vW_L \dots \vW_1 \vv$.
\end{proposition}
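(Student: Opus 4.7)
The plan is to handle the two directions of the proposition separately. For the forward direction, since each weight matrix $\vW_i$ admits a linear arithmetic circuit $\mathcal{C}_i$ of size $s_i$ computing $\vu \mapsto \vW_i \vu$, I would simply concatenate $\mathcal{C}_1, \dots, \mathcal{C}_L$ in series and interpose $n$ ReLU gates after each $\mathcal{C}_i$ (one per output coordinate) to apply the nonlinearity. The resulting circuit computes the full forward pass $\relu(\vW_L(\dots \relu(\vW_1 \vv)))$ using $\sum_{i=1}^L s_i + Ln$ gates, which matches the claimed bound.

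The converse is the more interesting direction, and my plan hinges on the algebraic identity $x = \relu(x) - \relu(-x)$, which lets me ``undo'' the ReLU applied at the end of each layer. First, I would decompose $C$ into $L$ layerwise sub-circuits $C_1, \dots, C_L$, where $C_k$ takes $\vx_{k-1}$ as input and outputs $\vx_k = \relu(\vW_k \vx_{k-1})$, with $|C_k| = s_k$ and $\sum_k s_k \le s$. For each $k$, I would then build a ``linearized'' version $\tilde{C}_k$ computing the linear map $\vu \mapsto \vW_k \vu$ by (i) spending $n$ gates to form $-\vu$, (ii) running two copies of $C_k$, on $\vu$ and on $-\vu$, at total cost $2 s_k$, to obtain $\relu(\vW_k \vu)$ and $\relu(-\vW_k \vu)$, and (iii) spending $n$ subtraction gates to produce their difference, which equals $\vW_k \vu$. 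Chaining the $\tilde{C}_k$ in series (feeding the output of $\tilde{C}_k$ into $\tilde{C}_{k+1}$) yields a single circuit whose output after the $k$-th block is exactly $\vW_k \cdots \vW_1 \vv$, producing all the desired ReLU-free activations with total gate count $\sum_{k=1}^L (2 s_k + 2n) = 2s + 2Ln$.

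The main obstacle is justifying the clean layered decomposition $C = C_1 \circ \cdots \circ C_L$ with $\sum_k |C_k| \le s$ used in the converse. When the given circuit is already presented layer-by-layer, this is immediate. For a general circuit in which internal gates may be shared across the computation of multiple activations, my plan is to assign each gate of $C$ to the smallest $k$ for which it is an ancestor of $\vx_k$; this yields a disjoint partition of the gates of $C$ summing to at most $s$, and the gates assigned to layer $k$ together with the already-computed $\vx_{k-1}$ suffice to reproduce $\vx_k$ by the feedforward structure of the network. If some pathological cross-layer gate-sharing prevents this partition from being ``layer-closed,'' a preliminary normalization of $C$ into layered form (duplicating gates along cross-layer paths at a constant-factor overhead) restores the needed structure without affecting the asymptotic bound.
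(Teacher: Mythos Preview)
Your proposal follows essentially the same approach as the paper's proof: for the forward direction, concatenate the layer circuits with $n$ ReLU gates each; for the converse, exploit the identity $x = \relu(x) - \relu(-x)$ by running two copies of each layer's sub-circuit (on $\vu$ and on $-\vu$) together with $n$ negation and $n$ subtraction gates per layer, yielding $2s + 2Ln$ total. Your final paragraph on justifying the layered decomposition is actually more careful than the paper (which simply asserts the existence of per-layer sub-circuits computing $\relu(\vW_k\,\cdot)$ on arbitrary inputs without further discussion), but the core argument and the gate accounting are identical.
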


\begin{proof}[Proof of Proposition~\ref{thm:relu_network}]
  To compute the output of the network $\relu(\vW_L(\dots \relu(\vW_1 \vv)))$,
  we first compute the matrix-vector product $\vW_1 \vv$ with an arithmetic
  circuit of $s_1$ gates by assumption, and use $n$ other ReLU gates to compute the
  pointwise ReLU.
  Then we repeat the process for layer $2, 3, \dots, L$, using the arithmetic
  circuits of $\vW_1, \dots, \vW_L$ and $Ln$ additional gates for ReLU.
  In total we obtain an arithmetic circuit augmented with ReLU gates with
  $\sum_{i=1}^{L} s_i + Ln$ total gates.

  Conversely, to build an arithmetic circuit augmented with ReLU gates to
  compute $\vW_1 \vv, \dots, \vW_L \dots \vW_1 \vv$, we pass $\vv$ and then
  $-\vv$ through the circuit that computes $\relu(\vW_1 \vx)$ for an arbitrary
  $\vx$ to get $\relu(\vW_1 \vv)$ and $\relu(-\vW_1 \vv)$.
  Noting that $x = \relu(x) - \relu(-x)$, we can use $n$ additional gates to
  compute $\vW_1 \vv$ from $\relu(\vW_1 \vv)$ and $\relu(-\vW_1 \vv)$.

  Repeat the process for layer $2, 3, \dots, L$ (for example, pass $\vW_1 \vv$
  and $-\vW_1 \vv$ to the circuit that computes $\vW_2 \vx$ for an arbitrary
  $\vx$ on layer 2).
  Overall we need to double the circuits that computes all the activations of
  the network $\relu(\vW_1 \vv), \dots, \relu(\vW_L \dots \relu(\vW_1 \vv))$,
  requiring $2s$ gates.
  We also need $n$ additional gates per layer to compute the negation of the
  input to that layer (e.g.\ computing $-\vv$ from $\vv$), and $n$ additional
  gates per layer to subtract the output of the ReLU circuit (e.g.\ computing
  $\vW_1 \vv$ from $\relu(\vW_1 \vv)$ and $\relu(-\vW_1 \vv)$.)
  Therefore we can construct an arithmetic circuit augmented with ReLU gates
  with $2s + 2L$ total gates that
  computes the activations of the network without ReLU
  $\vW_1 \vv, \dots, \vW_L \dots \vW_1 \vv$.

\end{proof}

We now prove an asymptotic bound on the VC dimension of a ReLU network whose
weight matrices are kaleidoscope matrices with bounded width and expansion.

\begin{proposition}\label{prop:vc}
  Let $\mathcal{F}$ be the class of ReLU neural networks consisting of $L$
  layers, where each layer is a K-matrix with width and expansion bounded by
  some constant $C$.
  Suppose that the network has $W$ total parameters.
  Let $\sign \mathcal{F}$ denote the corresponding classification functions:
  $\{x \mapsto \sign f(x) : f \in \mathcal{F}\}$.
  Then this class has VC dimension:
\begin{equation*}
  \VCdim(\sign \mathcal{F}) = O(L W \log W).
\end{equation*}
\end{proposition}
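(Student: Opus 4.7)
The plan is to show that the ReLU K-matrix network fits into the framework of piecewise polynomial networks analyzed by \cite{bartlett2017nearly}, whose layer-by-layer sign-pattern argument yields an $O(LW \log W)$ VC dimension bound via Warren--Milnor counting.

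First I would argue that each K-matrix layer is a polynomial map of degree $O(\log n)$ in its own free parameters. By Definition~\ref{def:hierarchy}, a matrix in $(\BBS)^C_C$ factors as a product of $O(\log n)$ sparse butterfly factor matrices whose nonzero entries are exactly the trainable parameters, so the matrix is multilinear of degree $O(\log n)$ in these parameters. Composing $L$ such layers through ReLU nonlinearities, the preactivation of any unit at layer $\ell$ is piecewise polynomial in the full parameter vector $\theta \in \mathbb{R}^W$, of degree at most $O(\ell \log n)$ on each region where the ReLU activation patterns at layers $1, \ldots, \ell-1$ are fixed.

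Next, I would invoke the layer-by-layer sign-pattern recursion of \cite{bartlett2017nearly}. On any sample of $m$ inputs, within any region of parameter space on which the ReLU patterns at layers $<\ell$ are fixed, Warren--Milnor bounds the number of refined subregions contributed by the $O(m U_\ell)$ ReLU polynomials at layer $\ell$, of degree $O(\ell \log n)$ in the $O(W)$ parameters, by $(O(mL \log n))^{O(W)}$. Multiplying across all $L$ layers gives a total number of sign patterns bounded by $(O(mL\log n))^{O(WL)}$. Inverting $2^m \le (O(mL\log n))^{O(WL)}$ yields $m = O(LW \log(LW \log n))$, which simplifies to $O(LW \log W)$ since $L \le W$ and $\log n = O(\log W)$ (as each layer contributes at least one parameter and $n$ is polynomial in the per-layer parameter count).

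The main obstacle is to confirm that the extra polynomial degree $O(\log n)$ per layer introduced by the butterfly factorization, relative to the dense case where each layer is of degree $1$ in its parameters, does not degrade the bound beyond the $O(LW \log W)$ form. Since the Warren--Milnor bound depends only logarithmically on the polynomial degree, the additional factor $\log n = O(\log W)$ inside the count contributes only an additive $\log \log W$ term inside the outermost logarithm, which is absorbed into the existing $\log W$. The remainder of the recursion in \cite{bartlett2017nearly}---accumulating sign patterns layer by layer with the appropriate degree schedule---then applies verbatim to give $\VCdim(\sign \mathcal{F}) = O(LW \log W)$.
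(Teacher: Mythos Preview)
Your proposal is correct and rests on the same key observation as the paper: each K-matrix layer, being a product of $O(\log n)$ butterfly factor matrices with bounded width and expansion, has entries that are polynomials of degree $O(\log n)$ in its own parameters, and this degree is polynomially bounded in the layer dimension.

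The only difference is the level at which the Warren--Milnor/Bartlett machinery is invoked. The paper does not redo the layer-by-layer sign-pattern recursion; instead it cites Theorem~3 of \cite{thomas2018learning} as a black box, which already packages the extension of \cite{bartlett2017nearly} to weight matrices whose entries are polynomials of degree at most $c_1 m_\ell^{c_2}$ in the parameters. The paper then simply verifies that the $O(\log m_\ell)$ degree of K-matrix entries satisfies this polynomial-boundedness hypothesis and reads off the $O(LW\log W)$ conclusion. Your route---going directly to \cite{bartlett2017nearly} and explicitly tracking how the extra $O(\log n)$ degree factor is absorbed into the $\log W$---is more self-contained and makes the role of the logarithmic degree transparent, whereas the paper's route is shorter and avoids restating the recursion. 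Substantively they are the same argument.
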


We leverage the result from~\citet{thomas2018learning} for the case where the
entries of the weight matrices interact multiplicatively, but with polynomially
bounded degrees.
This proof is similar to the VC bound for ReLU networks whose weight matrices
are butterfly matrices~\citep{dao2019learning}.

\begin{proof}
  To use Theorem 3 of \citet{thomas2018learning}, we simply need to check that
  the entries of the linear layer, as polynomials of the parameters, has degree
  at most $c_1 m_l^{c_2}$ for some universal constant $c_1, c_2 > 0$, where
  $m_l$ is the size of output of the $l$-th layer.
  If the network weight matrices are K-matrices with bounded width and
  expansion, each weight matrix is a product of at most $c_3 \log m_l$ sparse
  factors, for some universal constant $c_3 > 0$.
  This means that the degree is polynomially bounded, which satisfies the
  condition of the theorem.
  Therefore the VC dimension is bounded to be almost linear in the number of
  parameters:
  \begin{equation*}
    \VCdim(\sign \mathcal{F}) = O(L W \log W).
  \end{equation*}
\end{proof}

\section{Arithmetic Circuit Primer} \label{sec:ac-primer}

We give a quick overview of arithmetic circuits. This is a model of computation that has been studied for numerous computational problems (and is the basic model for {\em algebraic complexity theory}). For our purposes, we will exclusively focus on arithmetic circuits for the matrix-vector multiplication problem. For a more detailed exposition, the reader is referred to the  standard book on this topic~\citep{burgisser2013algebraic}.

\begin{definition}[Arithmetic Circuits]
An arithmetic circuit that computes $\vy=\vA\vx$ (for $\vA\in\F^{m\times n}$) has $n$ {\em input gates} (corresponding to $\vx[0],\dots,\vx[n-1]$) and $m$ {\em output gates} (corresponding to $\vy[0],\dots,\vy[m-1]$). All the internal gates correspond to addition, subtraction, multiplication and division\footnote{Here we assume all the gates have two inputs.} over the underlying field $\F$. The circuit is also allowed to use constants from $\F$ for `free.' The definition of the internal gates can depend on $\vA$ (as well as $\vx$ of course). In other words, one can `bake' the knowledge about $\vA$ into the circuit.

The {\em size} $s$ of a circuit is $n$ plus the number of addition, multiplication, subtraction and division gates used in the circuit. The depth $d$ of a circuit is the minimum number of layers such that all gates in a given layer take as its input gates from previous layers.\footnote{The {\em input layer} corresponding to the input gates does not contriubte to the depth.}
\end{definition}

One drawback of arithmetic circuits (especially for infinite fields e.g. $\F=\R$, which is our preferred choice in this work) is that they assume operations over $\F$ can be performed {\em exactly}. In particular, it ignores precision issues involved with real arithmetic. Nonetheless, this model turns out to be a very useful model in reasoning about the complexity of doing matrix-vector multiplication for any family of matrices.

Perhaps the strongest argument in support of arithmetic circuits is that a large (if not an overwhelming) majority of matrix-vector multiplication algorithm also imply an arithmetic circuit of size comparable to the runtime of the algorithm (and the depth of the circuit roughly correponds to the time taken to compute it by a parallel algorithm). For example consider the obvious algorithm to compute $\vA\vx$ (i.e. for each $i\in [m]$, compute $\vy[i]$ as the sum $\sum_{i=0}^{n-1} \vA[i,j]\vx[j]$). It is easy to see that this algorithm implies an arithmetic circuit of size $O(nm)$ and depth $O(\log{n})$.\footnote{The claim on the depth follow from the fact that each of the sums $\sum_{i=0}^{n-1} \vA[i][j]\vx[j]$ can be computed in parallel. Further, the sum for each $i\in [m]$ can be done in $\log_2{m}$ depth by first computing the partial sums $\vA[i][2j']\vx[2j']+\vA[i][2j'+1]\vx[2j'+1]$ for all $j'\in [n/2]$ in parallel and recursively computing pair-wise sums till we are done.}
\arnote{Does the above need more details?}

One thing to note about the arithmetic circuit above is that all the multiplications involve at least one input that is a constant from $\F$ (recall that we can assume that the entries of $\vA$ are constants that can be used to build the circuit). This leads to the following important sub-class of arithmetic circuits:
\begin{definition}[Linear Arithmetic Circuits]
An arithmetic circuit is called a {\em linear} arithmetic circuit if it only uses addition, subtraction and multiplication. Further, every multiplcation has a fixed constant from $\F$ as at least one of its two inputs. In other words, all gates in the circuit are linear functions of their inputs (i.e. of the form $ax+by$ for fixed constants $a,b\in\F$).
\end{definition}

Intuitively for the matrix-vector multiplication, it makes sense to consider linear arithmetic circuits since the final function we want to compute $\vA\vx$ is indeed a linear function of its inputs. For inifinite fields (e.g. $\F=\R$ or $\F=\C$), it turns out that this is essentially without loss of generality:
\begin{theorem}[\citep{burgisser2013algebraic}]
\label{thm:ac=lac}
Let $\F$ be an infinite field.
Any (general) arithmetic circuit to compute $\vA\vx$ over $\F$ of size $s$ and depth $d$ can be converted into a {\em linear} arithmetic circuit of size $O(s)$ and depth $O(d)$.
\end{theorem}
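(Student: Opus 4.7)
The plan is to prove this by the classical \emph{homogeneous components} technique: since $\vA\vx$ is a degree-$1$ polynomial in $\vx$, we only need to track, at every gate, the degree-$0$ and degree-$1$ parts of the rational function it computes. The degree-$0$ part is a field constant that can be baked into the circuit for free, while the degree-$1$ part is a linear form in $\vx$, so the only gates we actually need are linear ones.

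First I would preprocess the given circuit so that we may cleanly talk about homogeneous components. Each gate $v$ computes some rational function $f_v \in \F(\vx)$. Because $\F$ is infinite and there are only $s$ gates, we can use a Zariski/polynomial-nonvanishing argument to pick a point $\vx_0 \in \F^n$ at which every denominator occurring in the circuit is nonzero. After the affine substitution $\vx \mapsto \vx + \vx_0$, every gate output $f_v$ is a rational function regular at the origin; write $f_v = c_v + \ell_v(\vx) + h_v(\vx)$, where $c_v \in \F$ is a constant, $\ell_v$ is a linear form, and $h_v$ collects terms of degree $\ge 2$. Crucially, $c_v$ is determined entirely by the circuit and $\vx_0$, so it is a fixed element of $\F$ known at compile time.

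Next I would build the linear circuit $C'$ by walking through $C$ and, at each gate $v$, producing at most $O(1)$ linear gates that compute $\ell_v$ from the $\ell$'s of $v$'s inputs, with $c$'s appearing only as bakeable constants. The rules are straightforward expansions of homogeneous components:
\begin{align*}
\ell_{f \pm g} &= \ell_f \pm \ell_g,\\
\ell_{f \cdot g} &= c_f \,\ell_g + c_g \,\ell_f,\\
\ell_{f / g}    &= \tfrac{1}{c_g}\,\ell_f \;-\; \tfrac{c_f}{c_g^2}\,\ell_g.
\end{align*}
Each of these is a linear combination with \emph{constant} coefficients, hence implementable with $O(1)$ linear arithmetic gates (addition/subtraction plus multiplication by a fixed constant). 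Since the shift ensured $c_g \ne 0$ at every division, the third rule is well-defined. Finally, since the original output gates compute $\vA(\vx + \vx_0) = \vA\vx_0 + \vA\vx$, the linear part of each output gate is exactly the desired row of $\vA\vx$, so one more constant-size correction at the outputs yields the required linear arithmetic circuit. The size grows by only a constant factor ($|C'| = O(s)$) and the depth by only a constant factor ($O(d)$), since the linear part of each gate depends only on the linear parts of its immediate inputs.

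The main obstacle is the choice of $\vx_0$: the construction breaks whenever some denominator's constant term vanishes, and in principle one also has to worry about gates whose rational function is identically zero (vacuously satisfying any requirement) or undefined on a measure-zero set. Both issues are handled by the infinite-field hypothesis via the Schwartz–Zippel / generic-point argument: the set of bad $\vx_0$ is a finite union of proper algebraic subsets of $\F^n$, hence avoidable. This is precisely where Theorem~\ref{thm:ac=lac}'s assumption that $\F$ be infinite is used; over finite fields the result can fail without a mild extension. Once this genericity step is justified, the rest is a mechanical gate-by-gate translation with the constant-factor blowups claimed.
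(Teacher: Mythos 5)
The paper does not actually prove this statement---it is quoted from \citet{burgisser2013algebraic} (it is essentially Strassen's elimination of divisions, specialized to degree-one outputs)---so there is no in-paper proof to compare against. Your argument is the standard textbook one, and it is correct: shift by a generic point $\vx_0$, expand each gate's rational function as a power series at the origin, and propagate only the constant and linear parts, which turns every gate into $O(1)$ linear gates and preserves size and depth up to constant factors; your homogeneous-component rules for $\pm$, $\times$, and $/$ are the right ones. Two small points to tighten. First, the genericity requirement is slightly stronger than ``all denominators are nonzero at $\vx_0$'': you need every \emph{divisor gate's value} $g(\vx_0)$ to be nonzero (equivalently, the numerator of each divisor's reduced form must not vanish at $\vx_0$), which is exactly what makes $c_g \neq 0$ in your division rule; this is still a finite union of proper subvarieties provided no gate divides by the identically-zero function, an assumption you should state explicitly since otherwise the circuit does not compute $\vA\vx$ at all. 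Second, no output correction is needed: after the shift the output gate computes $\vA(\vx+\vx_0)$, whose constant part is $\vA\vx_0$ and whose linear part is exactly $\vA\vx$, so simply reading off the linear parts at the outputs finishes the construction. With those clarifications your proof is complete and matches the cited result, including the role of the infinite-field hypothesis in guaranteeing a good $\vx_0$.
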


The above result implies that for asymptotic considerations, linear arithmetic circuits for matrix-vector multiplication are equivalent to general arithmetic circuits.\footnote{This follows from the fact that by definition any linear arithmetic circuit is also an arithmetic circuit; the other direction follows from Theorem~\ref{thm:ac=lac}.}

One important property of linear arithmetic circuits of depth $d$, which we will use in our arguments, is that such a circuit can be equivalently represented as product of $d$ sparse matrices (see the proof of Theorem~\ref{thm:ac-bbs} for the precise derivation\footnote{To the best of our knowledge, this connection was explicitly made by~\citet{desa2018two} though the connection seems to be folklore.}).

As mentioned earlier, a vast majority of efficient matrix vector multiplication algorithms are equivalent to small (both in size and depth) linear arithmetic circuit. For example the FFT can be thought of as an efficient arithmetic circuit to compute the Discrete Fourier Transform (indeed when one converts the linear arithmetic circuit for FFT into a matrix decomposition,\footnote{Using the conversion mentioned in the paragraph above.} then each matrix in the decomposition is a butterfly factor, with each block matrix in each factor being the same). For an illustration of this consider the DFT with $n=4$ as illustrated in Figure~\ref{fig:dft}.

\begin{figure}[H] 
    \dftfigure
    \caption{DFT of order $4$.}
\label{fig:dft}
\end{figure}

Figure~\ref{fig:dft-ac} represent the arithmetic circuit corresponding to FFT with $n=4$.

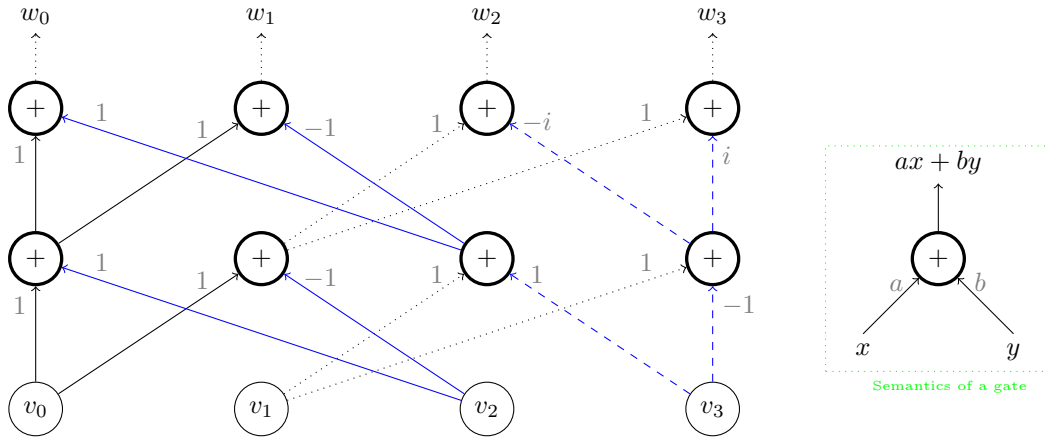
\begin{figure}[H] 
    \begin{center}
    \begin{tikzpicture}

            \tikzset{vertex/.style = {shape=circle,draw,line width=1.25}}
            \tikzset{edge/.style = {->}}

    \node[shape=circle,draw] (v0) at  (0,0) {$v_0$};
    \node[shape=circle,draw] (v1) at  (3,0) {$v_1$};
    \node[shape=circle,draw] (v2) at  (6,0) {$v_2$};
    \node[shape=circle,draw] (v3) at  (9,0) {$v_3$};

    \node[vertex] (g0) at  (0,2) {$+$};
    \node[vertex] (g1) at  (3,2) {$+$};
    \node[vertex] (g2) at  (6,2) {$+$};
    \node[vertex] (g3) at  (9,2) {$+$};

    \node[vertex] (w0) at  (0,4) {$+$};
    \node[vertex] (w1) at  (3,4) {$+$};
    \node[vertex] (w2) at  (6,4) {$+$};
    \node[vertex] (w3) at  (9,4) {$+$};

    \draw[->] (v0) -- (g0) node[pos=0.8, left]{$\textcolor{gray}{1}$};
    \draw[->] (v0) -- (g1) node[pos=0.8, left, above]{$\textcolor{gray}{1}$};

    \draw[->, dotted] (v1) -- (g2) node[pos=0.85, left, above]{$\textcolor{gray}{1}$};
    \draw[->, dotted] (v1) -- (g3) node[pos=0.9, left, above]{$\textcolor{gray}{1}$};

    \draw[->, color=blue] (v2) -- (g0) node[pos=0.9, right, above]{$\textcolor{gray}{1}$};
    \draw[->, color=blue] (v2) -- (g1) node[pos=0.8, right, above]{$\textcolor{gray}{-1}$};

    \draw[->, color=blue, dashed] (v3) -- (g2) node[pos=0.85, right, above]{$\textcolor{gray}{1}$};
    \draw[->, color=blue, dashed] (v3) -- (g3) node[pos=0.8, right]{$\textcolor{gray}{-1}$};

    \draw[->] (g0) -- (w0) node[pos=0.8, left]{$\textcolor{gray}{1}$};
    \draw[->] (g0) -- (w1) node[pos=0.8, left, above]{$\textcolor{gray}{1}$};

    \draw[->, dotted] (g1) -- (w2) node[pos=0.85, left, above]{$\textcolor{gray}{1}$};
    \draw[->, dotted] (g1) -- (w3) node[pos=0.9, left, above]{$\textcolor{gray}{1}$};

    \draw[->, color=blue] (g2) -- (w0) node[pos=0.9, right, above]{$\textcolor{gray}{1}$};
    \draw[->, color=blue] (g2) -- (w1) node[pos=0.8, right, above]{$\textcolor{gray}{-1}$};

    \draw[->, color=blue, dashed] (g3) -- (w2) node[pos=0.85, right, above]{$\textcolor{gray}{-i}$};
    \draw[->, color=blue, dashed] (g3) -- (w3) node[pos=0.8, right]{$\textcolor{gray}{i}$};

    \draw[->,dotted] (w0) -- (0,5) node[pos=1, above] {$w_0$};
    \draw[->,dotted] (w1) -- (3,5) node[pos=1, above] {$w_1$};
    \draw[->,dotted] (w2) -- (6,5) node[pos=1, above] {$w_2$};
    \draw[->,dotted] (w3) -- (9,5) node[pos=1, above] {$w_3$};

    \node[vertex] (g) at  (12,2) {$+$};

    \draw[->] (11,1) -- (g) node[pos=0, below] {$x$} node[pos=.6, left , above]{$\textcolor{gray}{a}$};
    \draw[->] (13,1) -- (g) node[pos=0, below] {$y$} node[pos=.6, right , above]{$\textcolor{gray}{b}$};
    \draw[->] (g) -- (12,3) node[pos=1, above]{${ax+by}$};

    \draw[color=green, dotted] (10.5,0.5) rectangle (13.5,3.5);
    \node[right] (legend) at (11, 0.3) {\textcolor{green}{\tiny Semantics of a gate}};

\end{tikzpicture}
    \end{center}
    \caption{Arithmetic circuit for $4$-DFT from Figure~\ref{fig:dft}.}
\label{fig:dft-ac}
\end{figure}

Finally, Figure~\ref{fig:dft-bbt} represents the arithmetic circuit of Figure~\ref{fig:dft-ac} as a product of a butterfly matrix and (the bit-reversal) permutation. We note that our generic arithmetic circuit to decomposition into $\BBS$ is not as tight as in Figure~\ref{fig:dft-bbt}.

\begin{figure}[H] 
    \dftdecompfigure
    \caption{Decomposition of DFT of Figure~\ref{fig:dft} via the arithmetic circuit of Figure~\ref{fig:dft-ac}.}
\label{fig:dft-bbt}
\end{figure}

One reason for the vast majority of existing efficient matrix vector algorithms leading to (linear) arithmetic circuits is that they generally are divide and conquer algorithms that use polynomial operations such as polynomial multiplication or evaluation (both of which themselves are divide and conquer algorithms that use FFT as a blackbox) or polynomial addition. Each of these pieces are well known to have small (depth and size) linear arithmetic circuits (since FFT has these properties). Finally, the divide and conquer structure of the algorithms leads to the circuit being of low depth. See the book of Pan~\citep{pan-book} for a more elaborate description of this connection.

In fact, the recent work of De Sa et al.~\citep{desa2018two} makes this fact explicit and presents the most general known structure on matrices that imply near-linear size linear arithmetic circuits for the corresponding matrix vector multiplication. Their work combines two separate classes of structures matrices-- orthogonal polynomial transforms~\citep{driscoll,szego} as well as matrices with low displacement rank~\citep{kailath1979displacement,OS00}-- and presents a linear class of linear arithmetic circuits to solve their matrix vector multiplication problem. We note that structured matrices with low displacement rank have been used to replace fully connected layers in some neural network architectures~\citep{sainath2013low,thomas2018learning}.\arnote{Am not sure if we want to add more refs for this connection. Also do we want to add in any connection to OPs?}

\end{document}